\title{Margin-Independent Online Multiclass Learning via Convex Geometry}
\author{Guru Guruganesh  \\ Google Research \and Allen Liu \\ MIT \and Jon Schneider \\ Google Research \and Joshua Wang \\ Google Research}
\date{}
\begin{document}

\maketitle

\begin{abstract}
    We consider the problem of multi-class classification, where a stream of adversarially chosen queries arrive and must be assigned a label online. Unlike traditional bounds which seek to minimize the misclassification rate, we minimize the total distance from each query to the region corresponding to its correct label. When the true labels are determined via a nearest neighbor partition -- i.e. the label of a point is given by which of $k$ centers it is closest to in Euclidean distance -- we show that one can achieve a loss that is independent of the total number of queries. We complement this result by showing that learning general convex sets requires an almost linear loss per query. Our results build off of regret guarantees for the geometric problem of contextual search. In addition, we develop a novel reduction technique from multiclass classification to binary classification which may be of independent interest. 

\end{abstract}

\section{Introduction}
Online multiclass classification is a ubiquitous problem in machine learning. In this problem, a learning algorithm is presented with a stream of incoming query points and is tasked with assigning each query with a label from a fixed set. After choosing a label, the algorithm is told the true label of the query point. The goal of the algorithm is to learn over time how to label the query points as accurately as possible. 

Traditionally, theoretical treatments of this problem are built around the notion of a \textit{margin} $\gamma$. This margin represents the extent to which the input points are well-separated from the boundaries between different labels. For example, the analysis of the classic Perceptron algorithm \citep{novikoff1963convergence} guarantees that it makes at most $O(1/\gamma^2)$ mistakes when performing binary classification, as long as all query points are distance at least $\gamma$ from a hyperplane separating the two classes. More sophisticated analyses and algorithms (relying on e.g. hinge loss) do not necessarily assume the classes are as well separated, but still inherently incorporate a margin $\gamma$ (for example, the hinge loss associated with a point is positive unless it is $\gamma$-separated).

In this paper, we present an alternative to the traditional margin approaches. Our approach weights each mistake by how ambiguous the classification task is for that point, rather than penalizing all mistakes equally. More precisely, consider a partition of the space of all possible query points into $k$ regions $R_i$, where $R_i$ contains all query points whose true label is $i$. In our formulation assigning a query point $q$ a label $i$ incurs a loss of $\ell(q, R_i)$, where $\ell(q, R_i)$ should be thought of as \textit{the distance needed to move $q$ so that it lies in $R_i$} (i.e., for it to be labelled correctly). For example, in the case of a linear classifier, $\ell(q, R_i)$ is zero if $q$ is correctly classified, and the distance to the classifier if $q$ is incorrectly classified. The goal of the algorithm is to minimize the total loss.

This notion of loss not only measures the rate of errors but also the degree of each error; choosing a wildly inaccurate label is punished more than selecting a label that is "almost" correct. This fine-grained approach to looking at errors has occurred in other areas of machine learning research as well. For example, the technique of knowledge distillation is based on training a smaller model on the logits produced by a larger model~\citep{hinton2015distilling}. Hinton et al. explain, ``The relative probabilities of incorrect answers tell us a lot about how the cumbersome model tends to generalize. An image of a BMW, for example, may only have a very small chance of being mistaken for a garbage truck, but that mistake is still many times more probable than mistaking it for a carrot.'' Rather than leaning on the power of a trained model, our framework differentiates between these different incorrect answers based on the geometry of the problem.

\jonnote{try to justify our loss some more?}

\subsection{Our Results}

\subsubsection{Learning Linear Classifiers}

In this case we have a binary classification problem, where the two regions $R_1$ and $R_2$ are separated by an unknown $d$-dimensional hyperplane $\langle v, x\rangle = 0$ (with $||v||_2 = 1$). Our loss function in this case is the function $\ell(q, R_i) = |\langle q, v\rangle| \cdot \mathbf{1}(q \not\in R_i)$. We prove the following result:

\begin{theorem}[Restatement of Corollary \ref{cor:two-point}]\label{thm:intro1}
  There exists an efficient algorithm for learning a linear classifier that incurs a total loss of at most $O(d \log d)$. 
\end{theorem}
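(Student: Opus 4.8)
The plan is to reduce learning a linear classifier under this loss to \emph{contextual search with the symmetric (absolute) loss}, for which the contextual search guarantees we rely on provide an efficient algorithm whose total loss over any number of rounds is $O(d\log d)$. Recall that setting: an adversary fixes a hidden vector $v\in\Domain$; each round a context $q_t\in\Domain$ arrives, the learner announces $p_t\in\R$, incurs loss $|p_t-\dot{q_t}{v}|$, and then observes only the bit $\sign(\dot{q_t}{v}-p_t)$. We may assume all queries satisfy $\|q_t\|_2\le 1$ (otherwise a single query forces unbounded loss); if the contextual search primitive is more convenient with $\|q_t\|_2=1$, feed it $q_t/\|q_t\|_2$ and rescale its prediction by $\|q_t\|_2\le 1$, which only shrinks the relevant losses.

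The reduction itself is short. Identify $R_1$ with the halfspace $\dot{x}{v}\ge 0$ and $R_2$ with $\dot{x}{v}<0$, and run a contextual search algorithm $\mathcal{A}$ with the same hidden vector $v$ (which satisfies $\|v\|_2=1$, hence $v\in\Domain$) and the same contexts $q_t$. On round $t$, query $\mathcal{A}$ on $q_t$ to obtain $p_t$, output label $1$ if $p_t\ge 0$ and label $2$ otherwise, and then observe the true label, which reveals $s_t:=\sign(\dot{q_t}{v})$. We provide $\mathcal{A}$ a feedback bit \emph{exactly on the rounds where we misclassify}. The key observation, verified by a short case check, is that on a misclassified round $\sign(p_t)$ and $\sign(\dot{q_t}{v})$ disagree, so $\dot{q_t}{v}-p_t$ has the same sign as $\dot{q_t}{v}$; hence the honest contextual-search feedback is simply $s_t$, which we have just seen, and moreover our loss on that round is $|\dot{q_t}{v}|\le |p_t|+|\dot{q_t}{v}|=|p_t-\dot{q_t}{v}|$, precisely the loss $\mathcal{A}$ incurs. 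On every other round our loss is $0$.

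Summing, our total loss equals $\sum_{t\,:\,\mathrm{misclassified}}|\dot{q_t}{v}|\le \sum_{t\,:\,\mathrm{misclassified}}|p_t-\dot{q_t}{v}|$, and the right-hand side is exactly the total loss of $\mathcal{A}$ on the subsequence of rounds on which it is updated with honest feedback about the fixed vector $v$, hence $O(d\log d)$; efficiency is immediate since the wrapper does $O(1)$ work per round. The conceptual crux is the choice to predict $\sign(p_t)$: it is what makes the desired feedback computable precisely on the rounds that cost us anything, and makes the contextual-search loss dominate our loss there. The one technical point requiring care is the bookkeeping for the rounds we do not feed to $\mathcal{A}$: one must check that querying $\mathcal{A}$ for a prediction without subsequently updating it is harmless, i.e. that the rounds on which we do update $\mathcal{A}$, together with their contexts and bits, constitute a legitimate contextual search instance (with hidden vector $v$ and honest feedback) so that its guarantee applies verbatim, and that omitting a correctly-classified round cannot create extra loss later — which holds because any later misclassification is itself a fed round and therefore already charged against the $O(d\log d)$ bound. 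This uses only that $\mathcal{A}$'s prediction is a function of the current context and the history of rounds on which it has been updated, which is true of the contextual search algorithms in question.
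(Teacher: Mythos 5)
Your proposal is correct and is essentially the same reduction the paper gives in Theorem~\ref{thm:csearch_reduction}: run a contextual search instance, output the sign of its guess, feed it feedback only on misclassified rounds (where the sign disagreement makes the honest bit computable and makes $|p_t - \dot{q_t}{v}| \ge |\dot{q_t}{v}|$), and invoke the $O(d\log d)$ contextual search bound from~\cite{liu2020optimal}. The only cosmetic difference is that the paper phrases the hidden vector as $w = x_1 - x_2$ inside the two-center nearest-neighbor framing, whereas you work directly with the unit normal $v$; both are the same argument.
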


Note that the total loss in Theorem \ref{thm:intro1} is \textit{independent} of the time horizon (number of rounds) $T$. More importantly, note that this is stronger than the naive guarantee implied by the margin bounds for the Perceptron algorithm. Indeed, each mistake at a distance $\gamma$ from the separating hyperplane is assigned loss $O(\gamma)$ in our model. Since the Perceptron algorithm can make up to $O(1/\gamma^2)$ such mistakes, this only implies Perceptron incurs a total loss of at most $O(1/\gamma)$ (which blows up as $\gamma \rightarrow 0$). 

Indeed, our algorithm in Theorem \ref{thm:intro1} is not based off of the Perceptron algorithm or its relatives, but rather off of recently developed algorithms for a problem in online learning known as \textit{contextual search}. In contextual search, a learner is similarly faced with a stream of incoming query points $\query{t}$ and wishes to learn a hidden vector $v$. However, instead of trying to predict the sign of $\inner{v}{q_t}$, in contextual search the goal is to guess the \textit{value} of $\dot{v}{q_t}$. After the learner submits a guess, they are told whether or not their guess was higher or lower than the true value of $\dot{v}{q_t}$ (and pay a loss equal to the distance between their guess and the truth). The best known contextual search algorithms rely on techniques from integral geometry (bounding various intrinsic volumes of the allowable knowledge set), and are inherently different than existing Perceptron/SVM-style algorithms.

While it may seem like contextual search (which must predict the value of $\dot{v}{q_t}$ instead of just the sign) is strictly harder than our online binary classification problem, they are somewhat incomparable (for example, unlike in contextual search, we have no control over what feedback we learn about the hidden vector $v$). Nonetheless, in Theorem \ref{thm:csearch_reduction} we show a general reduction from our binary classification problem to contextual search. This allows us to use recent results of \cite{liu2020optimal} to obtain our $O(d\log d)$ bound in Theorem \ref{thm:intro1}.
\subsubsection{Learning Nearest Neighbor Partitions}

One natural way to split a query space into multiple classes is via a \textit{nearest neighbor partition}. In this setting, each label class $i$ is associated with a ``center'' $\prototype{i} \in \R^d$, and each region $R_i$ consists of the points which are ``nearest'' to $\prototype{i}$. To define ``nearest'', we introduce a similarity metric $\Loss(x, y)$ representing the ``distance'' between points $x$ and $y$ in $\R^d$. The two major classes of similarity metrics we consider are: a) the \textit{inner-product similarity} $\Loss(x, y) = -\langle x, y\rangle$ and b) the \textit{$L^p$ similarity} $\Loss(x, y) = ||x - y||_p$. Given a fixed similarity metric $\Loss$, our loss function in this case is the function $\ell(q, R_i) = \Loss(q, x_i) - \min_{i^*}\Loss(q, x_{i^*})$; in other words, the difference between the similarity between $q$ and $x_i$ with the similarity between $q$ and its most similar center\footnote{Technically, this is not quite the ``distance'' from $q$ to the correct region. We discuss in Appendix \ref{app:regret-vs-distance} why we choose this definition of loss for the nearest neighbor setting.}.

\begin{theorem}[Restatement of Corollary \ref{cor:k-point}]\label{thm:intro2}
\sloppy{For inner-product similarity, there exists an efficient randomized algorithm for learning a nearest neighbors partition that incurs a total expected loss of at most $O(k^2d\log d)$.}
\end{theorem}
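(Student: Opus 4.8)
The plan is to reduce the $k$-center problem to $\binom{k}{2}$ instances of the binary linear-classification problem solved in Theorem~\ref{thm:intro1}, one per pair of centers, and then to argue that the $O(d\log d)$ guarantee for each instance composes without losing more than a constant factor. Writing $i^\star(q) = \arg\max_i \langle q, x_i\rangle$ for the true label, the loss of predicting $\hat i$ on $q$ is exactly $\langle q, x_{i^\star}\rangle - \langle q, x_{\hat i}\rangle$, and the boundary between $R_i$ and $R_j$ is the hyperplane $\langle q, x_i - x_j\rangle = 0$ with unit normal $v_{ij} := (x_i - x_j)/\|x_i-x_j\|_2$. As in the binary case we may assume queries and centers lie in the unit ball, so $\|x_i - x_j\|_2 \le 2$. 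For each pair $\{i,j\}$ we run a copy $\SubAlg_{ij}$ of the algorithm of Theorem~\ref{thm:intro1} for the hidden classifier $v_{ij}$; summing its $O(d\log d)$ bound over the $\binom{k}{2}$ pairs, and absorbing the factor $\|x_i-x_j\|_2 \le 2$ that converts a $v_{ij}$-loss into an inner-product gap, already produces the target $O(k^2 d\log d)$, provided the composition works.

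For the composition, on round $t$ ask each $\SubAlg_{ij}$ for its prediction of $\sign\langle q_t, x_i - x_j\rangle$; this yields a tournament $G_t$ on $[k]$ (put $i \succ j$ when $\SubAlg_{ij}$ votes $\langle q_t, x_i\rangle > \langle q_t, x_j\rangle$). If $G_t$ has a Condorcet winner $c$, predict $\hat i_t = c$. Once $i^\star_t$ is revealed we know $\langle q_t, x_{i^\star_t}\rangle \ge \langle q_t, x_j\rangle$ for every $j$, which is precisely the side-information needed by the $k-1$ sub-algorithms $\SubAlg_{i^\star_t, j}$; we feed it to them and let them update. The crucial bookkeeping observation is that $\SubAlg_{ij}$ is ever given feedback only on the subsequence of rounds with $i^\star_t \in \{i,j\}$, and on that subsequence it faces a legitimate, consistently-labelled binary-classification instance, so Theorem~\ref{thm:intro1} bounds the loss it accumulates by $O(d\log d)$ unconditionally. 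The charging argument is then: on a round where we err, $i^\star := i^\star_t \ne \hat i_t$, and because $i^\star$ beats every center, any $\SubAlg_{i^\star, j}$ that voted $j \succ i^\star$ is demonstrably wrong and hence incurs round-loss $\ge \langle q_t, x_{i^\star} - x_j\rangle / 2$. When $\hat i_t$ was the Condorcet winner of $G_t$ it in particular beat $i^\star$ in $G_t$, so $\SubAlg_{i^\star, \hat i_t}$ is one such wrong sub-algorithm, and we charge our entire incurred loss $\langle q_t, x_{i^\star} - x_{\hat i_t}\rangle$ to twice its round-loss. Since each sub-algorithm's round-losses sum to $O(d\log d)$, summing over all rounds gives total loss $O(k^2 d \log d)$.

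The main obstacle — and the reason the reduction is nontrivial — is the case where $G_t$ has no Condorcet winner (the sub-algorithms' votes are intransitive, which the truth never is): we must still predict, and for whatever label $\hat i_t$ we pick, if $i^\star$ \emph{correctly} beats $\hat i_t$ in $G_t$ then our positive loss $\langle q_t, x_{i^\star}-x_{\hat i_t}\rangle$ is attached to no mistaken sub-algorithm. The plan is to make $\hat i_t$ randomized, drawn from the optimal strategy of the zero-sum ``tournament game'' on $G_t$, which has value $\tfrac12$; this guarantees $\Pr[\hat i_t = i^\star \text{ or } \hat i_t \succ i^\star \text{ in } G_t] \ge \tfrac12$, so at least half of our expected loss is chargeable as above, and it is exactly the bound on the residual half — using that an intransitive round necessarily forces at least one fed-back sub-algorithm to err, and therefore to make progress against its own $O(d\log d)$ budget — that constitutes the delicate part of the proof. (An alternative that sidesteps intransitivity is to maintain a single joint knowledge set over $(x_1,\dots,x_k)$ and read all pairwise comparisons off one representative hypothesis, so that a Condorcet winner always exists, and run the contextual-search volume argument in the joint space; this is cleaner but produces the bound through a $\poly(k,d)$ volume estimate rather than the transparent $\binom k2$-fold composition above, so we prefer the reduction.)
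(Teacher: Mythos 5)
Your reduction to $\binom{k}{2}$ binary instances is the right starting point, and your bookkeeping observation---that each $\SubAlg_{ij}$ sees a consistent binary instance on the subsequence of rounds where it is fed back, and hence accrues at most $O(d\log d)$ total loss---is correct. But the composition step genuinely departs from the paper's, and as written it has a gap. The paper (Theorem~\ref{thm:two-to-many}) never looks at the sub-algorithms' \emph{votes}; instead it reads off, for each pair, a scalar loss upper bound $L_{ij}$ and a potential drop $D_{ij}$ from $\SubAlg_{ij}$'s current knowledge set, forms $M = D - \tfrac12 L$, and invokes Lemma~\ref{lem:distribution-exists} (on $M + M^T \ge 0$, which holds since one of the two update directions forces the potential of $\SubAlg_{ij}$ to drop by at least $L_{ij}$) to obtain a sampling distribution $v$ with $Mv \ge 0$. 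This makes expected loss at most twice the expected potential drop on \emph{every} round, with no case split. Your tournament construction, by contrast, discards the magnitudes and keeps only the signs, and that is where it breaks.

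The tournament-game value bound only controls $\Pr[\hat i_t = i^\star \text{ or } \hat i_t \succ i^\star \text{ in } G_t] \ge \tfrac12$, a probability; it does not control the fraction of \emph{expected loss} that is chargeable, because the per-label losses $\ell_j = \langle q_t, x_{i^\star} - x_j\rangle$ are not comparable across $j$. Concretely, take $k=3$ with true ordering $1 \succ 2 \succ 3$, $\ell_2 = \eps$ tiny and $\ell_3 = \Theta(1)$, and suppose the votes form the $3$-cycle $2 \succ 1$, $1 \succ 3$, $3 \succ 2$. The optimal tournament-game strategy on a $3$-cycle is uniform, so you play $\hat i_t = 3$ with probability $\tfrac13$ and incur loss $\Theta(1)$; but $\SubAlg_{1,3}$ voted correctly, the only fed-back erring sub-algorithm is $\SubAlg_{1,2}$ with round-loss $O(\eps)$, and the other erring comparator $\SubAlg_{2,3}$ receives no feedback since $i^\star = 1 \notin \{2,3\}$. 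So the expected loss of the round is $\Theta(1)$ while the only progress you can charge is $O(\eps)$, and sending $\eps \to 0$ lets the adversary repeat this essentially forever without exhausting any sub-algorithm's $O(d\log d)$ budget. In particular, ``at least half of our expected loss is chargeable'' is false, and the sentence claiming that an intransitive round ``necessarily forces at least one fed-back sub-algorithm to err, and therefore to make progress'' conflates \emph{some} progress with \emph{enough} progress. To repair this you essentially have to bring the magnitudes $L_{ij}, D_{ij}$ back into the choice of distribution, which is precisely what the paper's LP over $M = D - \tfrac12 L$ does.
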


Like some other algorithms for multiclass classification, our algorithm in Theorem \ref{thm:intro2} works by running one instance of our binary classification algorithm (Theorem \ref{thm:intro1}) for each pair of labels. Unlike some other ``all-vs-all'' methods in the multiclass classification literature, however, it does not suffice to run a simple majority vote over these instances. Instead, to prove Theorem \ref{thm:intro2}, we solve a linear program to construct a probability distribution over centers that guarantees that our expected loss is bounded by an expected decrease in a total potential of all our $\binom{k}{2}$ sub-algorithms. 

Our results for $L^p$ similarity are as follows:

\begin{theorem}[Restatement of Theorems \ref{thm:special-pnorm-kpoint} and \ref{thm:general-p-norm-k-point}]\label{thm:intro3}
For $L^p$ similarity, when $p$ is a positive even integer, there exists an efficient randomized algorithm for learning a nearest neighbors partition that incurs a total expected loss of at most $O(k^2\poly(p, d))$. 

For an arbitrary $p \geq 2$, if all $k$ centers are $\Delta$-separated in $L^p$ distance, there exists an efficient randomized algorithm that incurs a total expected loss of
$$\frac{k^2 \poly(p, d) }{\Delta} \cdot \left(\frac{1}{p-2}\right)^2 \,.$$
\end{theorem}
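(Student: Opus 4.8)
The plan is to mirror the two-layer strategy already used for inner-product similarity in Theorem~\ref{thm:intro2}: maintain one copy of the binary classification algorithm of Theorem~\ref{thm:intro1} for every one of the $\binom{k}{2}$ pairs of centers $(\prototype i,\prototype j)$, and at each round combine their predictions by solving a linear program that outputs a distribution over labels whose expected loss is dominated by the expected drop in the sum of the $\binom{k}{2}$ sub-algorithm potentials. All of the $k$-point combinatorics — in particular the LP duality argument replacing a naive majority vote — should carry over verbatim from the proof of Theorem~\ref{thm:intro2}. The only genuinely new ingredient is that the decision boundary between $\prototype i$ and $\prototype j$ is now the $L^p$ bisector $\{q:\pnorm{q-\prototype i}{p}=\pnorm{q-\prototype j}{p}\}$, which is not a hyperplane, so we cannot apply the binary learner directly in $\R^d$.

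For an even integer $p$ I would linearize the bisector by lifting. Writing $\pnorm{q-\prototype i}{p}^p=\sum_{s=1}^d (q_s-(\prototype i)_s)^p$ and expanding, the degree-$p$ term $q_s^p$ in each coordinate cancels in the difference $\pnorm{q-\prototype i}{p}^p-\pnorm{q-\prototype j}{p}^p$, so this difference is $\langle w_{ij},\phi(q)\rangle$ for the coordinatewise power map $\phi(q)=(q_s^a)_{s\in[d],\,1\le a\le p-1}$ (together with a constant), of dimension $D=O(dp)$, and for a vector $w_{ij}$ determined by $\prototype i,\prototype j$. Running the binary learner of Corollary~\ref{cor:two-point} on the lifted instance then bounds $\sum_{\text{mistakes}}|\langle \widehat w_{ij},\phi(q)\rangle|$ (with $\widehat w_{ij}=w_{ij}/\norm{w_{ij}}_2$) by $O(D\log D)=\poly(p,d)$, provided one first normalizes the domain so that $\phi(\Domain)$ and the coefficients of $w_{ij}$ stay polynomially bounded in $p$ and $d$; since the lift is exact, no separation hypothesis is needed here.

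For arbitrary $p\ge 2$ the map $t\mapsto|t|^p$ is not a polynomial, so I would instead replace it by a polynomial approximant $\wt P$ of degree $O(p)$ on the interval of attainable coordinate differences (e.g. a truncated binomial expansion of $(1+x)^p$ or a Chebyshev approximation) and lift using $\wt P$. The per-coordinate approximation error now enters the lifted feedback additively, so for the sub-algorithm's comparisons to still reflect the true ordering on the rounds where it errs, the ``signal'' $\bigl|\pnorm{q-\prototype i}{p}^p-\pnorm{q-\prototype j}{p}^p\bigr|$ must dominate this error there — and this is exactly what $\Delta$-separation provides, since $\max(\pnorm{q-\prototype i}{p},\pnorm{q-\prototype j}{p})\ge\tfrac12\pnorm{\prototype i-\prototype j}{p}\ge\Delta/2$ by the triangle inequality. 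The quality of the best degree-$O(p)$ polynomial approximation of $|t|^p$ near its singularity at the origin is what produces the $1/(p-2)$ factor, which I expect to appear squared in the end because the error is charged twice in the potential accounting — once when lifting the query and once when translating the lifted margin back into a loss.

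The last step, shared by both cases, is to convert the quantity the binary learner controls, $|\langle\widehat w_{ij},\phi(q)\rangle|$ proportional to $\bigl|\pnorm{q-\prototype i}{p}^p-\pnorm{q-\prototype j}{p}^p\bigr|$, into the actual loss $\ell(q,\region i)=\pnorm{q-\prototype i}{p}-\min_{i^*}\pnorm{q-\prototype{i^*}}{p}$, and then to feed this into the LP/potential bookkeeping of Theorem~\ref{thm:intro2}. Using $a^p-b^p=(a-b)\sum_{r=0}^{p-1}a^r b^{p-1-r}\ge (a-b)\max(a,b)^{p-1}$ together with $\max(a,b)\ge\Delta/2$ (and the domain radius as an upper bound on $a,b$), a potential drop of the sub-algorithm translates into a bound on $\ell(q,\region i)$, with $\Delta$ governing the conversion constant. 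I expect this translation to be the main obstacle: one must make the $L^p$-versus-$L^p$-to-the-$p$th conversion uniform over all queries, reconcile it with the LP that distributes loss across the $\binom{k}{2}$ sub-algorithms so that expected loss really is bounded by expected total potential drop with the right constants, and — in the general-$p$ case — choose the approximant and the approximation interval so that the error is controlled by $\Delta$ and degrades only polynomially (and like $1/(p-2)$) rather than exponentially in $p$.
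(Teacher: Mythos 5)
Your lifting idea for even integer $p$ matches the paper's kernel construction in spirit, but your final conversion step would not give the claimed $\Delta$-free bound. You propose running the binary learner with $\alpha = 1$ on the lifted instance to control $\bigl|\pnorm{q-\prototype i}{p}^p - \pnorm{q-\prototype j}{p}^p\bigr|$, then converting via $a^p - b^p \ge (a-b)\max(a,b)^{p-1}$, which requires a \emph{lower} bound on $\max(a,b)$ and therefore reintroduces $\Delta$-separation — contradicting your own (correct) remark that even $p$ should not need a separation hypothesis. The paper sidesteps this entirely by invoking the $\alpha$-power version of the binary learner (Corollary~\ref{cor:two-point} with $\alpha = 1/p$, which is why Theorem~\ref{thm:contextual_search} is stated for arbitrary $\alpha$): the elementary inequality $|a - b|^p \le |a^p - b^p|$ gives $\ell(q,R_i) \le \bigl(\pnorm{q-\prototype i}{p}^p - \pnorm{q-\prototype j}{p}^p\bigr)^{1/p}$, so the $p$-th-power lifted loss under $\alpha = 1/p$ directly dominates the true loss with no denominator at all. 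You never invoke the $\alpha$ knob, and without it your argument for even $p$ does not close.

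Your plan for general $p \ge 2$ misses the central structural idea. You propose a single polynomial approximant of $|t|^p$ of degree $O(p)$ (Chebyshev or truncated binomial) and attribute the $1/(p-2)^2$ factor to its approximation quality near the origin. In fact the paper maintains an \emph{infinite multiscale family} of kernels $G_i, H_i$, where $H_i$ takes a first-$\lceil p\rceil$-term Taylor expansion of $|\cdot|^p$ at the nearest point of a $\delta_i$-grid ($\delta_i \sim 2^{-i}$), giving an approximation error $O(d(p\delta_i)^p)$ at the cost of an image dimension $\Theta(p' d / \delta_i)$. The algorithm keeps a separate knowledge set $S_i$ per scale, and each round \emph{adaptively chooses} the scale $i_t$ by comparing the width of $S_i$ in the query direction to a threshold tied to the approximation error at that scale, so that whenever a mistake is made the corresponding $S_{i_t}$ shrinks by a constant volume factor. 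The $\Delta$-separation is then used to bound the largest scale $i_0$ that can actually cause errors (via the conversion you identify, $a - b \le (a^p - b^p)/\max(a,b)^{p-1}$ with $\max(a,b) \ge \Delta/2$), and the $\left(\frac{1}{p-2}\right)^2$ factor arises from summing the per-scale contributions $\sum_i \frac{1+i}{2^{(p-2)i}}$ across all scales — not from the error of any one approximant. Finally, to feed the result into the $k$-center reduction you correctly cite (Theorem~\ref{thm:two-to-many}), the paper must exhibit an explicit potential function over the whole family $\{S_i\}_i$ making the multiscale algorithm potential-based in the sense of Definition~\ref{defn:potential-algorithm}; with only a single approximant and a single knowledge set, your plan has no analogue of this potential and the LP bookkeeping would not verify.
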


When $p$ is an even integer, it is possible to construct a polynomial kernel that exactly reduces this problem to the problem for inner-product similarity (albeit in the higher dimensional space $\R^{d(p+1)}$). When $p$ is not an even integer, it is no longer possible to perform an exact reduction to the inner-product similarity problem. Instead, in parallel we perform a series of approximate reductions to inner-product similarity at multiple different scales (the full algorithm can be found in Appendix \ref{app:pnorms}). Surprisingly, this technique only gives $T$-independent bounds on the loss when $p \geq 2$. It is an interesting open problem to develop algorithms for the case $1 \leq p < 2$ (and more generally, for arbitrary norms). 

\subsubsection{Learning General Convex Regions}

Finally, we consider the case where the regions $R_i$ are not defined in relation to hidden centers, but where they can be any convex subsets of $\R^d$. Interestingly, in this case it is impossible to achieve total loss independent of $T$. Indeed, we prove a lower bound of $\Omega(T^{1-O(1/d)})$ for the total loss of any algorithm for this setting, even when $k=2$. 

\begin{restatable}{theorem}{generalregionstheorem}
\label{thm:general-regions}
  Any algorithm for learning general convex regions incurs a total loss of at least \\ $\Omega\left(T^{(d-4) / (d-2)}\right)$, even when there are only two regions.
\end{restatable}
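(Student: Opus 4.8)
The plan is a packing / version-space argument: I will construct a family of instances adaptively, so that a single adversary defeats any (possibly randomized) learner and forces $\Omega(T^{(d-4)/(d-2)})$ total loss. Take the query space to be $\mathbb{R}^d$, the convex region $R_1$ to be an unknown convex body $K$ with $(1-\delta)B\subseteq K\subseteq B$ for the unit ball $B$, and $R_2=\mathbb{R}^d\setminus K$; then $\ell(q,R_1)=\mathrm{dist}(q,K)$ and $\ell(q,R_2)=\mathrm{dist}(q,\mathbb{R}^d\setminus K)$. Fix a maximal set of unit vectors $u_1,\dots,u_N$ that are pairwise $\theta$-separated, so $N=\Theta(\theta^{-(d-1)})$ by the standard packing bound for $S^{d-1}$, and let the candidate bodies be
\[
  K \;=\; B \cap \bigcap_{i=1}^N \{\,x:\langle x,u_i\rangle\le \rho_i\,\},\qquad \rho_i\in[\,1-\delta,\;1-\delta+\eta\,].
\]
The one geometric fact I need is an \emph{independence lemma}: if $\eta\le\delta\lesssim\theta^2$, then the spherical caps $C_i=\{x\in B:\langle x,u_i\rangle>1-\delta\}$ are pairwise disjoint (each $C_i$ lies in a ball of radius $\sqrt{2\delta}$ about $u_i$), so each coordinate $\rho_i$ may be chosen freely in its interval without affecting the shape of $K$ inside any other cap or outside all caps; in particular the boundary of $K$ in direction $u_i$ sits at radius exactly $\rho_i$, and nothing revealed near $C_i$ constrains $\rho_j$ for $j\ne i$.

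The adversary then attacks one dent at a time. To attack dent $i$ it issues the single query $q=r\,u_i$ with $r=1-\delta+\eta/2$, the midpoint of dent $i$'s interval. Using $\eta\le\delta\lesssim\theta^2$ (so that near $q$ only the $i$-th cut, not the ball boundary or the other cuts, is relevant) one checks: if the learner labels $q$ into $R_1$, the adversary sets $\rho_i=1-\delta$, making $q\notin K$ with $\mathrm{dist}(q,K)=\eta/2$; and if the learner labels $q$ into $R_2$, the adversary sets $\rho_i=1-\delta+\eta$, making $q$ interior to $K$ with $\mathrm{dist}(q,\mathbb{R}^d\setminus K)=\eta/2$. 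Either way the learner pays $\Theta(\eta)$ (for a randomized learner, commit $\rho_i$ against the more likely label to force $\Omega(\eta)$ in expectation), and by the independence lemma this commitment is consistent with every future choice. Hence over any $\min(T,N)$ such queries the total loss is $\Omega\big(\min(T,N)\cdot\eta\big)$.

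It remains to optimize. With $\eta=\Theta(\theta^2)$ and $N=\Theta(\theta^{-(d-1)})$, choosing $\theta\asymp T^{-1/(d-1)}$ so that $N\asymp T$ gives total loss $\Omega\big(T\cdot T^{-2/(d-1)}\big)=\Omega(T^{(d-3)/(d-1)})$, which already implies the claimed $\Omega(T^{(d-4)/(d-2)})$ bound; a cruder packing (e.g. restricting the $u_i$ to a subsphere to make the independence argument easier, or leaving extra room between the caps) still yields an exponent of the stated form. The one place real work is needed is the independence lemma together with the per-query loss computation: verifying quantitatively that $\eta\asymp\theta^2$ keeps the caps disjoint, that $K$ is a full-dimensional convex body, that in each of the two cases the nearest point of $K$ (resp.\ of its complement) really is at distance $\Theta(\eta)$ rather than smaller (this uses that the supporting hyperplane at the relevant boundary point has bounded slope relative to $u_i$), and that fixing one $\rho_i$ leaves all other coordinates unconstrained. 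Everything else — the packing bound, the loss accounting, and the final parameter choice — is routine, so I expect the geometry/independence lemma to be the main obstacle.
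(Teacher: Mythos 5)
Your construction takes a genuinely different route from the paper, but it has a gap that is fatal in the paper's model: you set $R_2 = \mathbb{R}^d \setminus K$, which is not a convex set. The paper's setup for learning general convex regions explicitly requires \emph{both} regions $R_1, \dots, R_k$ to be disjoint convex subsets of the domain $B_d$ and requires every query $q_t$ to lie in $\bigcup_i R_i$. Your ``dented ball'' adversary relies crucially on being able to set $\rho_i = 1-\delta$ so that $q = (1-\delta+\eta/2)u_i$ lies just \emph{outside} $K$ with distance $\eta/2$ to $K$; but then $q$ must lie in $R_2$, and the only sets you could use for $R_2$ that contain all such potential query points (spread over essentially the whole sphere of directions) are annulus-like and hence non-convex. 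I don't see a local fix: any convex $R_2$ disjoint from $K$ sits on one side of a supporting hyperplane of $K$, so it cannot simultaneously be within $O(\eta)$ of the boundary of $K$ in $\Theta(\theta^{-(d-1)})$ well-spread directions.

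This also explains why your exponent $\tfrac{d-3}{d-1}$ comes out strictly better than the paper's $\tfrac{d-4}{d-2}$: you are packing caps on the full $(d-1)$-dimensional sphere of dent directions, whereas the paper is forced to work one dimension lower. The paper's construction keeps both regions convex by a different mechanism: all query points are placed on the unit sphere, the two label classes are kept strictly conically separated (via Klee's separation theorem for $0$-cones), and each new query is drawn from the intersection of the current separating hyperplane with the sphere --- a $(d-1)$-sphere, hence the $(1/\epsilon)^{d-2}$ packing count --- so that whichever label the learner predicts, the point is at distance $\Omega(\epsilon^2)$ from the convex hull of the \emph{other} label class. Both $R_1$ and $R_2$ are then bona fide bounded convex sets (convex hulls of the labeled points), and every query point belongs to its own region. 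If you want to pursue your ``independent dents'' idea, you would need to engineer two interlocking convex bodies rather than a body and its complement; as it stands, the argument does not produce a valid instance of the problem, and the rest of the accounting --- which is otherwise reasonable --- cannot rescue that.
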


\subsection{Mistake Bounds and Halving Algorithms}

As mentioned in the introduction, classical algorithms for multi-class classification generally try to minimize the \textit{mistake bound} (the total number of classification errors the algorithm makes) under some margin guarantee $\gamma$. Even though our algorithms are designed to minimize the absolute loss and not a margin-dependent mistake bound, it is natural to ask whether our algorithms come with any natural mistake bound guarantees.

We show that our algorithms do in fact possess strong mistake bounds, matching the dependence on the margin $\gamma$ of the best known halving algorithms. In particular, we show the following.

\begin{theorem}[Restatement of Theorem \ref{thm:mistake1}]\label{thm:intro3a}
  If all query points $q_t$ are at least distance $\gamma$ away from the separating hyperplane, our algorithm for learning linear classifiers (Theorem \ref{thm:intro1}) makes at most $O(d\log (d/\gamma))$ mistakes.
\end{theorem}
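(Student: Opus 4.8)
The plan is to go through the reduction to contextual search (Theorem~\ref{thm:csearch_reduction}) and run a version-space, or ``halving,'' argument on the knowledge set maintained by the underlying contextual search algorithm of \cite{liu2020optimal}. The first point to note is that the loss bound alone does not suffice: under the margin hypothesis every misclassified query $q_t$ has $|\langle v^*, q_t\rangle| \ge \gamma$ and so contributes at least $\gamma$ to the total loss, meaning the $O(d\log d)$ bound of Theorem~\ref{thm:intro1} only gives $O(d\log d/\gamma)$ mistakes, which diverges as $\gamma\to 0$. To obtain the exponentially stronger $O(d\log(d/\gamma))$, the idea is to track the volume of the knowledge set and show it behaves exactly like the version space of a halving algorithm: it contracts by a constant factor on every mistake, yet it is never allowed to get too small.

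Concretely, normalize so that $\|q_t\|_2 \le 1$, let $v^*$ be the true normal vector, and let $K_t$ be the set of normals consistent with all labels revealed through round $t$, maintained by the algorithm, so that $v^*\in K_t$ for all $t$ and $K_0$ is a fixed convex body (e.g.\ a box of side $O(1)$) of volume $2^{O(d)}$. First, a lower bound on $\Vol(K_t)$: if $\|v-v^*\|_2\le \gamma/2$ then $|\langle v-v^*,q_t\rangle|\le \gamma/2$, so $\langle v, q_t\rangle$ has the same sign as $\langle v^*,q_t\rangle$ whenever $|\langle v^*,q_t\rangle|\ge\gamma$; hence the whole ball $B(v^*,\gamma/2)$ satisfies every revealed sign constraint and $B(v^*,\gamma/2)\cap K_0 \subseteq K_t$, giving $\Vol(K_t) \ge \Omega(\gamma)^d\cdot\Vol(B_d) = 2^{-O(d\log d)-O(d\log(1/\gamma))}$ for every $t$. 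Second, a geometric decay at each mistake: a misclassification at round $t$ means the predicted label is the sign opposite to $\sign\langle v^*,q_t\rangle$, after which the algorithm sets $K_t = K_{t-1}\cap\{v:\sign\langle v,q_t\rangle=\sign\langle v^*,q_t\rangle\}$, discarding the part of $K_{t-1}$ lying in the halfspace the predictor had bet on; since the predictor chose that (wrong) sign, this discarded part is a constant fraction of $K_{t-1}$ (the halving property of a centroid-/width-based predictor, via a Gr\"{u}nbaum-type inequality), so $\Vol(K_t)\le(1-\Omega(1))\Vol(K_{t-1})$. Combining the two, after $M$ mistakes $2^{-O(d\log(d/\gamma))}\le\Vol(K_T)\le(1-\Omega(1))^M\,\Vol(K_0) = (1-\Omega(1))^M\,2^{O(d)}$, and taking logarithms yields $M = O(d\log(d/\gamma))$.

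The main obstacle is the second ingredient: verifying that the specific prediction rule of the \cite{liu2020optimal} algorithm — which is tuned to minimize cumulative loss rather than to bisect volume — really discards a constant fraction of the knowledge set on every misclassification. If it does not do so literally, the fix is to replace ``volume'' throughout by the internal potential this algorithm already controls (a suitable mixture of intrinsic volumes of $K_t$): show that this potential drops by $\Omega(1)$ whenever a mistake occurs, that it is bounded below by $-O(d\log(d/\gamma))$ on any knowledge set containing $B(v^*,\gamma/2)$, and then rerun the same counting argument. A minor point to dispatch is that $v^*$ lies on the unit sphere, so $B(v^*,\gamma/2)$ may poke slightly outside $B_d$; enlarging $K_0$ by an $O(1)$ factor keeps the ball inside and changes the log-volume estimate only by an additive $O(d)$, which is absorbed into $O(d\log(d/\gamma))$.
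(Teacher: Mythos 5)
Your general plan---couple a margin-induced geometric lower bound with a per-mistake contraction of the knowledge set---is the right shape, and you correctly note that the naive $O(d\log d/\gamma)$ corollary of the loss bound is not enough. The gap is in your second ingredient, and neither the raw-volume version nor the potential-based fix you sketch actually closes it. The contextual search algorithm of \cite{liu2020optimal} bisects the \emph{expanded} body $K_t + z_{i_t}\Ball_d$, not $K_t$ itself; since $K_t$ can become flat (even zero-volume) while the algorithm keeps making mistakes, $\Vol(K_t)$ does not contract by a constant fraction per mistake, and the Gr\"{u}nbaum/centroid argument has no purchase. Switching to the algorithm's own potential $\Phi(K_t) = \sum_i 2^{-\alpha i}\log\bigl(\Vol(K_t + z_i\Ball_d)/\Vol(z_i\Ball_d)\bigr)$ does not rescue the counting either: under a $\gamma$-margin the chosen index on a mistake round can be as large as $\lceil\log(1/\gamma)\rceil$, so $\Phi$ is only guaranteed to drop by $\Omega(2^{-\alpha i_t}) = \Omega(\gamma^{\alpha})$, not $\Omega(1)$, and since $\Phi(K_0) = O(d\log d)$ this recovers only the $O(d\log d/\gamma)$ bound you started from. (Also, $\Phi \ge 0$ always, so the ``$-O(d\log(d/\gamma))$ lower bound'' you posit applies to $\log\Vol(K_t)$, which, as just noted, does not contract.)

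What the paper's proof actually does is isolate a \emph{single} fixed scale: set $i^* = \lceil\log(1/\gamma)\rceil$ and track only $\Psi_t := \log\bigl(\Vol(K_t + z_{i^*}\Ball_d)/\Vol(z_{i^*}\Ball_d)\bigr)$. The margin condition is used not to place a ball inside $K_t$ but to argue that a misclassification is possible only when $\width(K_t;q_t) \ge \Omega(\gamma)$; on such rounds Lemma~\ref{lem:const_split} gives $\Vol(K_{t+1}+z_{i^*}\Ball_d) \le \tfrac34\Vol(K_t+z_{i^*}\Ball_d)$, i.e.\ $\Psi$ drops by $\Omega(1)$ per mistake. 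Since $\Psi_0 = d\log(1+1/z_{i^*}) = O(d\log(d/\gamma))$ and $\Psi_t \ge 0$ trivially (the Minkowski sum always contains a $z_{i^*}$-ball, so no consistency argument about $\Ball_d(v^*,\gamma/2)$ is even needed), the mistake bound follows. The missing idea in your proposal is precisely this choice of a single expanded scale whose log-volume both starts at $O(d\log(d/\gamma))$ and drops by a full constant per mistake; that is what replaces the $1/\gamma$ from the loss argument with a $\log(1/\gamma)$.
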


\begin{theorem}[Restatement of Theorem \ref{thm:mistake2}]\label{thm:intro4}
  If all query points $q_t$ are at least distance $\gamma$ away from the boundary between any two regions, our algorithm for learning nearest neighbor partitions (Theorem \ref{thm:intro2}) makes at most $O(k^2d\log(d/\gamma))$ mistakes.
\end{theorem}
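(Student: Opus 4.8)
The plan is to mirror the analysis of Theorem~\ref{thm:intro2} on a per-round basis, but track a mistake indicator rather than the loss, and to feed off of the binary mistake bound of Theorem~\ref{thm:intro3a} applied to each of the $\binom{k}{2}$ pairwise sub-algorithms. Recall the structure of the algorithm from Theorem~\ref{thm:intro2}: it maintains one copy $A_{ij}$ of the linear-classifier algorithm of Theorem~\ref{thm:intro1} for each pair $i<j$, where $A_{ij}$ is responsible for predicting, on each query $q$, which of $x_i,x_j$ is more similar to $q$ (for inner-product similarity this is the sign of $\langle q, x_i-x_j\rangle$, so $A_{ij}$ is learning the homogeneous classifier with separating hyperplane $\langle x, x_i-x_j\rangle=0$); on a round whose true label is $i^*$, exactly the $k-1$ sub-algorithms $A_{i^*l}$ receive feedback, since only the comparisons against the true center are revealed, and the output label is sampled from a distribution $p_t$ over centers produced by the linear program.

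The first step is a margin-transfer lemma: if $q\in R_{i^*}$ lies at Euclidean distance at least $\gamma$ from $\partial R_{i^*}$, then for every $l$ the query $q$ lies at distance at least $\gamma$ from the bisector hyperplane $\{x:\langle x, x_{i^*}-x_l\rangle=0\}$. This is immediate from convexity: the open ball of radius $\gamma$ around $q$ is contained in $R_{i^*}$, which in turn is contained in the halfspace $\{x:\langle x, x_{i^*}-x_l\rangle\ge 0\}$ whose bounding hyperplane is exactly that bisector, so that ball cannot cross the bisector. Since a sub-algorithm $A_{ij}$ receives an example only on rounds whose true label is $i$ or $j$, every example ever fed to $A_{ij}$ is $\gamma$-separated from the hyperplane $A_{ij}$ is trying to learn. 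Applying Theorem~\ref{thm:intro3a} to each $A_{ij}$ in isolation, the total number of prediction mistakes made by all sub-algorithms combined, over the whole run, is at most $\binom{k}{2}\cdot O(d\log(d/\gamma)) = O(k^2 d\log(d/\gamma))$.

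It remains to charge each multiclass mistake to a sub-algorithm mistake occurring in the same round. Let $M_t=|\{l\ne i^*: A_{i^*l}\text{ mispredicts on }q_t\}|$ be the number of sub-algorithm mistakes in round $t$; by the previous paragraph $\sum_t M_t = O(k^2 d\log(d/\gamma))$. I claim $\Pr_{j\sim p_t}[j\ne i^*]\le M_t$ for every $t$. If $M_t\ge 1$ this is trivial. If $M_t=0$, then every pairwise comparison involving $i^*$ is predicted correctly, so under the current pairwise estimates $i^*$ beats every other center and is the unique estimated Condorcet winner; since the algorithm outputs the Condorcet winner whenever one exists, it outputs $i^*$ and makes no mistake. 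Summing, $\E[\#\text{mistakes}]=\sum_t \Pr_{j\sim p_t}[j\ne i^*]\le \sum_t M_t = O(k^2 d\log(d/\gamma))$, which is the claimed bound.

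The main obstacle is the last step, and specifically the behavior of the distribution $p_t$ on rounds where the pairwise estimates happen to be mutually consistent. One must verify that the algorithm of Theorem~\ref{thm:intro2} really does put all of its probability mass on the estimated Condorcet winner when one exists; intuitively it should, since in that case the estimated loss of that center is zero and the linear program has no reason to spread probability elsewhere, but if the algorithm as stated does not literally do this, one should check that forcing it to (outputting the Condorcet winner deterministically whenever it exists) does not harm the loss guarantee of Theorem~\ref{thm:intro2}, which it does not, since the output then has zero loss. Two routine points also need care: ties in the pairwise comparisons, handled by a generic-position or consistent tie-breaking convention; and the observation that for a point inside its own region, ``distance $\gamma$ from the boundary between any two regions'' coincides with distance $\gamma$ from the boundary of that region, which is what the margin-transfer lemma consumes.
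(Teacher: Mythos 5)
Your margin-transfer lemma (each query with true label $i^*$ that is $\gamma$-far from $\partial R_{i^*}$ is $\gamma$-far from every bisector $\{x:\inner{x}{x_{i^*}-x_l}=0\}$) is correct and is indeed the right way to bring Theorem~\ref{thm:mistake1} into play pairwise. But the second step — ``applying Theorem~\ref{thm:intro3a} to each $A_{ij}$ in isolation'' to conclude $\sum_t M_t = O(k^2 d\log(d/\gamma))$ — does not go through for the actual algorithm of Theorem~\ref{thm:intro2}. You describe the algorithm as updating all $k-1$ sub-algorithms $A_{i^*l}$ on a round with true label $i^*$; in fact the reduction in Section~\ref{sec:multi-reduction} updates exactly one sub-algorithm per round, namely $A_{i^*,\,\hat{\jmath}}$ where $\hat{\jmath}$ is the sampled guess (and none at all if $\hat{\jmath}=i^*$). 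Consequently $A_{ij}$ can mispredict on arbitrarily many rounds without ever receiving feedback or having its knowledge set change — the adversary can replay the same $q$ with true label $i$, $A_{il}$ keeps mispredicting, and the algorithm never happens to guess $l$ — so $\sum_t M_t$ is not controlled by $\binom{k}{2}\cdot O(d\log(d/\gamma))$; it can grow linearly with $T$. Theorem~\ref{thm:mistake1} bounds the mistakes of a binary classifier that receives feedback every round; it gives no bound on the mispredictions of a sub-instance that is fed the query but only intermittently told the outcome. A secondary issue is the justification that forcing a deterministic Condorcet output ``does not harm the loss guarantee since the output then has zero loss'': that is only true when the estimated Condorcet winner equals the true label, which can fail; the correct justification is that $v=e_c$ satisfies $Mv\ge 0$ when $c$ beats every other center under the current estimates (since then $D_{lc}\ge L_{lc}$ for all $l$), so it is a valid LP solution.

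The paper sidesteps exactly this partial-feedback obstacle by never appealing to per-sub-algorithm mistake bounds. Instead it observes (in the proof of Theorem~\ref{thm:mistake1}) that under the margin condition the $k=2$ algorithm is a \emph{potential-based algorithm} in the sense of Definition~\ref{defn:potential-algorithm} with loss bound $L(S_t,q_t)=1$ (whenever the sign is not already determined) and potential $\Phi(S_t)=\log_{4/3}\frac{\Vol((1+z_i)\Ball_d)}{\Vol(S_t+z_i\Ball_d)}$ with initial value $O(d\log(d/\gamma))$. It then re-invokes the generic reduction of Theorem~\ref{thm:two-to-many}, whose LP-based amortization is designed precisely so that the expected mistake in each round is charged against the expected decrease of the single sub-potential that is actually updated. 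To repair your argument you would essentially need to reproduce this amortization, rather than charge directly to stand-alone mistake counts of the sub-algorithms.
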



In comparison, the best dimension-dependent classical bounds for this problem come from halving algorithms (efficiently implementable via linear programming) which have mistake bounds of $O(d\log (1/\gamma))$ and $O(kd\log(1/\gamma))$ respectively. In the first case, our mistake bound is nearly tight (losing only an additive $O(d\log d)$). In the second case, our mistake bound is tight up to a multiplicative factor of $k$; it is an interesting open question whether it is possible to remove this factor of $k$ in our techniques.

We additionally introduce a variant on the mistake bound that we call a \textit{robust mistake bound} that is defined as follows. Normally, when we have a margin constraint of $\gamma$, we insist that all query points $q_t$ are distance at least $\gamma$ away from any separating hyperplane. In our robust model, we remove this constraint, but only count mistakes when the query point $q_t$ lies at least $\gamma$ away from the separating hyperplane.

Existing algorithms (both the Perceptron and halving algorithms) do not appear to give any non-trivial mistake bounds in the robust model -- it is very important for the analysis of these algorithms that \textit{every} query point is far from the separating hyperplane. On the other hand, it straightforwardly follows from our notion of loss that if we have an $O(R)$-loss algorithm for some problem, that algorithm simultaneously achieves an $O(R/\gamma)$ robust mistake bound. In particular, we obtain robust mistake bounds of $O((d\log d)/\gamma)$ and $O((k^2d\log d)/\gamma)$ for learning linear classifiers and learning nearest neighbor partitions respectively. 

\paragraph{Related work}
The problem of online binary classification (and specifically, of online learning of a linear classifier) is one of the oldest problems in machine learning. The Perceptron algorithm was invented in \cite{rosenblatt1958perceptron}, and the first mistake bound analysis of the Perceptron algorithm appeared in \cite{novikoff1963convergence}. Since then, there has been a myriad of research on this problem, some of which is well-surveyed in \cite{mohri2013perceptron}. Of note, the first mistake bounds for the non-separable case appear in \cite{freund1999large}. We are not aware of any work on this problem that investigates the same loss we present in this paper. 

Bounds for support vector machines (see~\cite{vapnik2013nature}) also result in the use of a margin to bound the number of mistakes. Choosing a suitable kernel can help create or improve the margin when viewing the data in the ``kernel'' space. We use a similar technique in proving bounds for generalized $\lp$ norms. Our idea is to use differently-scaled kernels to produce increasingly accurate approximations. To the best of our knowledge, the technique we present is novel. There are other related techniques in the literature, e.g.~\cite{srebro2006learning} attempt to learn the best kernel to improve classification error.

Similarly, there is a wealth of both theoretical and empirical research on multiclass classification. As far back as 1973, researchers were looking at generalizing binary classification algorithms to this setting (see e.g. Kesler's construction in \cite{duda1973pattern}). One piece of relevant work is \cite{crammer2003ultraconservative}, which generalizes online binary classification Perceptron-style algorithms to solve online multiclass classification problems -- in particular, the multiclass hypotheses they consider are same as the nearest neighbor partitions generated by the inner-product similarity metric (although as with the Perceptron, they only analyze the raw classification error). Since then, this work has been extended in many ways to a variety of settings (e.g. \cite{crammer2006online, crammer2013multiclass, kakade2008efficient}).



Another way of looking at the problem of multiclass classification is that we are learning how to cluster a sequence of input points into $k$ pre-existing clusters. Indeed, the nearest neighbor partition with $L^2$ similarity metric gives rise to exactly the same partitioning as a $k$-means clustering. There is an active area of research on learning how to cluster in an online fashion (e.g. \cite{gentile2014online, gentile2017context, bhaskara2020robust}). Perhaps most relevantly, \cite{liberty2016algorithm} studies a setting where one must choose cluster labels for incoming points in an online fashion, and then at the end of the algorithm can choose the $k$ centers (the goal being to minimize the total $k$-means loss of this eventual clustering). 


The algorithms we develop in this paper are based off of algorithms for contextual search. Contextual search is a problem that originally arose in the online pricing community \citep{cohen2016feature, lobel2016multidimensional}. The variant of contextual search we reference in this paper (with symmetric, absolute value loss) first appeared in \cite{Intrinsic18}. The algorithms in this paper were later improved in \cite{liu2020optimal} (and it is these improved algorithms that we build off of in this paper). 

\section{Model}

\paragraph{Notation} We will write $\Ball_d(c, r)$ to denote the $d$-dimensional ball in $\R^d$ centered at $c$ with radius $r$. We write $\Ball_d$ in place of $\Ball_d(0, 1)$ to indicate the unit ball centered at the origin.

Given an $x \in \R^d$, we will write $\pnorm{x}{p}$ to denote the $\lp$ norm of the vector $x$. In the case of $p = 2$, we will often omit the subscript and write $\norm{x}$ in place of $\pnorm{x}{2}$.

\subsection{Online Multiclass Learning}

We will view the problem of online multiclass learning as follows. There are $k$ disjoint regions in some domain, say $\Domain$, labelled $\region{1}$ through $\region{k}$. The region $\region{i}$ contains the points in $\Ball_d$ that should be assigned the label $i$. The goal of the learner is to learn these subsets (and thus how to label points in $\Domain$ in an online manner). Every round $t$, the learner receives an adversarially chosen query point $\query{t} \in \Domain$. The learner must submit a prediction $I_t \in [k]$ for which region $\region{I_t}$ the point $\query{t}$ lies in. The learner then learns which region $\region{I^*_t}$ the point actually belongs to, and suffers some loss $\ell(\query{t}, \region{I_t})$. This loss function should in some way represent how far $\query{t}$ was from lying in the region $\region{I_t}$ chosen by the learner; for example, in the case where the learner chooses the correct region $\region{I^*_t}$, $\ell(\query{t}, \region{I^*_t})$ should be zero.

In this paper, we will consider two specific cases of the above learning problem. In the first case, we wish to learn a \textit{nearest-neighbor partition}. That is, the $k$ regions are defined by $k$ ``centers'' $\prototype{1}, \prototype{2}, \dots, \prototype{k} \in \Domain$. Region $\region{i}$ then consists of all the points which are ``nearest'' to center $\prototype{i}$ according to some similarity metric $\Loss(x, y)$ (where lower values of $\Loss(x, y)$ mean that $x$ and $y$ are more similar; note that $\Loss(x, y)$ \textit{does not} need to be an actual metric obeying the triangle-inequality). Given a similarity metric $\Loss(x, y)$, the loss our algorithm incurs when labelling query $\query{}$ with label $i$ is given by $\ell(\query{}, \region{i}) = \Loss(\query{}, \prototype{i}) - \Loss(\query{}, \prototype{i^*})$, where $\region{i^*}$ is the region containing query $\query{}$. 

We will examine several different possibilities for $\Loss(x, y)$, including:

\begin{itemize}
\item $\Loss(x, y) = - \langle x, y \rangle$. We refer to this as the \textit{inner-product similarity} between $x$ and $y$. Note that when $k = 2$, using this similarity metric reduces to the problem of learning a linear classifier. For $k > 2$, this results in similar partitions to those learned by multiclass perceptrons / SVMs \citep{crammer2003ultraconservative, crammer2006online}.
\item $\Loss(x, y) = ||x - y||_2$; in other words, the Euclidean distance between $x$ and $y$. When using this loss function, the $k$ regions are given by the Voronoi diagram formed by the $k$ centers $\prototype{i}$. 
\item For $p \geq 1$, $\Loss(x, y) = ||x - y||_p$; in other words, the $\lp$ distance between $x$ and $y$.
\end{itemize}

There is a straightforward reduction from the Euclidean distance similarity to the inner-product similarity (see Appendix \ref{app:euclidean}), so in Section \ref{sec:inner-product} we will primarily concern ourselves with the inner-product similarity. In Section \ref{sec:pnorms} we will tackle this problem for the case of general $\lp$ norms; for some cases (even integer $p$) it is possible to perform a similar reduction to the inner product similarity, but in general it is not and we will need to rely on other techniques.

In the second case, we wish to learn \textit{general convex sets}. In particular, there are $k$ disjoint convex sets $\region{1}, \dots, \region{k} \subseteq \Domain$ (not necessarily a partition). Each round $t$, we receive a query point $\query{t} \in \bigcup_{i=1}^k \region{i}$, guess a region $i \in [k]$, and are penalized against the loss function $\ell(\query{t}, \region{i}) \triangleq \min_{x \in \region{i}} \pnorm{\query{t} - x}{2}$. In other words, we are penalized the minimum distance from $\query{t}$ to the predicted region $\region{i}$, which is zero if our guess was correct. In Section \ref{sec:general-regions} we will show that (even in the case of $k=2$), there is no low-loss learning algorithm for learning general convex sets (in contrast to learning nearest neighbors).

Finally, for any $\alpha > 0$ and existing loss function $\ell(g, R_i)$, we can consider the modified loss function $\ell'(q, R_i) = \ell(q, R_i)^{\alpha}$. Note that this does not change the division of $\Domain$ into regions, but it \textit{can} change the total loss incurred by our algorithm (and will be useful for some of our reductions). If in any result we do not specify an $\alpha$, that means we are taking $\alpha = 1$ (i.e. the unmodified loss function). 






\subsection{Contextual Search}\label{sec:model_csearch}

One of the main tools we will rely on is an existing algorithm for a problem in online learning known as \textit{contextual search}. For our purposes, the problem of contextual search can be defined as follows. There is a hidden point $p \in \Ball_d$, unknown to the learner. Every round $t$ (for $T$ rounds) an adversary provides the learner with a query vector $x_t \in \Ball_d$. The learner must then submit a guess $g_t$ for the value of the inner product $\langle x_t, p \rangle$. The learner then learns whether their guess was too high or too low. At the end of the game, the learner incurs loss $\ell(g_t, \langle x_t, p\rangle) = |g_t - \langle x_t, p \rangle|$ for each of their guesses. The learner's goal is to minimize their total loss.

Interestingly, there exist algorithms for contextual search with total loss polynomial in the ambient dimension $d$ and independent of the time horizon $T$. The first such algorithms were given in \cite{Intrinsic18} and used techniques from integral geometry to obtain a total regret of $O(d^4)$. More recently, these algorithms were improved in \cite{liu2020optimal} to achieve a regret bound of $O(d\log d)$. We will rely on a slightly strengthened variant of the result from \cite{liu2020optimal} to work when the loss function raised to an arbitrary power.

\begin{theorem}\label{thm:contextual_search}
Let $\alpha > 0$. There exists an algorithm for contextual search with loss function $\ell(g_t, \langle x_t, p\rangle) = |g_t - \langle x_t, p \rangle|^{\alpha}$ that incurs a total loss of at most $O(\alpha^{-2}d\log d)$.
\end{theorem}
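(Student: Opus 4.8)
The idea is to run the contextual search algorithm of \cite{liu2020optimal} unchanged and merely re-account its loss. The crucial observation is that this algorithm is \emph{oblivious to the loss function}: it maintains a convex ``knowledge set'' $K_t \ni p$, and both its guess $g_t$ and the update $K_t \to K_{t+1}$ depend only on $K_t$ and the query $x_t$, while the feedback it receives (whether $g_t$ exceeds $\langle x_t, p\rangle$) does not reference the loss either. Consequently the run of the algorithm — and in particular the sequence of errors $\delta_t := |g_t - \langle x_t, p\rangle|$ — is identical to the $\alpha = 1$ case analyzed in \cite{liu2020optimal}; only the bookkeeping changes, and it remains to bound $\sum_t \delta_t^{\alpha}$.

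The quantitative input I would extract from \cite{liu2020optimal} is a \emph{per-threshold} form of their regret analysis: for every $\epsilon \in (0, 2]$, the number of rounds with error at least $\epsilon$ satisfies $N_{\geq}(\epsilon) := |\{t : \delta_t \geq \epsilon\}| = O(d\log(d/\epsilon))$. (Since $x_t, p \in \Ball_d$ and guesses may be clipped to $[-1,1]$ without loss, $\delta_t \leq 2$ always, so $N_{\geq}(\epsilon) = 0$ for $\epsilon > 2$; integrating $N_{\geq}$ over $\epsilon \in (0,2]$ reproduces the $O(d\log d)$ regret, so this is essentially a restatement of their bound.) Given this, a layer-cake computation finishes the proof:
\begin{align*}
  \sum_t \delta_t^{\alpha}
  = \int_0^{\infty} \alpha \epsilon^{\alpha-1} N_{\geq}(\epsilon)\, d\epsilon
  &\leq O(d)\int_0^1 \alpha \epsilon^{\alpha-1}\bigl(\log d + \log(1/\epsilon)\bigr)\, d\epsilon \;+\; O(d\log d)\int_1^2 \alpha\epsilon^{\alpha-1}\, d\epsilon \\
  &= O\!\left(d\log d + \frac{d}{\alpha}\right) + O\!\bigl((2^{\alpha}-1)\,d\log d\bigr),
\end{align*}
where I used $\int_0^1 \epsilon^{\alpha-1}\log(1/\epsilon)\, d\epsilon = \alpha^{-2}$. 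For $\alpha \leq 1$ (the regime all our reductions use) this is $O(d\log d + d/\alpha) = O(\alpha^{-2} d\log d)$, as claimed; for $\alpha \geq 1$ one may instead bound $\delta_t^{\alpha} \leq 2^{\alpha-1}\delta_t$ and invoke the $\alpha = 1$ result directly.

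The only real work is in the first sentence of the previous paragraph — confirming that the analysis of \cite{liu2020optimal} really delivers a \emph{logarithmic}-in-$1/\epsilon$ bound on $N_{\geq}(\epsilon)$, rather than just the aggregate $\sum_t \delta_t = O(d\log d)$. This logarithmic dependence is exactly what the binary-search structure of their algorithm buys, and it is indispensable here: the crude deduction $N_{\geq}(\epsilon) \leq \epsilon^{-1}\sum_t \delta_t = O(\epsilon^{-1}d\log d)$ is useless, since its $1/\epsilon$ factor makes $\int_0^1 \alpha\epsilon^{\alpha-1}\cdot \epsilon^{-1}\, d\epsilon$ diverge for $\alpha < 1$. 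Reading the sharper statement off their potential / intrinsic-volumes argument (or re-deriving it) is therefore the main step; everything after it is the elementary integral above.
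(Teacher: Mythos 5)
Your proof takes a genuinely different route from the paper's. The paper proves Theorem \ref{thm:contextual_search} by building the exponent $\alpha$ directly into the potential: it tracks $\Phi(K_t) = \sum_{i\ge 1} 2^{-\alpha i}\log\frac{\Vol(K_t+z_i\Ball_d)}{\Vol(z_i\Ball_d)}$, uses Lemma \ref{lem:const_split} to show the $i_t$-th term drops by $\log\tfrac43$ whenever index $i_t$ is selected (while the other terms are monotone), deduces $\Phi(K_t)-\Phi(K_{t+1})\ge(\log\tfrac43)\,2^{-\alpha i_t}\ge(\log\tfrac43)\,\delta_t^\alpha$, and evaluates $\Phi(K_1)=O(\alpha^{-2}d\log d)$. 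Your plan instead freezes the algorithm at $\alpha=1$ (correctly noting it is $\alpha$-oblivious), pulls out a per-threshold bound $N_{\ge}(\epsilon)=O(d\log(d/\epsilon))$, and layer-cakes. Both are valid, and your integral even gives the slightly sharper $O(d\log d + d/\alpha)$, since the paper's final bound is a relaxation of $O(d/\alpha^2 + (d\log d)/\alpha)$.

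The one point to push back on is your claim that $N_{\ge}(\epsilon)=O(d\log(d/\epsilon))$ is ``essentially a restatement'' of the $O(d\log d)$ regret bound. It is not implied by it; it is a strictly stronger, uniform-over-thresholds statement. To get it, you fix $i^\star=\lceil\log_2(1/\epsilon)\rceil$ and argue that the single term $\log\frac{\Vol(K_t+z_{i^\star}\Ball_d)}{\Vol(z_{i^\star}\Ball_d)}$ drops by $\Omega(1)$ on every round with width $\ge\epsilon$. But on such a round the \emph{chosen} index $i_t$ can be any index $\le i^\star$, so you need the $3/4$-volume-drop property for $K_t+z_{i^\star}\Ball_d$ even when the algorithm bisected $K_t+z_{i_t}\Ball_d$ with $i_t<i^\star$ --- that is, a strengthening of Lemma \ref{lem:const_split} to indices $\ge$ the chosen one. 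The paper's own proof of Theorem \ref{thm:mistake1} (Appendix D) leans on exactly this strengthening, so it is available in spirit, but it is not the statement of Lemma \ref{lem:const_split} as written, nor a corollary of the aggregate regret. The cleaner route to Theorem \ref{thm:contextual_search} avoids this entirely: the paper's $\alpha$-weighted potential only ever invokes the $3/4$-drop for the chosen index $i_t$, which is literally Lemma \ref{lem:const_split}. So your plan is correct modulo verifying that strengthened lemma, whereas the paper's argument is self-contained given the tools it states.

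Two minor notes: the opening identity $\sum_t\delta_t^\alpha=\int_0^\infty\alpha\epsilon^{\alpha-1}N_{\ge}(\epsilon)\,d\epsilon$ and the evaluation $\int_0^1\epsilon^{\alpha-1}\log(1/\epsilon)\,d\epsilon=\alpha^{-2}$ are both correct. Your fallback for $\alpha\ge1$ via $\delta_t^\alpha\le 2^{\alpha-1}\delta_t$ is fine but hides an exponential-in-$\alpha$ constant; the paper's bound, read literally, also does not control $\alpha\to\infty$ well, so this is a non-issue for the stated theorem.
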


In particular, Theorem \ref{thm:contextual_search} is satisfied by the algorithm from \cite{liu2020optimal}. For completeness, we include a description of the algorithm (along with the proof of Theorem \ref{thm:contextual_search}) in Appendix \ref{app:contextual_search}. 






\section{Learning Nearest Neighbor Partitions}\label{sec:inner-product}

\subsection{The Two-Point Case: Learning a Hyperplane}\label{sec:two-point}

To begin, we will discuss how to solve the $k = 2$ variant of the problem of learning nearest neighbor partitions for the inner-product similarity function $\Loss(q, x) = -\inner{q}{x}$. Recall that in this setting we have two unknown centers $x_1$ and $x_2$ belonging to $\Ball_{d}$. Each round $t$ we are given a query point $q_t \in \Ball_{d}$, and asked to choose a label $I_t \in \{1, 2\}$ of the center that we think is most similar to $q_t$ (i.e., that maximizes $\Loss(q_t, x)$). Letting $y_t = x_{I_t}$ and $x^*_t = \arg\max_{x_i} \Loss(q, x_i)$, our loss in round $t$ is zero if we guess correctly ($y_t = x^*_t$) and is $\Loss(q, x^*_t) - \Loss(q, y_t)$ if we guess incorrectly (we will also be able to deal with the case when the loss is $\abs{\Loss(q, x^*_t) - \Loss(q, y_t)}^{\alpha}$ for some $\alpha > 0$). Afterwards, we are told the identity (but not the location) of $x^*_t$. 

Note that the optimal strategy in this game (for an agent who knows the hidden centers $x_1$ and $x_2$) is to guess $I_t = 1$ whenever $\langle q, x_1 \rangle > \langle q, x_2 \rangle$, and to guess $I_t = 2$ otherwise. Rewriting this, we want to guess $I_t = 1$ exactly when $\langle q, x_1 - x_2 \rangle > 0$. If we let $w = x_1 - x_2$, we can think of goal as learning the hyperplane $\langle q, w \rangle = 0$. More specifically, each round we are given a point $q$ and asked which side of the hyperplane $q$ lies on. If we guess correctly, we suffer no loss; if we guess incorrectly, we suffer loss equal to the distance from $q$ to this hyperplane. In either case, we learn afterwards which side of the hyperplane $q$ lies on.

\subsubsection{Reducing to Contextual Search}

We will show that we can solve this online learning problem with total loss $O(\poly(d))$, independent of the number of rounds $T$ in the time-horizon. As mentioned earlier, our primary tool will be existing algorithms for a problem in online learning known as \textit{contextual search}.

Recall that in contextual search there is a hidden vector $v \in \Ball_{d}$, and each round we are given a vector $q_t \in \Ball_{d}$. However, unlike in our problem (where we only care about the sign of the inner product $\langle q_t, v\rangle$), the goal in contextual search is to submit a guess $g_t$ for the value of the inner product $\langle q_t, v\rangle$. We then incur loss equal to the absolute distance $|\langle q_t, v_t \rangle - g_t|$ between our guess and the truth, and are then told whether our guess $g_t$ was too high or too low. 

As mentioned earlier in Section \ref{sec:model_csearch}, there exist algorithms for contextual search with $O(d\log d)$ total loss. Via a simple reduction, we will show that we can apply these algorithms in our setting.

\begin{theorem}\label{thm:csearch_reduction}
Fix $\alpha > 0$. Assume there exists an algorithm $\mathcal{A}$ for contextual search with loss function $\ell(g_t, \dot{x_t}{p}) = \left|g_t - \dot{x_t}{p}\right|^{\alpha}$ that incurs regret at most $R(d, T)$. Then there exists an algorithm $\mathcal{A}'$ that incurs regret at most $R(d, T)$ for the $k=2$ case of learning nearest neighbor partitions with similarity metric $\Loss(x, y) = -\langle x, y\rangle$ and loss raised to the power $\alpha$.
\end{theorem}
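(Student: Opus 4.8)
The plan is to recast the $k=2$ problem as online learning of the homogeneous halfspace with normal $w:=x_1-x_2$ — as the excerpt already notes, labeling $q_t$ correctly means outputting $\sign\langle q_t,w\rangle$, a misclassification in round $t$ costs exactly $|\langle q_t,w\rangle|^\alpha$, and afterwards we learn $s_t:=\sign\langle q_t,w\rangle$ (but not the value $\langle q_t,w\rangle$) — and then to run a contextual-search algorithm $\mathcal{A}$ whose implicit hidden vector is $w$ (it lies in $\Ball_d(0,2)$; we suppress the resulting absolute constant, or assume $\|x_i\|\le\frac12$, so that $w$ may be treated as a point of $\Ball_d$). The only awkwardness is that contextual search expects feedback comparing $\mathcal{A}$'s guess with $\langle q_t,w\rangle$, whereas we only ever learn the sign of $\langle q_t,w\rangle$; the point of the reduction is to arrange that $\mathcal{A}$ is fed exactly — and only — the rounds on which $\mathcal{A}'$ errs, on which this tension disappears.

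Concretely, $\mathcal{A}'$ simulates $\mathcal{A}$ on the subsequence of its own mistake rounds. Maintain the chronological transcript $\tau$ of (query, binary-feedback) pairs produced on past mistake rounds, initially empty. In round $t$: instantiate a fresh copy of $\mathcal{A}$ (fixing its internal randomness once so that replay is deterministic), replay $\tau$, present $q_t$, and let $g_t$ be $\mathcal{A}$'s guess; predict the label matching $\sign(g_t)$ (ties $g_t=0$ broken arbitrarily). On learning $s_t$: if $\sign(g_t)=s_t$ we classified correctly and incur no loss — discard the interaction, so it never enters $\tau$; if $\sign(g_t)\ne s_t$ we made a mistake — append $(q_t,s_t)$ to $\tau$.

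The analysis has three steps. (i) On a mistake round $g_t$ and $\langle q_t,w\rangle$ have strictly opposite signs, so the true ``too high/too low'' feedback for the guess $g_t$ is determined by $s_t$ alone and equals $s_t$; hence appending $(q_t,s_t)$ records feedback truthful for the hidden vector $w$, and the final transcript $\tau$ is a legitimate contextual-search run against $w$ with at most $T$ queries, so $\mathcal{A}$'s total loss on it is at most $R(d,T)$ (taking $R(d,\cdot)$ nondecreasing without loss of generality). (ii) Because the guess used on a mistake round was obtained by replaying exactly the earlier mistake rounds, it equals $\mathcal{A}$'s guess at the corresponding step of that canonical run, so $\mathcal{A}$'s loss there is exactly $|g_t-\langle q_t,w\rangle|^\alpha$. (iii) Opposite signs give $|g_t-\langle q_t,w\rangle|=|g_t|+|\langle q_t,w\rangle|\ge|\langle q_t,w\rangle|$, hence $|g_t-\langle q_t,w\rangle|^\alpha\ge|\langle q_t,w\rangle|^\alpha$, which is exactly $\mathcal{A}'$'s round-$t$ loss. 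Summing (iii) over mistake rounds and using (i)--(ii), the total loss of $\mathcal{A}'$ is $\sum_{\text{mistakes}}|\langle q_t,w\rangle|^\alpha\le\sum_{\text{mistakes}}|g_t-\langle q_t,w\rangle|^\alpha\le R(d,T)$.

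The main obstacle is precisely the feedback mismatch flagged above: on a round where $\mathcal{A}'$ classifies \emph{correctly}, $g_t$ and $\langle q_t,w\rangle$ share a sign and we cannot tell which is larger, so any fabricated feedback risks cutting $w$ out of $\mathcal{A}$'s consistent set and voiding its regret guarantee. The resolution — never showing those rounds to $\mathcal{A}$ at all, which is affordable because the contextual-search bound does not worsen when rounds are dropped — is the one nonobvious idea; everything else (the exact sign-agreement of $g_t$ and the truth on mistake rounds, and the elementary inequality in (iii)) is routine. Remaining loose ends are the tie cases $g_t=0$ / query points lying on the hyperplane, and the normalization of $w$.
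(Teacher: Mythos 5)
Your proposal is correct and takes essentially the same approach as the paper: recast the problem as learning the sign of $\langle q_t, w\rangle$ for $w = x_1 - x_2$, feed the contextual-search algorithm only the mistake rounds (where the binary ``too high/too low'' feedback is unambiguous), and conclude via the sign-mismatch inequality $|g_t - \langle q_t, w\rangle| \geq |\langle q_t, w\rangle|$ together with monotonicity of $R(d,\cdot)$. Your ``replay the transcript of mistake rounds'' is just a concrete implementation of the paper's ``roll back the state on correct rounds,'' and the loose ends you flag (the factor-of-2 normalization of $w$, ties at $g_t = 0$) are harmless and also left implicit in the paper.
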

\begin{proof}
The general idea behind this reduction is simple. We will run our algorithm $\mathcal{A}$ for contextual search to find the hidden point $w = x_1 - x_2$. Whenever we are given a query point $q_t$ and need to guess the sign of $\langle q_t, w\rangle$, we will ask our contextual search algorithm $\mathcal{A}$ for a guess $g_t$ for the value of $\langle q_t, w \rangle$. If $g_t > 0$, we will guess that $\langle q_t, w \rangle > 0$; otherwise, we will guess that $g_t < 0$. 

There is one important caveat here: how do we update the algorithm $\mathcal{A}$? Recall that any contextual search algorithm expects binary feedback each round as to whether its guess $g_t$ was too high or too low. While in some cases, we can provide $\mathcal{A}$ with accurate feedback, in many cases we cannot: for example, if $\mathcal{A}$ submits a guess $g_t = 2.5$ for $\langle q_t, w\rangle$, and all we learn is that $\langle q_t, w\rangle \geq 0$, we cannot say with confidence whether $\mathcal{A}$'s guess was too large or not.

The solution to this is to \textit{only update the state of $\mathcal{A}$ on rounds where we guess the sign incorrectly}. Note that for such rounds, we definitively know whether $g_t$ was too high or too low; for example, if $\mathcal{A}$ guessed $g_t = 2.5$ and hence we guess $\langle q_t, w \rangle > 0$, but it turns out that $\langle q_t, w \rangle < 0$, we know for certain that the guess $g_t$ was too high. On all other rounds we do not update the state of $\mathcal{A}$, effectively rolling back the state of $\mathcal{A}$ to before we asked the question about $q_t$. This means that the effective number of rounds $\mathcal{A}$ experiences (gets feedback on) may be less than $T$; nonetheless, since $R(d, T)$ is non-decreasing in $T$, the total loss of $\mathcal{A}$ on these rounds is still at most $R(d, T)$.

Finally, we will relate the loss of our algorithm $\mathcal{A'}$ for learning nearest neighbors to the loss of the contextual search algorithm $\mathcal{A}$. To start, note that we only sustain loss in rounds when we guess the sign of $\langle q_t, w\rangle$ incorrectly. Luckily, these rounds happen to be exactly the rounds where we update the state of $\mathcal{A}$ (and thus the rounds whose loss counts towards the $R(d, T)$ bound). In a round where we guess the sign incorrectly, $\mathcal{A'}$ sustains a loss of $|\langle q_t, w\rangle|^{\alpha}$, and $\mathcal{A}$ sustains a loss of $|\langle q_t, w \rangle - g_t|^{\alpha}$. Since $\sign(g_t) \neq \sign(\langle q_t, w \rangle)$, this means that $|\langle q_t, w \rangle - g_t| \geq |\langle q_t, w\rangle|$, and therefore $\mathcal{A}$ sustains more loss than $\mathcal{A'}$. It follows that the total loss sustained by $\mathcal{A'}$ is at most the total loss sustained by $\mathcal{A}$ on this set of rounds, which in turn is at most $R(d, T)$. 
\end{proof}

\begin{corollary}\label{cor:two-point}
Fix an $\alpha > 0$. When $k=2$, there exists an algorithm for learning nearest neighbor partitions with similarity metric $\Loss(x, y) = -\langle x, y\rangle$ and loss raised to the power $\alpha$ that incurs total loss at most $O(\alpha^{-2}d\log d)$.
\end{corollary}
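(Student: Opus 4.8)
The plan is to derive this as an immediate consequence of the reduction in Theorem~\ref{thm:csearch_reduction} together with the contextual search guarantee of Theorem~\ref{thm:contextual_search}. First I would invoke Theorem~\ref{thm:contextual_search} with the prescribed exponent $\alpha$: this produces an algorithm $\mathcal{A}$ for contextual search under the loss $|g_t-\langle x_t,p\rangle|^{\alpha}$ whose total loss over any horizon is at most $R(d,T):=c\,\alpha^{-2}d\log d$ for an absolute constant $c$. Crucially this bound does not grow with $T$, so in particular it is non-decreasing in $T$, which is exactly the hypothesis Theorem~\ref{thm:csearch_reduction} requires; this is what lets the reduction ignore the fact that $\mathcal{A}$ receives genuine feedback on only a subset of the $T$ rounds (the ``mistake'' rounds). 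Plugging this $\mathcal{A}$ into Theorem~\ref{thm:csearch_reduction} then yields an algorithm $\mathcal{A}'$ for the $k=2$, inner-product-similarity nearest neighbor problem with loss raised to the power $\alpha$ whose total loss is at most $R(d,T)=O(\alpha^{-2}d\log d)$, which is precisely the claimed bound.

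The second thing I would do is check that nothing degrades when the ``hidden vector'' of the contextual search instance is $w=x_1-x_2$ rather than a point of the unit ball: since $x_1,x_2\in\Ball_d$ we only know $\|w\|_2\le 2$, so I would run $\mathcal{A}$ to learn $w/2\in\Ball_d$ instead. This leaves the sign of $\langle q_t,w\rangle$ — hence every prediction $\mathcal{A}'$ makes — unchanged, and the mistake-round inequality $|g_t-\langle q_t,w/2\rangle|\ge|\langle q_t,w/2\rangle|$ that drives the accounting in Theorem~\ref{thm:csearch_reduction} goes through verbatim. Efficiency of $\mathcal{A}'$ is then inherited for free: $\mathcal{A}'$ only forwards each query to $\mathcal{A}$, reads off a sign, and conditionally updates $\mathcal{A}$, so it runs in polynomial time whenever the contextual search algorithm of \cite{liu2020optimal} underlying Theorem~\ref{thm:contextual_search} does.

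I do not expect a genuine obstacle here — the corollary is essentially a one-line composition, with all the real content residing in Theorems~\ref{thm:csearch_reduction} and~\ref{thm:contextual_search}. The only subtleties worth stating explicitly are the bookkeeping points noted above: that the ``effective'' horizon seen by $\mathcal{A}$ may be smaller than $T$ (handled by monotonicity of $R$ in $T$), and the harmless rescaling of $w$ into the unit ball.
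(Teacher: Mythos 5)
Your proposal takes exactly the paper's route: Corollary~\ref{cor:two-point} is intended to follow immediately by plugging the contextual search algorithm of Theorem~\ref{thm:contextual_search} into the reduction of Theorem~\ref{thm:csearch_reduction}, and that is what you do. Your observation that the monotonicity of $R(d,T)$ in $T$ is what handles the partial-feedback issue is precisely the point the paper makes inside the proof of Theorem~\ref{thm:csearch_reduction}. The proposal is essentially correct.

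One small quantitative remark on the rescaling you add. You note, rightly, that $w = x_1 - x_2$ lives in $2\Ball_d$ rather than $\Ball_d$, and propose to run the contextual search subroutine on $w/2$. That does preserve the sign of $\langle q_t, w\rangle$, but the loss $\mathcal{A}'$ actually pays is $|\langle q_t, w\rangle|^\alpha = 2^\alpha\,|\langle q_t, w/2\rangle|^\alpha$, while the contextual search loss you charge against is $|g_t - \langle q_t, w/2\rangle|^\alpha \ge |\langle q_t, w/2\rangle|^\alpha$, so your accounting yields $O(2^\alpha \alpha^{-2} d\log d)$, not $O(\alpha^{-2} d\log d)$, with a constant that now grows in $\alpha$. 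This is harmless for every application in the paper (where $\alpha \le 1$), but it is strictly weaker as stated. The cleaner fix, which is what the paper's Appendix~\ref{app:potential-based} implicitly does, is to run Algorithm~\ref{alg:contextual_search} with initial knowledge set $K_0 = 2\Ball_d$ and check directly that the starting potential $\Phi(2\Ball_d) = \sum_i 2^{-\alpha i} d\log(1 + 2/z_i)$ is still $O(\alpha^{-2} d\log d)$; the factor $2$ only enters additively inside the logarithm, so no $2^\alpha$ arises. Either way the statement holds; this is a bookkeeping refinement, not a gap in the argument.
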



\subsubsection{Potential-Based Algorithms}

In order to generalize this to $k > 2$ labels, we will need to open the black box that is our contextual search algorithm slightly. In particular, the argument in the following section requires our algorithm for the $k=2$ case to be a \textit{potential-based algorithm}. 

Before defining exactly what a potential-based algorithm is, it will be useful to define the notion of a \textit{knowledge set}. For the problem we are considering in this section -- the $k=2$ variant of our problem for inner-product similarity -- we will define the \textit{knowledge set} $K_t$ at time $t$ to be the set of possible values for $w = x_1 - x_2$ that are consistent with all known information thus far. Note that since $x_1$ and $x_2$ start as arbitrary points in $\Ball_{d}$, the knowledge set $K_0$ is simply the set $\Ball_{d} - \Ball_{d} = 2\Ball_{d}$. As the algorithm gets more feedback about $w$, the knowledge set shrinks; however, since this feedback is always of the form of a linear constraint (e.g. $\langle q_t, w \rangle \geq 0$), the knowledge set $K_t$ is always a convex subset of $\R^d$.

Let $S_t$ be the history of all feedback the algorithm has seen up to (but not including) round $t$; that is $S_t = \{(q_1, I^*_1), (q_2, I^*_2), \dots, (q_{t-1}, I^*_{t-1})\}$. Let $\mathcal{S}_t$ denote the set of possible values for $S_t$, and let $\mathcal{S} = \bigcup_{t} \mathcal{S}_t$. We can think of $S_t$ as capturing the \textit{state} of a deterministic algorithm at time $t$. For now, it is fine to think of $S_t$ as interchangeable with $K_t$; i.e., the knowledge set $K_t$ captures all relevant details about all feedback the algorithm has observed thus far. (Later, when looking at $\lp$ similarity metrics, we will want to keep track of separate knowledge sets at different scales, and thus will want a more nuanced notion of potential-based algorithm). 

\begin{definition} \label{defn:potential-algorithm}
A deterministic algorithm $\A$ (for $k=2$ and $\alpha > 0$) is a potential-based algorithm if there exists a potential function $\Potential$ from $\mathcal{S}$ to $\R_{\geq 0}$ and a ``loss bound'' function $L$ from $\mathcal{S} \times \Domain$ to $\R_{\geq 0}$ that satisfy:

\begin{itemize}
    \item For all rounds $t$, $\Potential(S_{t+1}) \leq \Potential(S_t)$.
    \item Let $q_t$ be the query point in round $t$. Then $L(S_t, q_t)$ is an upper bound on the loss incurred by \textbf{any guess}. In other words, $L(S_t, q_t)$ must satisfy
    
    \[
    L(S_t, q_t) \geq \max_{\substack{x_1, x_2  \\ \text{consistent with } S_t}} \left \lvert  \delta(q_t, x_1) - \delta(q_t, x_2) \right \rvert^{\alpha} \,.
    \]
    
    \item Again, let $q_t$ be the query point in round $t$.  If $\A$ guesses the label incorrectly in round $t$, then $\Potential(S_{t}) - \Potential(S_{t+1}) \geq L(S_t, q_t)$.

\end{itemize}
\end{definition}

For the case of inner-product similarity, we will set the loss bound function $L(S_t, q_t)$ equal to the width of the knowledge set $K_t$ in the direction $q_t$. Importantly, this choice of $L$ is an efficiently computable (in terms of $q_t$ and the knowledge set $K_t$) upper-bound on the loss, which will prove important in the following section (in general, we will want both $\Phi$ and $L$ to be efficiently computable in order to efficiently carry out the reduction in Section \ref{sec:multi-reduction}).

Note also that such a potential immediately gives a way to bound the total loss of an algorithm independently of $T$; in particular, summing the inequality $\Potential(S_t) - \Potential(S_{t+1}) \geq L(S_t, q_t)$ over all $t$ gives that the total loss is at most $\Potential(S_0)$. We call the value $\Potential(S_0)$ the \textit{initial potential} of the algorithm $\A$. 

Similar potential-based arguments are used in \cite{liu2020optimal} and \cite{Intrinsic18} to give $T$-independent total loss bounds for the problem of contextual search. Unsurprisingly, these arguments can be extended (via Theorem \ref{thm:csearch_reduction}) to apply to the problem of learning nearest neighbors as well.

\begin{theorem} \label{thm:two-point}
Fix an $\alpha > 0$. When $k=2$, there exists a \emph{potential-based} algorithm for learning nearest neighbor partitions under the similarity metric $\Loss(x, y) = -\inner{x}{y}$ that incurs total loss at most $O(\alpha^{-2}d\log d)$ (independent of the time horizon $T$).
\end{theorem}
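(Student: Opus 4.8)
Since Theorem~\ref{thm:two-point} differs from Corollary~\ref{cor:two-point} only in the added assertion that the algorithm can be taken to be a \emph{potential-based} algorithm (Definition~\ref{defn:potential-algorithm}), the plan is to reopen the proof of Corollary~\ref{cor:two-point} and exhibit the witnessing potential. That algorithm is the reduction of Theorem~\ref{thm:csearch_reduction} run on top of the contextual search algorithm $\A$ of \cite{liu2020optimal} that certifies Theorem~\ref{thm:contextual_search}, so the first step is to record that $\A$ is itself potential-based as an algorithm for contextual search. Concretely: $\A$ maintains a convex knowledge set for its hidden vector; there is a potential $\Potential_{\A}$ on convex sets that is monotone non-increasing under inclusion; on a query $q_t$ the quantity $\width(K, q_t)^{\alpha}$ (for the current set $K$) upper bounds the loss $\A$ can incur — for $\alpha = 1$ this is exactly the width, matching the discussion after Definition~\ref{defn:potential-algorithm} — and on every round in which $\A$ cuts its set it decreases $\Potential_{\A}$ by at least that quantity; the initial value satisfies $\Potential_{\A}(\Ball_d) = O(\alpha^{-2} d \log d)$, which is the bound of Theorem~\ref{thm:contextual_search}; and $\A$ never guesses a value of $\inner{q_t}{\cdot}$ outside the range attained over its current set (a dominated move). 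All of this is contained in the analysis recalled in Appendix~\ref{app:contextual_search}; the only new work is to package it as a structural lemma rather than a bare regret bound.

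I would then transport this structure through the reduction. Inside the reduction $\A$ is fed a query and a piece of feedback only on rounds where $\A'$ guesses the sign of $\inner{q_t}{w}$ incorrectly, where $w = x_1 - x_2$; on all other rounds the state of $\A$, and hence the knowledge set $K_t$ it maintains, is unchanged. Set $\Potential(S_t) := \Potential_{\A}(K_t)$ and $L(S_t, q_t) := \width(K_t, q_t)^{\alpha}$; both are efficiently computable from $K_t$ and $q_t$ by linear programming, as required for the reduction of Section~\ref{sec:multi-reduction}, and the constant-factor rescaling embedding the domain of $w = x_1 - x_2$ into $\Ball_d$ is absorbed into the $O(\cdot)$. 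For the loss bound, any centers consistent with $S_t$ satisfy $x_1 - x_2 \in K_t$ and $\abs{\Loss(q_t, x_1) - \Loss(q_t, x_2)} = \abs{\inner{q_t}{x_1 - x_2}}$, so it suffices that $\width(K_t, q_t) \ge \max_{w' \in K_t} \abs{\inner{q_t}{w'}}$; this holds exactly when $K_t$ meets both halfspaces $\inner{q_t}{\cdot} > 0$ and $\inner{q_t}{\cdot} < 0$, which must be the case on any round where $\A'$ errs (otherwise $\A$, guessing within the range of $\inner{q_t}{\cdot}$ over $K_t$, would have produced the correct sign).

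The three bullets of Definition~\ref{defn:potential-algorithm} then follow. Monotonicity holds because $K_t$ only shrinks and $\Potential_{\A}$ is monotone. The loss-bound property is the computation above on rounds where $\A'$ errs, and is trivial on the remaining rounds, where $\A'$ plays the correct label and incurs no loss. For the potential-drop property, on a round where $\A'$ errs the cut $\A$ applies to $K_t$ decreases $\Potential_{\A}$ by at least $\width(K_t, q_t)^{\alpha} = L(S_t, q_t)$, which is precisely $\A$'s per-round guarantee. Summing $\Potential(S_t) - \Potential(S_{t+1}) \ge L(S_t, q_t)$ over all rounds yields total loss at most $\Potential(S_0) = \Potential_{\A}(\Ball_d) = O(\alpha^{-2} d \log d)$, independent of $T$.

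The step I expect to be the actual obstacle is the structural lemma about $\A$: one has to open the contextual search algorithm of \cite{liu2020optimal} (recalled in Appendix~\ref{app:contextual_search}) and verify that its analysis furnishes a potential with a genuine \emph{per-round} decrease lower-bounded by the query-direction width to the power $\alpha$, rather than only a telescoping inequality that holds in aggregate; packaging the $O(\alpha^{-2} d\log d)$ contextual search bound in exactly this form (including its behaviour under raising the loss to the power $\alpha$) is the technical heart. A secondary, purely bookkeeping, point is confirming that the constant-factor rescaling embedding $w = x_1 - x_2$ into $\Ball_d$ does not interact badly with the exponent $\alpha$.
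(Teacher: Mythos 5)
Your proposal is correct and takes essentially the same approach as the paper: Appendix~\ref{app:potential-based} reuses the contextual-search potential $\Phi(K_t) = \sum_i 2^{-\alpha i}\log\left(\Vol(K_t+z_i\Ball_d)/\Vol(z_i\Ball_d)\right)$ from Appendix~\ref{app:contextual_search}, sets $L(S_t,q_t)=\width(K_t;q_t)^\alpha$, verifies monotonicity trivially, and derives the per-round drop $\Phi(K_t)-\Phi(K_{t+1})\ge(\log\frac43)\width(K_t;q_t)^\alpha$ on error rounds directly from Lemma~\ref{lem:const_split}, exactly the ``structural lemma'' you flag as the technical heart. One small note: the loss-bound condition of Definition~\ref{defn:potential-algorithm} must hold on \emph{every} round (it quantifies over all consistent centers, not the actual loss incurred), so ``trivial when $\A'$ guesses correctly'' is not the right justification; the clean reason, implicit in the paper as well, is that $0\in K_t$ always (the initial set $2\Ball_d$ contains it and every halfspace cut passes through the origin), so $K_t$ always straddles the hyperplane $\inner{q_t}{\cdot}=0$ and hence $\width(K_t;q_t)\ge\max_{w\in K_t}\abs{\inner{q_t}{w}}$ unconditionally.
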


The proof of Theorem \ref{thm:two-point} can be found in Appendix \ref{app:potential-based}.

\subsection{From Two to Many Centers}
\label{sec:multi-reduction}

We will now show how to use any potential-based algorithm for learning nearest neighbor partitions with two centers to construct an algorithm that can learn nearest neighbor partitions with any number of centers. 

Our main result is the following:
\begin{theorem}
\label{thm:two-to-many}
Let $\A$ be a potential-based algorithm for learning nearest neighbor partitions with two centers that has an initial potential (and thus a total loss) of at most $R$. Then there exists a randomized algorithm $\mathcal{A}'$ for learning nearest neighbor partitions for any $k \ge 2$ whose total expected loss is at most $O(k^2R)$. 
\end{theorem}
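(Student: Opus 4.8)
\emph{The plan} is to run $\binom{k}{2}$ independent copies of the given two-center potential-based algorithm $\A$ — one copy $\A_{ij}$ for each unordered pair $i<j$ — where $\A_{ij}$ is charged with learning, for each incoming query, which of the two centers $\prototype i,\prototype j$ is more similar under the fixed metric $\Loss$. Write $S^{ij}_t$ for the state of $\A_{ij}$ entering round $t$, let $\Phi$ and $L$ be the potential and loss-bound functions of $\A$ from Definition~\ref{defn:potential-algorithm}, and define the global potential $\Psi_t=\sum_{i<j}\Phi(S^{ij}_t)$, so that $\Psi_0\le\binom{k}{2}R$ and $\Psi_t$ is non-increasing. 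When the query $\query t$ arrives, each deterministic $\A_{ij}$ produces a predicted winner $P_{ij}\in\{i,j\}$ together with the (efficiently computable) loss bound $L_{ij}:=L(S^{ij}_t,\query t)\ge\max_{x_i,x_j\text{ consistent}}\lvert\Loss(\query t,x_i)-\Loss(\query t,x_j)\rvert^{\alpha}$; set $L_{ii}=0$. These $\binom{k}{2}$ predictions define a tournament on $[k]$: say that $m$ \emph{beats} $i$ when $P_{im}=m$, and let $W(i)=\{\,m:m\text{ beats }i\,\}$.

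\emph{The reduction.} Our algorithm $\A'$ will, each round, choose a distribution $p$ over $[k]$ satisfying
\begin{equation}\label{eq:lp-plan}
\sum_{\ell=1}^{k}p_\ell\,L_{i\ell}\;\le\;2\sum_{m\in W(i)}L_{im}\qquad\text{for every }i\in[k],
\end{equation}
predict $I_t\sim p$, and then, once the true label $I^*_t$ is revealed, update each $\A_{I^*_t m}$ with $m\in W(I^*_t)$ by the (now definitively known) feedback that $\prototype{I^*_t}$ is the more similar of $\prototype{I^*_t},\prototype m$ — which is exactly the label $\A_{I^*_t m}$ predicted incorrectly — leaving every other sub-algorithm's state unchanged. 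Condition on the history and on $I^*_t=i$. Predicting label $\ell$ costs $\bigl(\Loss(\query t,\prototype\ell)-\Loss(\query t,\prototype i)\bigr)^{\alpha}\le L_{i\ell}$, since the true centers are consistent with $\A_{i\ell}$'s state; hence $\A'$'s conditional expected loss this round is at most $\sum_\ell p_\ell L_{i\ell}$. On the other hand each $\A_{im}$ with $m\in W(i)$ guessed its label incorrectly, so by the third clause of Definition~\ref{defn:potential-algorithm} its potential drops by at least $L(S^{im}_t,\query t)=L_{im}$, and since these updates do not depend on $I_t$ we get $\Psi_t-\Psi_{t+1}\ge\sum_{m\in W(i)}L_{im}$ deterministically. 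Combining with \eqref{eq:lp-plan}, the conditional expected loss is at most $2(\Psi_t-\Psi_{t+1})$; summing over rounds and telescoping gives total expected loss at most $2\Psi_0\le 2\binom{k}{2}R=O(k^2R)$.

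\emph{Feasibility of \eqref{eq:lp-plan} — the main obstacle.} What remains, and what I expect to be the crux, is that a distribution satisfying \eqref{eq:lp-plan} always exists; I would argue via LP duality. With $A_{i\ell}=L_{i\ell}$ and $b_i=\sum_{m\in W(i)}L_{im}$, \eqref{eq:lp-plan} is feasible over the simplex iff $\max\{\mathbf 1^{\top}p:p\ge 0,\ Ap\le 2b\}\ge 1$, which by strong duality holds iff every $y\ge 0$ with $\sum_i y_iL_{i\ell}\ge 1$ for all $\ell$ satisfies $b^{\top}y\ge\tfrac12$. For such a $y$, averaging its defining inequalities with weights $y_\ell/\|y\|_1$ and using $L_{ii}=0$ gives $\sum_{i<j}y_iy_jL_{ij}\ge\|y\|_1/2$. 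Crucially, because the orientation is a \emph{tournament}, every unordered pair $\{i,j\}$ has exactly one ``loser'', so
\[
b^{\top}y=\sum_{i<j}L_{ij}\,y_{\mathrm{loser}(i,j)}\ \ge\ \sum_{i<j}L_{ij}\min(y_i,y_j)\ \ge\ \sum_{i<j}L_{ij}\frac{y_iy_j}{\|y\|_1}\ \ge\ \tfrac12,
\]
where the penultimate step uses $\min(y_i,y_j)\ge y_iy_j/(y_i+y_j)\ge y_iy_j/\|y\|_1$. (If no dual-feasible $y$ exists, e.g.\ when all $L_{ij}=0$, the primal is feasible for free.) This establishes \eqref{eq:lp-plan} with the constant $2$; a sharper accounting should let one take the constant to be $1$, but this is immaterial for the stated bound.

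Finally, since $\Phi$ and $L$ are efficiently computable and the linear program \eqref{eq:lp-plan} has size $\poly(k)$, $\A'$ is efficient whenever $\A$ is; and as the argument used nothing about $\Loss$ beyond the abstract interface of Definition~\ref{defn:potential-algorithm}, combining with Theorem~\ref{thm:two-point} yields the concrete bound $O(\alpha^{-2}k^2d\log d)$ for inner-product similarity.
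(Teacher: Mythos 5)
Your proof is correct, and it takes a genuinely different route from the paper's. Both proofs run the same $\binom{k}{2}$ sub-algorithms and solve a small LP each round to choose a distribution over labels, but they differ in two important ways. First, the \emph{update rule}: the paper updates only the single sub-algorithm $\A_{I_t I^*_t}$ corresponding to the (random) predicted label and the revealed true label, so that the potential drop is a random variable whose expectation must be compared with the expected loss; you instead update \emph{every} sub-algorithm $\A_{I^*_t m}$ with $m\in W(I^*_t)$ that (deterministically) made a wrong call, so the potential drop $\sum_{m\in W(I^*_t)}L_{I^*_t m}$ is deterministic given the true label. This lets you avoid entirely the matrix $D$ of realized potential decreases that the paper introduces. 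Second, the \emph{feasibility argument}: the paper defines $M=D-\tfrac12 L$, observes $M+M^T\ge 0$ entrywise (since $\max(D_{ij},D_{ji})\ge L_{ij}$), and shows via a short Farkas-style argument that $M+M^T\ge0$ implies $Mv\ge 0$ for some $v\in\Delta_k$; you instead prove feasibility of a different LP (depending only on $L$ and the tournament orientation) via LP duality combined with the chain $y_{\mathrm{loser}(i,j)}\ge\min(y_i,y_j)\ge y_iy_j/\|y\|_1$ and the weighted-average inequality $\sum_{i<j}y_iy_jL_{ij}\ge\|y\|_1/2$. Your argument is slightly more intricate at the feasibility step but cleaner at the update/bookkeeping step, and both yield the same constant $2\binom{k}{2}R=O(k^2R)$. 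One small remark: your closing conjecture that ``sharper accounting should let one take the constant to be $1$'' is plausible but not established by what you wrote, and the paper also settles for the factor $2$; this does not affect the stated $O(k^2R)$ bound.
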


Similar to many existing methods for multiclass classification (``all-to-all'' methods), we will accomplish this by running one instance of our two-center algorithm $\A$ for each of the $\binom{k}{2}$ pairs of centers. However, instead of using a simple majority voting scheme to choose our eventual label, we will use the \textit{potentials} of these $\binom{k}{2}$ algorithms to construct a distribution over centers that we will sample from. 

More specifically, our algorithm will work as follows. As mentioned, each round, based on the current query $q_t$ and the potentials of the $\binom{k}{2}$ sub-algorithms, we will construct a distribution $v \in \Delta([k])$ over the $k$ centers (we will describe how we do this shortly). We then sample a label $i$ from this distribution $v$ and guess it as the label of $q_t$. If we then learn that the correct label was in fact $j$, we update the sub-algorithm for the pair $(i, j)$ with this information. We do not update any of the other sub-algorithms (in particular, if we guess the label correctly, we do not update any of the sub-algorithms). 

To construct our distribution $v$, we will choose a distribution $v$ that has the property that our expected loss in each round is at most the expected decrease in the total potential over all $\binom{k}{2}$ sub-algorithms. This will guarantee that the total expected loss of our algorithm is bounded above by the total starting potential of all our $\binom{k}{2}$ sub-algorithms. To be more precise, define the following variables:

\begin{enumerate}
    \item  Let $\SubAlg_{ij}$ denote the two-center sub-algorithm for the labels $i$ and $j$. Let $S_{ij}^{(t)}$ be the state of $\SubAlg_{ij}$ at round $t$, and let $\Potential_{ij}^{(t)} = \Potential(S_{ij}^{(t)})$ be the potential of $\SubAlg_{ij}$ at round $t$. Define $\Potential^{(t)} = \sum_{(i, j)}\Potential_{ij}^{(t)}$ to be the total of all the potentials belonging to sub-algorithms $\SubAlg_{ij}$. 
    \item As in Definition \ref{defn:potential-algorithm}, let $L_{ij}^{(t)} = L(S_{ij}^{(t)}, q_t)$ denote an upper-bound on the loss incurred by the algorithm $\SubAlg_{ij}$ in round $t$. 
    
    \item  Let $D_{ij}^{(t)}$ denote the reduction in the potential of $\SubAlg_{ij}$ when $i$ is the correct label. In other words, $D_{ij}^{(t)} = \Potential_{ij}^{(t)} - \Potential_{ij}^{(t+1)}$ when $I^{*}_t=i$. Note that $D_{ij}^{(t)}$ is \textit{not} equal to $D_{ji}^{(t)}$; it is possible for the potential of $\SubAlg_{ij}$ to decrease a lot more upon learning that a point $q_t$ has label $i$ than learning it has label $j$ (geometrically, this corresponds to different halves of the knowledge set being maintained by $\SubAlg_{ij}$).
    \item Finally, define $M^{(t)} \triangleq D^{(t)} - \frac12 \cdot L^{(t)}$, where here we are treating $M^{(t)}$ and $L^{(t)}$ as $n$-by-$n$ matrices. Observe that we can efficiently compute the values $L_{ij}^{(t)}$ and $D_{ij}^{(t)}$ and hence the value of $M_{ij}^{(t)}$ from $q_t$ and the knowledge set of $\SubAlg_{ij}$ at the beginning of round $t$.
\end{enumerate}

From here on, we will fix a round $t$ and suppress all associated superscripts. Assume that in this round the correct label for the query point is $r$. Now, if we sample a label from a distribution $v$, then note that the expected loss we sustain is $\sum_{i}L_{ri}v_i = e_{r}^{T}Lv$. Similarly, the expected decrease in $\Potential$ is $\sum_{i}D_{ri}v_i = e_{r}^{T}Dv$. If it was guaranteed to be the case that $e_{r}^{T}Dv \geq e_{r}^{T}Lv$, then this would in turn guarantee that our total expected loss is at most the total starting potential. 


It follows that if we can find a distribution $v$ that satisfies $Mv \geq 0$, we are in luck. If such a distribution exists, we can find it by solving an LP. This is in fact how we find the distribution $v$, and this concludes the description of the algorithm. To prove correctness of the algorithm, it suffices to show that such a distribution always exists.

To do so, note that from the third point in Definition \ref{defn:potential-algorithm}, we know that for each pair of labels $(i, j)$,
\begin{align}
    D_{ij} + D_{ji} \geq L_{ij}   \tag{P1} \label{P1}
\end{align}

(In fact, Definition \ref{defn:potential-algorithm} tells us that $\max(D_{ij}, D_{ji}) \geq L_{ij}$, since if $\SubAlg_{ij}$ predicts $j$, we have that $D_{ij} \geq L_{ij}$, and likewise if $\SubAlg_{ij}$ predicts $i$, we have that $D_{ji} \geq L_{ij}$). We can rewrite \eqref{P1} in the form $M + M^{T} \geq 0$. The following lemma shows that if $M + M^{T} \geq 0$, then there must exist a distribution $v$ satisfying $Mv \geq 0$, whose proof we defer to the appendix.

\begin{lemma}
\label{lem:distribution-exists}
    Given any matrix $M \in \R^{n\times n}$ such that $M + M^T \geq 0$, there exists 
    a point $v \in \Delta_n$ such that $Mv \geq 0$. 
\end{lemma}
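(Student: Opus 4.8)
The plan is to recast the statement as a claim about the value of a zero-sum matrix game and invoke von Neumann's minimax theorem. Consider the bilinear payoff $f(u,v) = u^\top M v$ defined on $\Delta_n \times \Delta_n$. Since a linear functional is minimized (resp.\ maximized) over the simplex at a vertex, we have $\min_{u \in \Delta_n} u^\top M v = \min_i (Mv)_i$ and $\max_{v \in \Delta_n} u^\top M v = \max_j (M^\top u)_j$. Because $\Delta_n$ is compact and $v \mapsto \min_i (Mv)_i$ is continuous, the existence of some $v \in \Delta_n$ with $Mv \geq 0$ is equivalent to the scalar inequality $\max_{v \in \Delta_n} \min_i (Mv)_i \geq 0$.

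By the minimax theorem applied to the bilinear form $f$ on the compact convex sets $\Delta_n \times \Delta_n$,
\[
\max_{v \in \Delta_n} \min_{u \in \Delta_n} u^\top M v \;=\; \min_{u \in \Delta_n} \max_{v \in \Delta_n} u^\top M v \;=\; \min_{u \in \Delta_n} \max_j (M^\top u)_j \,.
\]
So it suffices to prove that for every $u \in \Delta_n$ there is at least one coordinate $j$ with $(M^\top u)_j \geq 0$.

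This is exactly where the hypothesis $M + M^\top \geq 0$ is used. For any $u \in \Delta_n$ we have $u \geq 0$ entrywise, and $M + M^\top$ has nonnegative entries, so $u^\top (M + M^\top) u \geq 0$, hence $u^\top M u \geq 0$. Now observe that $u^\top M u = \sum_j u_j (M^\top u)_j$ is a convex combination of the numbers $(M^\top u)_j$ with weights $u_j$; since this average is nonnegative, at least one term $(M^\top u)_j$ with $u_j > 0$ must be nonnegative. Therefore $\max_j (M^\top u)_j \geq 0$, which by the displayed chain of equalities yields $\max_{v \in \Delta_n} \min_i (Mv)_i \geq 0$, and compactness then produces an actual maximizer $v^\star \in \Delta_n$ with $M v^\star \geq 0$.

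The only genuine content is the averaging observation in the last paragraph; everything else is repackaging via the minimax theorem, which I will cite rather than prove. An alternative, self-contained route would be a Brouwer fixed-point argument — map $v \in \Delta_n$ to a renormalization of $v$ that reweights coordinates by $\max(0,(Mv)_i)$ and show a fixed point must satisfy $Mv \geq 0$ — but the game-theoretic proof is shorter and I do not anticipate any obstacle in it. I will also note in passing that this lemma is precisely what certifies feasibility of the linear program used to construct the distribution $v$ in Section~\ref{sec:multi-reduction}.
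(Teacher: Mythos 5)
Your proof is correct, and it takes a genuinely different (though of course closely related) route from the paper's. The paper argues directly through LP duality: it posits the feasibility LP for $v$, assumes infeasibility, invokes strong duality to obtain a dual ray $y \geq 0$ with $M^\top y \leq -\mathbf{1}$, and then observes that $M y = (M + M^\top)y - M^\top y \geq \mathbf{1}$, so that $y/\sum_i y_i$ is primal-feasible after all --- a contradiction. You instead recast the claim as a statement that the zero-sum game with payoff $u^\top M v$ has nonnegative value, and reduce everything to the single observation that $u^\top M u = \tfrac{1}{2} u^\top (M+M^\top) u \geq 0$ for every $u \in \Delta_n$, so some coordinate of $M^\top u$ must be nonnegative; von Neumann's minimax theorem then hands you the desired $v$. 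The two proofs rest on the same algebraic fact (nonnegativity of $M + M^\top$ controls the sign of a certain product), but package it differently: the paper's version is self-contained modulo LP duality and produces the feasible $v$ constructively from a hypothetical dual certificate, while yours is shorter and more conceptual, essentially saying that since the column player can always mimic the row player and get nonnegative payoff on the diagonal, the game's value is nonnegative. Your version also makes transparent the connection to the classical fact that skew-symmetric games have value zero, with $M + M^\top \geq 0$ being the natural one-sided relaxation. Both are clean; the game-theoretic framing is arguably better aligned with the way the lemma is actually used (sampling a label from the distribution $v$), while the LP framing makes explicit the computational claim that $v$ can be found by solving a linear program, which the paper relies on for efficiency.
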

\begin{proof}
To show the existence of such a distribution $v$, we will show that 
the following linear program has a solution. 
\begin{align*}
 \min 0 \\
   Mv &\ge 0 \\
   \sum_i v_i &= 1 \\
  v &\geq 0
\end{align*}
If the above program has no solution then by the Strong Duality Theorem (see~\cite{matousek2007understanding} section 6.1), we  know that the dual program below is unbounded. 
(Since the dual has a feasible solution $y=0,z=0$, we know that it must be unbounded).  In particular, for any value $z>0$, there exists a corresponding solution to $y$ to the dual program.
\begin{align*}
\max z  \\
    M^T y  + z\mathbf{1} &\leq 0 \\
    y &\geq 0
\end{align*}
Let $y$ be any solution to the above dual with $z=1$. Since  $M + M^T$ has all non-negative entries, we know that $(M+M^T)y \geq 0$. Combining this with the fact that $M^Ty \leq -\mathbf{1}$, we get that $My \geq 0$. This contradicts the fact that there is no solution to the primal since $\frac{y}{\sum_{i} y_i}$ is a feasible point. 
\end{proof}

With this, it is straightforward to finish off the proof of Theorem \ref{thm:two-to-many}.

\begin{proof}[Proof of~\Cref{thm:two-to-many}]
By Lemma \ref{lem:distribution-exists}, we know that each round we can find a distribution $v \in \Delta([k])$ satisfying $Mv \geq 0$. By the previous discussion, it follows that if we always sample from this distribution, the total expected loss will be at most twice the starting total potential, $2\Potential^{(1)}$. But note that $\Potential^{(1)}$ is just the sum of the starting potentials $\Potential^{(1)}_{ij}$ of all the sub-algorithms, and is thus at most $\binom{k}{2}R$. It follows that the total loss of our new algorithm is at most $O(k^2R)$. 
\end{proof}

\begin{corollary}\label{cor:k-point}
Fix an $\alpha > 0$. There exists a randomized algorithm for learning nearest neighbor partitions with the inner-product similarity metric that incurs total loss at most $O(\alpha^{-2}k^2d\log d)$.
\end{corollary}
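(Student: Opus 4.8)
The plan is to derive this corollary by composing the two main ingredients built up in this section: the potential-based two-center algorithm of \Cref{thm:two-point} and the all-pairs reduction of \Cref{thm:two-to-many}. There is no genuinely new technical content here; the work is simply in checking that the hypotheses line up and tracking the parameters.

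First I would invoke \Cref{thm:two-point} with the given exponent $\alpha$. This produces a deterministic, \emph{potential-based} algorithm $\A$ for the $k=2$ case of learning a nearest neighbor partition under $\Loss(x,y) = -\inner{x}{y}$ with loss raised to the power $\alpha$, whose initial potential $\Potential(S_0)$ — and hence whose total loss — is at most $R := O(\alpha^{-2} d \log d)$. In particular, $\A$ comes equipped with a potential function $\Potential$ and a loss-bound function $L$ satisfying the three conditions of \Cref{defn:potential-algorithm}; as noted after that definition, $L(S_t,q_t)$ is the width of the knowledge set $K_t$ in direction $q_t$, which is efficiently computable, and $\Potential$ is efficiently computable as well — exactly the properties the reduction of \Cref{sec:multi-reduction} requires.

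Next I would feed $\A$ into \Cref{thm:two-to-many}. Since $\A$ is a potential-based two-center algorithm with initial potential at most $R$, \Cref{thm:two-to-many} yields a randomized algorithm $\A'$ for learning nearest neighbor partitions with any number $k \ge 2$ of centers whose total expected loss is at most $O(k^2 R)$. Substituting $R = O(\alpha^{-2} d \log d)$ gives a total expected loss of $O(\alpha^{-2} k^2 d \log d)$, as claimed. Efficiency follows because each of the $\binom{k}{2}$ sub-algorithms is efficient (the contextual search algorithm of \cite{liu2020optimal} underlying \Cref{thm:two-point} runs in polynomial time), and in each round $\A'$ additionally only needs to assemble the matrices $L^{(t)}, D^{(t)}, M^{(t)}$ from the current knowledge sets and solve one linear program of size $O(k^2)$, all in polynomial time.

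The only point that warrants any care — the ``main obstacle,'' such as it is — is ensuring that the reduction in \Cref{thm:two-to-many} is applied to an algorithm that genuinely satisfies \Cref{defn:potential-algorithm}, in particular that inequality \eqref{P1} (namely $\max(D_{ij}, D_{ji}) \ge L_{ij}$, hence $D_{ij} + D_{ji} \ge L_{ij}$) holds for the two-center algorithm supplied by \Cref{thm:two-point}. This is precisely the third bullet of \Cref{defn:potential-algorithm} and is discharged in the proof of \Cref{thm:two-point} in Appendix~\ref{app:potential-based}; given that, \Cref{lem:distribution-exists} furnishes the required distribution $v$ in each round and the corollary follows.
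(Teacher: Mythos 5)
Your proposal is correct and matches the paper's intended derivation exactly: Corollary~\ref{cor:k-point} is obtained by plugging the potential-based two-center algorithm of Theorem~\ref{thm:two-point} (with initial potential $R = O(\alpha^{-2} d \log d)$) into the all-pairs reduction of Theorem~\ref{thm:two-to-many}, yielding total expected loss $O(k^2 R)$. Your additional remarks on efficient computability of $\Potential$ and $L$ and on verifying Definition~\ref{defn:potential-algorithm} are accurate and align with the discussion surrounding those results.
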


\begin{remark}
Why do simple algorithms (such as a majority voting scheme) fail to work in our setting? In fact, it is possible to get a simple majority vote (breaking ties arbitrarily) to work if we are given additional feedback from the algorithm -- specifically, the ranking of all $k$ distances $\Loss(q_t, \prototype{i})$. With this information, it is possible to update all $\binom{k}{2}$ sub-instances each round, and charge any regret we sustain to an appropriate sub-instance. But if we only receive the true label of $q_t$, we no longer have the information to update every sub-instance, and instead have to do the more subtle amortization described above.

\end{remark}





\section{Learning Nearest Neighbor Partitions Under $L^p$ Similarity}\label{sec:pnorms}
In this section, we will discuss generalizations of our previous results for the inner-product similarity metric to general $\lp$ spaces.  We will primarily deal with the case when there are only two unknown points ($k = 2$) as the general reduction in Section~\ref{sec:multi-reduction} will allow us to reduce from the $k$-point case to the $2$-point case.
 
The general approach for the algorithms in this section is to apply some sort of kernel mapping so that inequalities of the form $\pnorm{X - \prototype{1}}{p} \le \pnorm{X - \prototype{2}}{p}$ become linear constraints. Once we linearize the problem, we can apply our earlier algorithms for the inner-product similarity along with tools from \cite{liu2020optimal}. 

Similar to the previous sections, we will assume that the hidden points $x_1, x_2$ are in the $L^2$ unit ball $B_d$.  This is equivalent to assuming that the hidden points are in the $L^p$ unit ball (which may be a more natural setting since we are working with $L^p$ distances) up to a $\sqrt{d}$ factor since we can simply rescale the $L^2$-ball to contain the $L^p$-ball.

\subsection{$p$-Norms for Even Integers $p$}\label{sec:special-pnorms}
When $p$ is an even integer, there is a kernel mapping that exactly linearizes the problem.  To see this, note that $|(x-a)|^p = (x- a)^p$ which is a polynomial in $x$ so it suffices to consider the polynomial kernel $(1, x, \dots , x^p)$.  In $d$ dimensions, we can simply apply this kernel map coordinate-wise.  After applying these kernel maps, we will be able to apply Corollary \ref{cor:k-point}.  Our main theorem for even integer $p$ is stated below.

\begin{theorem}\label{thm:special-pnorm-kpoint}
  For even integer $p$, there is an algorithm for learning nearest neighbor partitions under the $L^p$ similarity metric that incurs expected total loss at most $O(p^4d^{(p+1)/p}k^2 \log d) = O(k^2\cdot\poly(p, d))$.
\end{theorem}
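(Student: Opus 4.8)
The plan is to reduce the $L^p$ nearest-neighbor problem (for even integer $p$) to the inner-product nearest-neighbor problem of Corollary \ref{cor:k-point} via an explicit polynomial kernel, and then bound the blow-up in dimension, radius, and the power $\alpha$ that the kernel introduces. First I would set up the kernel: since $p$ is an even integer, for a single coordinate we have $|x - a|^p = (x-a)^p = \sum_{j=0}^{p} \binom{p}{j} x^j (-a)^{p-j}$, which is a polynomial of degree $p$ in $x$ with coefficients that are polynomials in $a$. So I define $\phi(x) = (1, x, x^2, \dots, x^p) \in \R^{p+1}$ applied coordinate-wise, giving a feature map $\Phi : \Domain \to \R^{d(p+1)}$. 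Summing over the $d$ coordinates, $\pnorm{q - x_i}{p}^p = \la \Phi(q), \psi(x_i)\ra$ for an appropriate ``dual'' feature vector $\psi(x_i)$ whose entries are the monomials in the coordinates of $x_i$ (with binomial-coefficient weights). Thus the inequality $\pnorm{q - x_1}{p}^p \le \pnorm{q - x_2}{p}^p$ — which is exactly the condition defining the regions for $L^p$ similarity, since $t \mapsto t^{1/p}$ is monotone on $\R_{\ge 0}$ — becomes the \emph{linear} inequality $\la \Phi(q), \psi(x_1) - \psi(x_2)\ra \le 0$. So running the inner-product algorithm in the lifted space correctly identifies which region $q$ lies in.

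Next I would track the loss. The inner-product algorithm of Corollary \ref{cor:k-point} controls $\sum_t |\la \Phi(q_t), \psi(x_{I_t}) - \psi(x_{I_t^*})\ra|^\alpha = \sum_t \big| \pnorm{q_t - x_{I_t}}{p}^p - \pnorm{q_t - x_{I_t^*}}{p}^p \big|^\alpha$, whereas the loss we actually care about is $\sum_t \big| \pnorm{q_t - x_{I_t}}{p} - \pnorm{q_t - x_{I_t^*}}{p} \big|$. To relate the two, write $A = \pnorm{q_t - x_{I_t}}{p}$ and $B = \pnorm{q_t - x_{I_t^*}}{p}$, both bounded by some $\rho$ (a constant depending on the radii of $\Domain$ and the centers, roughly $O(1)$ after the normalization to $\Ball_d$). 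The elementary inequality $|A - B| \le |A^p - B^p| / (A^{p-1} + \dots + B^{p-1})$... actually the cleaner direction: $|A^p - B^p| = |A-B|\cdot|A^{p-1} + A^{p-2}B + \dots + B^{p-1}| \le |A-B| \cdot p \rho^{p-1}$, so $|A - B| \ge |A^p - B^p|/(p\rho^{p-1})$ — that's the wrong way. I want an \emph{upper} bound on $|A-B|$ in terms of $|A^p-B^p|$, which requires a \emph{lower} bound on the factor $A^{p-1}+\dots+B^{p-1}$, and that factor can be $0$. The fix is the standard one: use the loss raised to the power $\alpha = 1/p$. Since $|A-B| \le |A^p - B^p|^{1/p}$ for $A, B \ge 0$ (a consequence of subadditivity of $t \mapsto t^{1/p}$), applying Corollary \ref{cor:k-point} with $\alpha = 1/p$ in the lifted dimension $D = d(p+1)$ gives exactly a bound on $\sum_t |A_t - B_t|$, which is the desired $L^p$ loss. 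I also need to rescale the lifted points into $\Ball_D$: the feature vectors $\Phi(q)$ and $\psi(x_i)$ have norms at most $\poly(p, d)$, so after normalizing by this factor we incur a corresponding multiplicative $\poly(p,d)$ blow-up in the loss bound.

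The main obstacle — and the place that needs care — is getting the quantitative bound to come out as $O(p^4 d^{(p+1)/p} k^2 \log d)$ rather than something exponential in $p$. Plugging $\alpha = 1/p$ into Corollary \ref{cor:k-point} already contributes a factor $\alpha^{-2} = p^2$. The dimension in the lifted space is $D = d(p+1)$, contributing $D \log D = d(p+1)\log(d(p+1))$. The normalization factor: each coordinate-block of $\Phi(q)$ is $(1, q^{(i)}, \dots, (q^{(i)})^p)$ with $|q^{(i)}| \le 1$, so its squared norm is at most $p+1$, and $\norm{\Phi(q)}^2 \le d(p+1)$, i.e. $\norm{\Phi(q)} \le \sqrt{d(p+1)}$; similarly $\psi(x_i)$ has entries that are products of binomial coefficients and coordinates, and a careful count (using $\sum_j \binom{p}{j}^2 = \binom{2p}{p} \le 4^p$ — here's where a naive bound would explode) bounds $\norm{\psi(x_i)}$. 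The trick to avoid the $4^p$ is to \emph{not} expand: instead observe directly that $\la \Phi(q), \psi(x_i)\ra = \sum_{\ell} (q^{(\ell)} - x_i^{(\ell)})^p$ and bound $\norm{\psi(x_i)}$ by relating it to evaluations $\sum_\ell (q^{(\ell)} - x_i^{(\ell)})^p$ over a small net of $q$'s, or more simply by noting each entry of $\psi$ is at most $\binom{p}{j} \le 2^p$... which still gives $2^p$. So the real content is that one should only normalize $\Phi(q)$ (whose norm is genuinely $\sqrt{d(p+1)}$) and absorb the scaling of $\psi(x_i)$ differently — e.g. rescale each monomial $x^j$ by $\sqrt{\binom{p}{j}}$ on \emph{both} sides symmetrically, so $\Phi(q)$ picks up a $\sqrt{\binom{p}{j}}$ in coordinate $j$ and $\psi$ picks up a $1/\sqrt{\binom{p}{j}}$, keeping the inner product unchanged while balancing norms; then $\norm{\Phi(q)}^2 \le d\sum_j \binom{p}{j} = d 2^p$ — still exponential. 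Hence I expect the actual argument must use the lifting for \emph{$x-a$} directly (feature map on the \emph{difference}) combined with a smarter normalization, or accept a radius bound that, raised to the power $\alpha = 1/p$, tames the exponential: $\rho^{\alpha} = (2^p)^{1/p} = 2$, a constant! That is the key realization — the $\alpha = 1/p$ power that we needed anyway for the $L^p$-to-$L^p$-power conversion simultaneously converts the exponential-in-$p$ radius blow-up into an $O(1)$ factor. After chasing these constants, the bound assembles as $O(\alpha^{-2} D \log D) \cdot (\text{radius})^{2\alpha} \cdot k^2 = O(p^2 \cdot d(p+1)\log(dp) \cdot O(1) \cdot k^2)$, and folding the $(p+1)^{1/p}$-type factors into the stated $d^{(p+1)/p}$ and $p^4$ gives the claimed $O(p^4 d^{(p+1)/p} k^2 \log d)$. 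I would write the final step as: verify that the lifted instance is a valid inner-product nearest-neighbor instance (regions match), apply Corollary \ref{cor:k-point} with $\alpha = 1/p$ and dimension $d(p+1)$, and carefully bound the normalization constant so that raising it to the $1/p$ power yields only a $\poly(p,d)$ overhead.
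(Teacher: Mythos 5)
Your proposal follows the same approach as the paper: lift via the coordinate-wise polynomial kernel $(1, x, \dots, x^p)$ and its binomial dual, observe that even $p$ makes $\pnorm{y-z}{p}^p$ an exact inner product in the lifted space, run Corollary~\ref{cor:k-point} with $\alpha = 1/p$, and convert back using $A - B \le (A^p - B^p)^{1/p}$ for $A \ge B \ge 0$. The place where you were groping --- how to avoid a $2^p$-type blow-up in $\norm{\psi(x_i)}$ --- is handled in the paper by a slightly different normalization than the one you landed on: both kernel maps are evaluated on $y_\ell / p$ rather than $y_\ell$, so each entry of the dual map $F$ becomes $\binom{p}{j}\abs{a}^{p-j}$ with $\abs{a}\le 1/p$, which is at most $1/(p-j)! \le 1$; after the additional $1/\sqrt{pd}$ scaling both maps land directly in the unit ball, and the $p^p$ introduced by the internal rescaling is absorbed into the conversion factor $p^{(p+1)/p}d^{1/p}$ of equation~\eqref{eq:pnorm-to-ip}. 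Your observation that the exponent $\alpha = 1/p$ (already needed for the $L^p$-to-$L^p$-power conversion) \emph{also} tames any exponential-in-$p$ radius factor to $O(1)$ is correct and yields the same conclusion; you would simply bound $\norm{\Phi(q)}$, $\norm{\psi(x_i)}$ crudely, scale both into the unit ball, and note the rescaling costs only $(R_1 R_2)^{1/p} = O(d^{1/p})$. Both versions assemble to the stated $O(k^2\poly(p,d))$ bound; the paper's version is the cleaner bookkeeping, but the argument you sketched is sound.
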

The details of the proof are deferred to Appendix \ref{sec:appendix-special-pnorms}.

\begin{remark}
For the special case of $p=2$, the reduction is even more immediate (the kernel map needs only add a single dimension), and obtains a slightly tighter bound of $O(k^2d\log d)$. The reduction for this special case is summarized in Appendix \ref{app:euclidean}.
\end{remark}

\subsection{General $p$-Norms}\label{sec:general-pnorms}

Now we discuss how to deal with general $\lp$ norms.  The main difficulty here is that there is no kernel map that exactly linearizes the problem so instead we will have multiple kernel maps.   These kernel maps approximately linearize the problem at different scales, i.e. they have different output dimensions and as the output dimension grows, the problem can be more closely approximated by a linear one.  When we are given a query point, we choose the scale of approximation that we use based on estimating the maximum possible loss that we can incur.  By balancing the dimensionality and the approximation error to be at the same scale as the maximum possible loss, we may ensure that whenever we guess the label incorrectly, a certain potential function must decrease by an amount comparable to the loss that we incur.  Our main theorem is stated below.

\begin{theorem}\label{thm:general-p-norm-k-point}
Fix a $p > 2$. If all $k$ unknown centers are $\Delta$-separated in $L^p$ distance, there exists an algorithm for learning nearest neighbor partitions under the $\lp$ similarity metric that incurs total loss 
\[
\frac{k^2 \poly(d,p) }{\Delta} \cdot \left(\frac{1}{p-2}\right)^2 \,.
\]
\end{theorem}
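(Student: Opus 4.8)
The plan is to reduce, as usual, to the two-center case ($k=2$) via Theorem~\ref{thm:two-to-many}, so it suffices to construct a \emph{potential-based} algorithm for learning a nearest-neighbor partition with two centers $x_1, x_2$ under $L^p$ similarity, with initial potential $\frac{\poly(d,p)}{\Delta}\cdot(p-2)^{-2}$, and then invoke the $O(k^2 R)$ blow-up. The heart of the matter is therefore the two-point algorithm. Following the blueprint sketched before the theorem, I would not try to linearize $\pnorm{X-x_1}{p}^p - \pnorm{X-x_2}{p}^p$ exactly (impossible for non-even $p$), but instead approximate the scalar function $t \mapsto |t|^p$ on a bounded interval by a polynomial of degree $m$, coordinatewise; a polynomial kernel of degree $m$ then turns the separating condition into an \emph{approximate} linear constraint $\langle \phi_m(q), w\rangle \gtrless 0$ in $\R^{d(m+1)}$, where the error shrinks as $m$ grows. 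Concretely I would keep a family of such kernels $\phi_m$ indexed by the approximation scale $m$, each maintaining its own knowledge set $K_t^{(m)}$ in the lifted space, and run the potential-based contextual-search machinery of \cite{liu2020optimal} (Theorem~\ref{thm:two-point}) on each scale in parallel — this is exactly the ``more nuanced notion of potential-based algorithm'' flagged in the definition section, where $\Phi$ is a sum of per-scale potentials and $L(S_t,q_t)$ is the width of the relevant $K_t^{(m)}$ in the lifted query direction.

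The key steps, in order: (1) fix a query $q_t$ and compute the \emph{maximum possible loss} $\eta_t$ we could incur this round given the current knowledge set (a width computation, cf.\ the inner-product case); (2) use $\eta_t$, together with the $\Delta$-separation hypothesis, to \emph{select the scale} $m = m(\eta_t)$ so that the polynomial-approximation error of $|t|^p$ at degree $m$ is of order $\eta_t$ — here Jackson-type / Chebyshev approximation bounds give that degree $m \sim \poly(p)\log(1/\eta_t)$ (or a $\poly$ power) suffices, and the $(p-2)^{-2}$ and $1/\Delta$ factors enter through how the approximation constant and the rescaling of the domain to $[-\Delta',\Delta']$ depend on $p$ and $\Delta$; (3) predict the label using the sign of $\langle \phi_m(q_t), w\rangle$ as estimated by the scale-$m$ knowledge set, and, on a mistake, update \emph{only} the scale-$m$ sub-instance, charging the incurred loss against the guaranteed decrease of $\Phi^{(m)}$ (which, by Theorem~\ref{thm:two-point} applied in dimension $d(m+1)$, drops by at least the width/loss bound — modulo the $O(\eta_t)$ approximation slack, which we absorb because we chose $m$ to make the slack a constant fraction of $\eta_t$); (4) sum the per-scale potentials: since each scale contributes at most $O(d(m+1)\log(d(m+1)))$ and only finitely many scales are ever touched (the loss per round is bounded below once it is nonzero, so $\eta_t$ — hence $m$ — ranges over a bounded set determined by $\Delta$), the total is $\frac{\poly(d,p)}{\Delta}(p-2)^{-2}$; (5) feed this into Theorem~\ref{thm:two-to-many} for the factor $k^2$.

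The main obstacle I expect is step~(2)–(3): making the ``balance the dimension against the approximation error against the maximum possible loss'' argument rigorous. One must verify that with the chosen scale $m$, whenever we guess wrong the true (approximate) linear constraint really is violated by the guessed halfspace \emph{by a margin large enough} that the width-based potential of $K_t^{(m)}$ genuinely decreases by $\Omega(\eta_t)$ and not just $\Omega(\eta_t) - O(\text{error})$ with the error swamping the gain — i.e.\ the approximation error must be controlled uniformly over the (shrinking, but adversarially shaped) knowledge set, and the constant must beat the $1/2$-type slack already present in the potential argument. Tracking how the polynomial-approximation constant for $|t|^p$ on $[-\Delta',\Delta']$ scales jointly in $p$, $\Delta$, and $m$ — and in particular isolating the clean $(p-2)^{-2}$ dependence — is the delicate quantitative core; everything else (the parallel-instances bookkeeping, the $k^2$ reduction, the width computations) is routine given the earlier sections. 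I would also need to confirm the claim that only finitely many scales are activated, which rests on the observation that a nonzero loss in a round is bounded below by a quantity depending on $\Delta$ (so $\eta_t$ cannot be arbitrarily small on mistake rounds), capping $m$.
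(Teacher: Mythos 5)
Your proposal has the right high-level architecture — multiscale kernelization, scale selection tied to the potential loss, a volume-based potential per scale, and the $k^2$ wrapper from Theorem~\ref{thm:two-to-many} — and this does mirror the paper's strategy. However, there are two concrete errors in the quantitative core, and one structural divergence worth flagging.

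First, the polynomial-approximation estimate is wrong. The best uniform polynomial approximation of $|t|^p$ on a bounded interval by degree-$m$ polynomials has error decaying \emph{polynomially} in $m$ (roughly $m^{-p}$, by Jackson/Bernstein-type bounds for H\"older-smooth functions), not exponentially. So the degree you would need to drive the approximation error down to $\eta_t$ is $m \sim \eta_t^{-1/p}$, not $m \sim \poly(p)\log(1/\eta_t)$. The dimension of the lifted space grows linearly in $m$, so this distinction is not cosmetic: it is precisely the dimension-vs-error tradeoff on which the whole potential argument rests. (The paper sidesteps global polynomial approximation entirely and instead uses a \emph{piecewise} Taylor expansion of $|x|^p$ of fixed degree $\lfloor p\rfloor$ on a grid of spacing $\delta_i$, with per-cell error $\sim \delta_i^p$ and lifted dimension $\sim d/\delta_i$; this gives the same qualitative error-vs-dimension tradeoff but is exact to carry out without appealing to classical approximation theory. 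Your route is plausible in principle, but the exponents must be tracked correctly.)

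Second, and more seriously, the claim that ``only finitely many scales are ever touched'' because a nonzero loss is bounded below by something depending on $\Delta$ is false. The $\Delta$-separation constrains the \emph{centers}, not the query: an adversary can place $q_t$ arbitrarily close to the boundary $\{q : \|q - x_1\|_p = \|q - x_2\|_p\}$, so the per-round loss on mistake rounds can be arbitrarily small and the selected scale can be arbitrarily fine. The paper does \emph{not} cap the scales; instead it shows that the total contribution from scale $i$ decays like $2^{-(p-2)i}$ (using $\Delta$-separation to pass from $p$-th-power loss to first-power loss, via dividing by $\|x_1 - x_2\|_p^{p-1}$), and sums an infinite geometric-type series $\sum_i (1+i)\,2^{-(p-2)i}$. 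This is where the hypothesis $p>2$ is actually used and where the $(p-2)^{-2}$ factor genuinely comes from — not, as you suggest, from approximation constants or domain rescaling. Likewise the $1/\Delta$ factor comes from that conversion step, not from restricting the interval to $[-\Delta',\Delta']$. To repair your argument you would need to drop the ``finitely many scales'' claim, establish the per-scale loss decay, and check the convergence of the resulting infinite sum; as written, the bookkeeping in step~(4) does not go through.
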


The full proof is more complicated than the algorithms in previous sections and requires opening the contextual search black-box and redoing parts of the analysis. Note that in the above theorem, we need the assumption that the unknown centers are $\Delta$-separated, an assumption that was not necessary for even integer $p$.  This is due to the fact that when the true centers are too close together, the dimensionality of the kernels that we need to achieve the necessary approximations are large.  Nevertheless, we believe that the separated centers assumption is realistic for classification problems in practice.  The details of the proof of Theorem \ref{thm:general-p-norm-k-point} are deferred to Appendix \ref{sec:appendix-general-pnorms} of the Supplementary Material.

\section{Learning General Convex Regions: Lower Bound}
\label{sec:general-regions}

In this section, we consider the task of learning general convex regions, and present a construction which shows that any learning algorithm incurs $\Omega(T^{(d-4)/(d-2)})$ error over $T$ rounds, even for only $k = 2$ regions. We give the full proof in Appendix~\ref{apx:general-regions}.

\generalregionstheorem*




\bibliographystyle{alpha}
\bibliography{cluster.bib}

\newpage 
\appendix

\section{Contextual search}\label{app:contextual_search}

In this appendix, we review the contextual search algorithm and analysis presented in \cite{liu2020optimal}. Our presentation will largely follow that of \cite{liu2020optimal}, with two minor changes: 1. we will demonstrate that the algorithm works for all loss functions of the form $\ell(g_t, \langle x_t, p\rangle)$, where it incurs total loss at most $O(\alpha^{-2} d\log d)$ (Theorem \ref{thm:contextual_search}), and 2. we will present the analysis in a slightly different way that makes it easier for us to construct potential-based algorithms for learning nearest neighbors. 

The algorithm is outlined in Algorithm \ref{alg:contextual_search}. Briefly, the algorithm works as follows. Whenever the algorithm gets a query direction $x_t$ from the adversary, the algorithm looks at the width of the current knowledge set in the direction $x_t$. Based on the size of this width, the algorithm picks an ``expansion parameter'' $z_i$, and chooses a guess $g_t$ so that the hyperplane $\dot{v}{x_t} = g_t$ splits the volume of $K_t + z_i\Ball_d$ in half.

\begin{algorithm}[H]
\caption{\sc{Contextual Search Algorithm (\cite{liu2020optimal})}}
\label{alg:contextual_search}
\begin{algorithmic} 
\State Initialize $K_1 = \Ball_d$ and $z_i = 2^{-i} / (8d)$ for all $i$.

\For {$t$ in $1,2, \dots , T$} 
\State Adversary picks $x_t$.
\State Let $i$ be the largest index such that $\width(K_t; x_t) \leq 2^{-i}$.
\State Submit guess $g_t$ such that $\Vol(\{v \in K_t + z_i \Ball_d \mid \dot{v}{x_t} \geq g_t\}) = \frac{1}{2} \Vol(K_t + z_i \Ball_d)$.
\State Update $K_{t+1}$ based on feedback.
\EndFor
\end{algorithmic}
\end{algorithm}

\begin{theorem}[Restatement of Theorem \ref{thm:contextual_search}]\label{thm:contextual_search_app}
Let $\alpha > 0$. Algorithm \ref{alg:contextual_search} is an algorithm for contextual search with loss function $\ell(g_t, \langle x_t, p\rangle) = |g_t - \langle x_t, p \rangle|^{\alpha}$ that incurs a total loss of at most $O(\alpha^{-2}d\log d)$.
\end{theorem}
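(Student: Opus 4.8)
The plan is to follow the potential‑based analysis of \cite{liu2020optimal} for Algorithm~\ref{alg:contextual_search}, adapting the bookkeeping to the loss $\ell(g_t, \langle x_t, p\rangle) = |g_t - \langle x_t, p\rangle|^{\alpha}$ and keeping track of the dependence on $\alpha$. First I would record the invariants: since contextual search always returns binary feedback, the knowledge set is updated every round, $K_{t+1} = K_t \cap H_t$ for a halfspace $H_t$ with outer normal $\pm x_t$; hence $K_{t+1}\subseteq K_t$, every $K_t$ is convex, and the hidden point $p$ lies in $K_t$ for all $t$. Next I would bound the per‑round loss. Let $i_t$ be the index chosen in round $t$, the largest $i$ with $\width(K_t;x_t)\le 2^{-i}$, so that $2^{-i_t} < 2\,\width(K_t;x_t)$ and $z_{i_t}=2^{-i_t}/(8d) < \width(K_t;x_t)/(4d)$. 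The guess $g_t$ is the $\langle x_t,\cdot\rangle$‑median (by volume) of $K_t + z_{i_t}\Ball_d$, hence lies in the projection of that body onto $x_t$, an interval of length $\width(K_t;x_t)+2z_{i_t} = O(2^{-i_t})$; since $p\in K_t$, the value $\langle x_t,p\rangle$ lies in the same interval, so $|g_t-\langle x_t,p\rangle| = O(2^{-i_t})$ and the round‑$t$ loss is $O(2^{-\alpha i_t})$.

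The core is a multi‑scale volume potential. For each scale $i$ define $\Phi_t^{(i)} = \log\big(\Vol(K_t + z_i\Ball_d)/\Vol(z_i\Ball_d)\big)\ge 0$ and set $\Phi_t = \sum_i \lambda_i\,\Phi_t^{(i)}$ for nonnegative weights $\lambda_i$ (allowed to depend on $\alpha$) to be fixed last. Monotonicity $\Phi_{t+1}\le\Phi_t$ is immediate from $K_{t+1}\subseteq K_t$. The key geometric step — which I expect to be the main obstacle — is that a round‑$t$ cut forces a genuine drop at the matched scale: $\Vol(K_{t+1}+z_{i_t}\Ball_d)\le (1-c)\Vol(K_t+z_{i_t}\Ball_d)$ for an absolute constant $c>0$. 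This holds because $g_t$ bisects the volume of the smoothed body $B := K_t + z_{i_t}\Ball_d$, so $K_{t+1}+z_{i_t}\Ball_d$ is contained in the half of $B$ on the truth's side of $g_t$ together with a slab of $B$ of thickness $z_{i_t}$ adjacent to the cut; and a Brunn--Minkowski / cross‑section estimate bounds the volume of any slab of $B$ of thickness $\tau$ by $d\tau/\width(B;x_t)$ times $\Vol(B)$, which here is at most $\tfrac14\Vol(B)$ since $z_{i_t} < \width(K_t;x_t)/(4d)$. (The added balls $z_i\Ball_d$ are exactly what keeps the volume‑median cut from being too lopsided; the same slab estimate also shows $g_t$ lies between $\min_{K_t}\langle x_t,\cdot\rangle$ and $\max_{K_t}\langle x_t,\cdot\rangle$, so the cut is proper.) Hence $\Phi_t - \Phi_{t+1} \ge \lambda_{i_t}\cdot\log\tfrac{1}{1/2+1/4} = \Omega(\lambda_{i_t})$.

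It then remains to pick the weights and telescope. Provided the weights satisfy $\Omega(\lambda_j)\ge(\text{loss at scale }j)=O(2^{-\alpha j})$ for every $j$, summing the round‑by‑round inequality $\Phi_t-\Phi_{t+1}\ge\text{loss}_t$ gives $\sum_t\text{loss}_t\le\Phi_1$. Taking $\lambda_j\asymp 2^{-\alpha j}$ (after rescaling the instance so $\width(K_1;\cdot)\le 1$, so only scales $j\ge 0$ occur) meets this, and since $K_1=\Ball_d$ gives $\Phi_1^{(j)}=d\log(1+1/z_j)=d\log(1+8d\,2^j)=O\big(d(j+\log d)\big)$, we obtain
\[
\Phi_1 \;=\; \sum_{j\ge 0}\lambda_j\,\Phi_1^{(j)} \;=\; O\!\left(d\sum_{j\ge 0}2^{-\alpha j}(j+\log d)\right) \;=\; O(\alpha^{-2}d\log d),
\]
the $\alpha^{-2}$ coming from $\sum_{j\ge 0} j\,2^{-\alpha j} = \Theta(\alpha^{-2})$. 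The remaining work is to execute the slab estimate cleanly, to pin down constants so that the weight inequality holds together with the potential‑decrease lemma (adjusting the constant in $z_i$, or the weights, by a factor that is uniformly bounded in the relevant range of $\alpha$), and to verify monotonicity and the initial‑potential bound. Packaging this as ``$L(S_t,q_t)=\width(K_t;x_t)$ upper bounds the loss, and $\Phi$ drops by at least $L$ whenever the guess is on the wrong side'' then makes the algorithm potential‑based in the sense of Definition~\ref{defn:potential-algorithm}, which is the second of the two stated modifications to \cite{liu2020optimal}.
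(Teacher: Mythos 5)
Your proposal follows essentially the same route as the paper: you use the identical multi-scale volume potential $\Phi_t = \sum_i 2^{-\alpha i}\log\bigl(\Vol(K_t + z_i\Ball_d)/\Vol(z_i\Ball_d)\bigr)$, invoke the same matched-scale constant-factor volume drop (the paper cites this as Lemma~2.1 of \cite{liu2020optimal}, which you re-derive via a slab/cone estimate), and evaluate $\Phi_1 = O(\alpha^{-2}d\log d)$ by the same geometric-series computation. The proposal is correct, differing only in that it sketches the volume-reduction lemma rather than citing it.
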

\begin{proof}
To prove Theorem \ref{thm:contextual_search}, we will examine the following potential function of the knowledge set at time $t$.

$$\Phi(K_t) = \sum_{i=1}^{\infty} 2^{-\alpha i}\log \frac{\Vol\left(K_t + z_i\Ball_{d}\right)}{\Vol(z_i\Ball_{d} )}.$$

Our goal will be to show that $\Phi(K_t)$ decreases by at least the loss we sustain in each round. This will bound the total loss Algorithm \ref{alg:contextual_search} sustains by at most $\Phi(K_0)$. To do this, we will employ the following lemma from \cite{liu2020optimal}:

\begin{lemma}[Lemma 2.1 in \cite{liu2020optimal}]\label{lem:const_split}
If $i$ is the index chosen at round $t$ in Algorithm \ref{alg:contextual_search}, then $\Vol(K_{t+1} + z_i\Ball_d) \leq \frac{3}{4}\Vol(K_{t} + z_i\Ball_d)$.
\end{lemma}

Note that Lemma \ref{lem:const_split} implies that if $i$ is the index chosen at round $t$, then $\Phi(K_t) - \Phi(K_{t+1}) \geq \left(\log\frac{4}{3}\right) 2^{-\alpha i}$. But also, if $i$ is the chosen index at round $t$, then the width in the query direction is at most $2^{-i}$, and thus the loss sustained in this round is at most $2^{-\alpha i}$. If we let $L_t$ be the loss sustained in round $t$, we have thus shown that 

$$\Phi(K_t) - \Phi(K_{t+1}) \geq \left(\log\frac{4}{3}\right)L_t.$$

Summing this over all $t$, we find the total loss is at most $O(\Phi(K_1))$. Since $K_1 = \Ball_d$, we can evaluate $\Phi(K_1)$ as follows:

\begin{eqnarray*}
\Phi(K_1) &=& \sum_{i=1}^{\infty} 2^{-\alpha i}\log \frac{\Vol\left(\Ball_d + z_i\Ball_{d}\right)}{\Vol(z_i\Ball_{d})} \\
&=& \sum_{i=1}^{\infty} 2^{-\alpha i} d \log\left(1 + \frac{1}{z_i}\right) \\
&\leq& \sum_{i=1}^{\infty} 2^{-\alpha i} d \log\left(2^{i+4}d\right) \\
&=& O\left(d\sum_{i=1}^{\infty} 2^{-\alpha i}i\right) + O\left(d\log d\sum_{i=1}^{\infty} 2^{-\alpha i}\right) \\
&=& O(\alpha^{-2}d) + O(\alpha^{-1}d\log d) \\
&\leq& O(\alpha^{-2}d\log d).
\end{eqnarray*}
\end{proof}

\section{Potential-based algorithm for inner-product similarity}\label{app:potential-based}

In this appendix we prove Theorem \ref{thm:two-point}, showing that the described algorithm for learning nearest neighbor partitions in Section \ref{sec:two-point} is a potential-based algorithm. Indeed, we will be able to use the same potential function as in the proof of Theorem \ref{thm:contextual_search_app}, namely:

$$\Phi(K_t) = \sum_{i=1}^{\infty} 2^{-\alpha i}\log \frac{\Vol\left(K_t + z_i\Ball_{d}\right)}{\Vol(z_i\Ball_{d} )}.$$

Note that $\Phi(K_t)$ clearly satisfies the first two conditions in Definition \ref{defn:potential-algorithm}, namely $\Phi(K_t) \geq 0$ for any knowledge set $K_t$, and it is always the case that $\Phi(K_{t+1}) \leq \Phi(K_t)$ (in particular, since $K_{t+1} \subseteq K_{t}$). It thus suffices to show the third condition of Definition \ref{defn:potential-algorithm} holds:

\begin{lemma}
Let $\A$ be the ($k=2$) algorithm for learning nearest neighbor partitions with similarity metric $\delta(x, y) = -\langle x, y \rangle$ and loss raised to the power $\alpha$ described in Corollary \ref{cor:two-point}. Then, for all rounds $t$ where $\A$ guesses the label incorrectly, $\Phi(K_{t}) - \Phi(K_{t+1}) \geq \Omega\left(\max_{v \in K_t} |\dot{q_t}{v}|^{\alpha}\right)$.
\end{lemma}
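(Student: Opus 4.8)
The plan is to verify the third condition of Definition \ref{defn:potential-algorithm} by mimicking the contextual search analysis (the proof of Theorem \ref{thm:contextual_search_app}), but restricting attention to the rounds where $\A$ guesses the sign of $\langle q_t, w\rangle$ incorrectly. Recall that the algorithm $\A$ (from Corollary \ref{cor:two-point}) runs Algorithm \ref{alg:contextual_search} internally to track $w = x_1 - x_2$, submits a guess $g_t$ for $\langle q_t, w\rangle$, predicts the label according to $\sign(g_t)$, and \emph{only updates the knowledge set $K_t$ on rounds where the sign prediction is wrong}. On all other rounds $K_{t+1} = K_t$, so $\Phi(K_{t+1}) = \Phi(K_t)$ and the third condition is vacuous. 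So we only need to handle a round $t$ where the sign guess was wrong.

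First I would fix such a round $t$ and let $i$ be the expansion index chosen by Algorithm \ref{alg:contextual_search} for the query $q_t$; by the algorithm's definition $\width(K_t; q_t) \le 2^{-i}$, and (since $i$ is the \emph{largest} such index) $\width(K_t; q_t) > 2^{-(i+1)}$. The loss bound we want to dominate is $L(S_t, q_t) = \width(K_t; q_t)^\alpha$, i.e. $\max_{v \in K_t}|\langle q_t, v\rangle|^\alpha$ up to the obvious fact that the width in direction $q_t$ is exactly $\max_{v,v'\in K_t}\langle q_t, v - v'\rangle$; since $0$ need not lie in $K_t$ one should be slightly careful, but $\max_{v\in K_t}|\langle q_t,v\rangle|$ and $\width(K_t;q_t)$ agree up to a factor of $2$, which is absorbed into the $\Omega(\cdot)$. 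Since $\width(K_t;q_t) \le 2^{-i}$ we have $L(S_t,q_t) = O(2^{-\alpha i})$. Now, because the sign guess was wrong, the feedback hyperplane $\langle q_t, w\rangle = c$ (for the true value $c$) separates $g_t$ from $c$, and $g_t$ was chosen so that $\langle v, q_t\rangle = g_t$ bisects the volume of $K_t + z_i \Ball_d$. The half of $K_t + z_i\Ball_d$ that gets cut off is therefore a subset of what remains after intersecting with the true halfspace; more precisely, one argues as in Lemma \ref{lem:const_split} of \cite{liu2020optimal} that $\Vol(K_{t+1} + z_i\Ball_d) \le \tfrac34 \Vol(K_t + z_i\Ball_d)$ — the key point being that on a wrong-sign round, the true value $c$ is on the opposite side of $g_t$ from $\sign(g_t)$, so the surviving halfspace is contained in the half of $K_t + z_i\Ball_d$ of volume at most $\tfrac12$ plus a sliver controlled by $z_i$ relative to the width $2^{-i}$, exactly the situation handled by that lemma.

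Given $\Vol(K_{t+1}+z_i\Ball_d) \le \tfrac34\Vol(K_t+z_i\Ball_d)$, the $i$-th term of $\Phi$ drops by at least $2^{-\alpha i}\log\tfrac43$, while all other terms of $\Phi$ are non-increasing (since $K_{t+1}\subseteq K_t$ implies $K_{t+1}+z_j\Ball_d \subseteq K_t + z_j\Ball_d$ for every $j$). Hence
\[
\Phi(K_t) - \Phi(K_{t+1}) \;\ge\; \Bigl(\log\tfrac43\Bigr)\, 2^{-\alpha i} \;=\; \Omega\bigl(2^{-\alpha i}\bigr) \;=\; \Omega\bigl(\width(K_t;q_t)^{\alpha}\bigr) \;=\; \Omega\Bigl(\max_{v\in K_t}|\langle q_t, v\rangle|^{\alpha}\Bigr),
\]
which is exactly the claimed inequality. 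I would then note that this, together with the already-observed monotonicity of $\Phi$ and nonnegativity, establishes all three conditions of Definition \ref{defn:potential-algorithm} with loss bound $L(S_t,q_t) = \width(K_t;q_t)^\alpha$, proving Theorem \ref{thm:two-point}; the initial potential $\Phi(K_1)$ is bounded by $O(\alpha^{-2}d\log d)$ exactly as computed in the proof of Theorem \ref{thm:contextual_search_app} (up to replacing $\Ball_d$ by $2\Ball_d$, which only changes constants).

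The main obstacle is the careful bookkeeping in the step invoking Lemma \ref{lem:const_split}: one must be sure that on a \emph{wrong-sign} round the constraint actually learned is strong enough to cut the expanded set $K_t + z_i\Ball_d$ by a constant factor. In contextual search one always learns which side of $g_t$ the truth lies on, whereas here we are piggybacking on the reduction of Theorem \ref{thm:csearch_reduction} — but, as argued there, on precisely the rounds where we update, the sign error certifies that the true hyperplane lies on the far side of the bisecting hyperplane $\langle v,q_t\rangle = g_t$, so the learned halfspace is at least as restrictive as "$\langle v,q_t\rangle$ on the opposite side of $g_t$", which is the halfspace Lemma \ref{lem:const_split} cuts with. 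Making this containment precise — and handling the minor discrepancy between $\width(K_t;q_t)$ and $\max_{v\in K_t}|\langle q_t,v\rangle|$ when $0\notin K_t$ — is the only delicate part; everything else is a direct transcription of the contextual search potential argument.
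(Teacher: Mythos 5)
Your proposal takes the same route as the paper: restrict to wrong-sign rounds, invoke Lemma \ref{lem:const_split} to get a $3/4$-volume drop in $K_t + z_i\Ball_d$, conclude the $i$-th term of $\Phi$ drops by $\Omega(2^{-\alpha i})$, and compare to $\width(K_t;q_t)^\alpha$. The logic is correct and the bookkeeping you flag (that Lemma \ref{lem:const_split} applies on update rounds because the reduction of Theorem \ref{thm:csearch_reduction} guarantees the surviving halfspace is contained in the majority half) is indeed the content of the reduction.

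One small imprecision worth fixing: your claim that ``$\max_{v\in K_t}|\langle q_t,v\rangle|$ and $\width(K_t;q_t)$ agree up to a factor of $2$'' is not a general fact about convex sets --- if $K_t$ sat entirely in, say, the halfspace $\langle q_t, v\rangle \ge 5$, the width could be tiny while the max inner product is large. It holds here because $0 \in K_t$ for every $t$: $K_0 = 2\Ball_d$ contains the origin, and every feedback constraint is a halfspace of the form $\langle q_t, w\rangle \gtrless 0$, whose closure contains the origin. With $0 \in K_t$ one actually gets the cleaner one-sided bound $\width(K_t;q_t) = \max_v \langle q_t,v\rangle - \min_v\langle q_t,v\rangle \ge \max_v|\langle q_t,v\rangle|$, which is exactly what the paper uses. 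Your factor-of-2 hedge is thus both unnecessary and, as stated, unjustified; the correct justification is the $0\in K_t$ observation.
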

\begin{proof}
Recall from the proof of Theorem \ref{thm:csearch_reduction}, whenever $\A$ guesses the label incorrectly, we update the state of the contextual search algorithm underlying $\A$. The contextual search algorithm underlying $\A$ shares the same knowledge set $K_t$ as $\A$, and by the analysis in the proof of Theorem \ref{thm:contextual_search_app}, $\Phi(K_t)$ must then satisfy

$$\Phi(K_t) - \Phi(K_{t+1}) \geq \left(\log\frac{4}{3}\right)\width(K_t; q_t)^{\alpha}.$$

Since $\width(K_t; q_t) = \max_{v \in K_t}\dot{v}{q_t} - \min_{v \in K_t}\dot{v}{q_t}$, $\width(K_t; q_t) \geq \max_{v \in K_t} |\dot{q_t}{v}|$, and we have proved this lemma.
\end{proof}
\section{From Euclidean Distance to Inner-Product Similarity}\label{app:euclidean}


We have presented two different variants of the nearest neighbor partition problem: one where we want to return the point $\prototype{i}$ with largest inner-product similarity to each query $\query{t}$, and one where we want to return the point $\prototype{i}$ closest to the query $\query{t}$ in some $\lp$ norm. Here we will show that in the case of the Euclidean norm, we can easily reduce the second problem to the first -- and therefore, it suffices to solve the problem only for the case of inner-product similarity in Section \ref{sec:inner-product}. More specifically, we will show that if we can solve the nearest neighbor partition problem for the similarity metric $\delta(x, y) = -\inner{x}{y}$ and $\alpha = 1/2$, we can solve the nearest neighbor partition problem for the $\lp[2]$ similarity metric $\delta(x, y) = \pnorm{x-y}{2}$. (In many ways, this can be seen as a warm-up for the more general case of even integer $p$ $\lp$ norms in Appendix \ref{sec:appendix-special-pnorms}). 

Let $\prototype{1}, \prototype{2}, \dots, \prototype{k}$ be points in $\Ball_{d}$. Consider following two maps $T$ and $Q$ from $\Ball_d \rightarrow \Ball_{d+1}$. $T$ maps the point $x \in \Ball_d$ to $T(x) \triangleq \frac{1}{\sqrt{2}}(x, \pnorm{x}{2}^2)$ where $(x, \pnorm{x}{2}^2)$ is the $(d+1)$-dimensional vector formed by appending $\pnorm{x}{2}$ to $x$. $Q$ maps the point $\query{} \in \Ball_d$ to $Q(\query{}) \triangleq \frac{1}{\sqrt{5}}(2 \query{}, -1)$. We now have the following two claims.

\begin{lemma}\label{lem:parabola}
  Let $X = \{\prototype{1}, \prototype{2}, \dots, \prototype{k}\}$ be a set of points in $\Ball_d$. Then for any $q \in \Ball_d$, if \\ $x^* = \arg\min_{x \in X} \pnorm{q-x}{2}$, it is also true that $x^* = \arg\max_{x \in X} \langle T(x), Q(q)\rangle$. 
\end{lemma}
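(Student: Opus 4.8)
\textbf{Proof proposal for Lemma \ref{lem:parabola}.} The plan is to directly expand the inner product $\langle T(x), Q(q)\rangle$ and recognize it as an affine function of $\|q-x\|_2^2$ whose only $x$-dependence is through $-\|q-x\|_2^2$. Concretely, using $T(x) = \frac{1}{\sqrt2}(x, \|x\|_2^2)$ and $Q(q) = \frac{1}{\sqrt5}(2q, -1)$, I would compute
\[
\langle T(x), Q(q)\rangle \;=\; \frac{1}{\sqrt{10}}\Bigl(2\langle x, q\rangle - \|x\|_2^2\Bigr).
\]
Then I would invoke the elementary identity $\|q-x\|_2^2 = \|q\|_2^2 - 2\langle q,x\rangle + \|x\|_2^2$, which rearranges to $2\langle x,q\rangle - \|x\|_2^2 = \|q\|_2^2 - \|q-x\|_2^2$. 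Substituting gives
\[
\langle T(x), Q(q)\rangle \;=\; \frac{1}{\sqrt{10}}\Bigl(\|q\|_2^2 - \|q-x\|_2^2\Bigr).
\]

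Since the term $\|q\|_2^2$ and the positive constant $1/\sqrt{10}$ do not depend on $x$, the map $x \mapsto \langle T(x), Q(q)\rangle$ is a strictly decreasing affine function of $x \mapsto \|q-x\|_2^2$. Therefore, over any finite set $X$, the maximizer of $\langle T(x), Q(q)\rangle$ is exactly the minimizer of $\|q-x\|_2^2$, which (since $t\mapsto\sqrt t$ is increasing on $\R_{\ge 0}$) is exactly the minimizer of $\|q-x\|_2$. This yields $\arg\max_{x\in X}\langle T(x), Q(q)\rangle = \arg\min_{x\in X}\|q-x\|_2 = x^*$, as claimed. (If one wants the argmax/argmin to be well-defined as a single point one can note ties are broken consistently on both sides; the statement as written only asserts that the specific minimizer $x^*$ is also a maximizer, which the displayed equality gives regardless of ties.)

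I would also include the one-line sanity check that $T$ and $Q$ actually land in $\Ball_{d+1}$, so the reduction is legitimate: for $x \in \Ball_d$ we have $\|T(x)\|_2^2 = \tfrac12(\|x\|_2^2 + \|x\|_2^4) \le \tfrac12(1+1) = 1$, and for $q \in \Ball_d$ we have $\|Q(q)\|_2^2 = \tfrac15(4\|q\|_2^2 + 1) \le \tfrac15(4+1) = 1$.

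There is essentially no hard step here — the entire content is the algebraic identity $2\langle x,q\rangle - \|x\|_2^2 = \|q\|_2^2 - \|q-x\|_2^2$, i.e.\ the standard observation that a Euclidean nearest-neighbor partition is a linear (power-diagram) partition after lifting to the paraboloid. The only thing to be mildly careful about is bookkeeping the normalization constants $1/\sqrt2$ and $1/\sqrt5$ so that they combine into the single harmless factor $1/\sqrt{10}$, and confirming its sign is positive so that "maximize similarity" really does correspond to "minimize distance" rather than its reverse.
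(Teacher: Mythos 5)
Your proof is correct and uses essentially the same algebraic identity as the paper's: both expand $\inner{T(x)}{Q(q)}$ and relate it to $\norm{q-x}_2^2$ via $\norm{q-x}_2^2 = \norm{q}_2^2 - 2\inner{q}{x} + \norm{x}_2^2$. The paper phrases it as a pairwise comparison between two candidate centers $x, x'$, whereas you write the closed-form identity $\inner{T(x)}{Q(q)} = \frac{1}{\sqrt{10}}(\norm{q}_2^2 - \norm{q-x}_2^2)$ once and read off the conclusion; the content is the same.
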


\begin{proof}
Consider two points $x, x' \in \Ball_d$. It suffices to show that if $\norm{q-x} \leq \norm{q-x'}$, then $\inner{T(x)}{Q(q)} \ge \inner{T(x')}{Q(q)}$.

To see this, note that we can rewrite $\norm{q-x}^2 \le \norm{q-x'}^2$ in the form $\inner{q-x}{q-x} \le \inner{q-x'}{q-x'}$, which we can in turn simplify to get
\begin{equation}\label{eq:eqdist}
  -2 \inner{q}{x} + \norm{x}^2 \le -2 \inner{q}{x'} + \norm{x'}^2\text{.}
\end{equation}

But the LHS of \eqref{eq:eqdist} is simply $-\sqrt{10} \inner{T(x)}{Q(q)}$ while the RHS of \eqref{eq:eqdist} is likewise $-\sqrt{10}\inner{T(x')}{Q(q)}$. Equation \eqref{eq:eqdist} thus implies that $\inner{T(x)}{Q(q)} \ge \inner{T(x')}{Q(q)}$, as desired.
\end{proof}

\begin{lemma}\label{lem:parabola_loss}
Let $x$, $x'$, and $q$ be points in $\Domain$. Let
$$\ell_1 \triangleq \abs{\norm{q-x} - \norm{q-x'}}$$
and
$$\ell_2 \triangleq \abs{\inner{T(x)}{Q(q)} - \inner{T(x')}{Q(q)}}.$$

Then $\ell_1 \le 2\sqrt{\ell_2}$.
\end{lemma}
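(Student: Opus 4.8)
The plan is to compute the inner product $\inner{T(x)}{Q(q)}$ in closed form, recognize $\ell_2$ as a constant multiple of a difference of \emph{squared} distances, and factor that difference of squares so that $\ell_1$ appears as one of the factors. First I would recall the computation already carried out in the proof of Lemma~\ref{lem:parabola}: since $T(x) = \frac{1}{\sqrt 2}(x, \norm{x}^2)$ and $Q(q) = \frac{1}{\sqrt 5}(2q, -1)$, we have $\inner{T(x)}{Q(q)} = \frac{1}{\sqrt{10}}\bigl(2\inner{q}{x} - \norm{x}^2\bigr)$. Completing the square, $2\inner{q}{x} - \norm{x}^2 = \norm{q}^2 - \norm{q-x}^2$, so $\inner{T(x)}{Q(q)} = \frac{1}{\sqrt{10}}\bigl(\norm{q}^2 - \norm{q-x}^2\bigr)$; the $\norm{q}^2$ term does not depend on the center and cancels when we take the difference between $x$ and $x'$.

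Next I would use this to write $\ell_2 = \frac{1}{\sqrt{10}}\bigl|\,\norm{q-x}^2 - \norm{q-x'}^2\,\bigr| = \frac{1}{\sqrt{10}}\cdot\bigl|\,\norm{q-x} - \norm{q-x'}\,\bigr|\cdot\bigl(\norm{q-x} + \norm{q-x'}\bigr)$, where the second factor needs no absolute value since both norms are nonnegative. The first factor is exactly $\ell_1$, so $\ell_2 = \frac{1}{\sqrt{10}}\,\ell_1\bigl(\norm{q-x} + \norm{q-x'}\bigr)$. Finally, since $\ell_1 = \bigl|\,\norm{q-x} - \norm{q-x'}\,\bigr| \le \norm{q-x} + \norm{q-x'}$ (merely because $|a-b| \le a+b$ for $a,b \ge 0$), we conclude $\ell_2 \ge \ell_1^2/\sqrt{10}$, hence $\ell_1 \le 10^{1/4}\sqrt{\ell_2} \le 2\sqrt{\ell_2}$.

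I do not expect any real obstacle: the argument is a short algebraic identity followed by the elementary bound $|a-b|\le a+b$. The only things to be slightly careful about are that the difference-of-squares factorization is valid without sign worries (both distances are nonnegative) and that the explicit constant $10^{1/4}\approx 1.78$ comfortably meets the claimed bound of $2$, so no tightening is required.
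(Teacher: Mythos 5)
Your proof is correct and takes essentially the same approach as the paper: both rewrite $\sqrt{10}\,\ell_2$ as $\abs{\norm{q-x}^2 - \norm{q-x'}^2}$ and then bound $\ell_1^2$ by that quantity via the elementary inequality $\ell_1 \le \norm{q-x} + \norm{q-x'}$. (Your write-up is in fact slightly more careful than the paper's, which contains a sign typo in its opening line, writing a minus where a plus is meant.)
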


\begin{proof}
Note that $\ell_1 \le \norm{q-x} - \norm{q-x'}$, so in particular
\begin{equation}\label{eq:loss_ineq}
  \ell_1^2 \leq \abs{ \norm{q-x}^2 - \norm{q-x'}^2}.
\end{equation}

Via the same logic in the proof of Lemma \ref{lem:parabola}, we can rewrite the RHS of \eqref{eq:loss_ineq} as $\sqrt{10}\ell_2$. It follows that $\ell_1^2 \le \sqrt{10}\ell_2$ and thus that $\ell_1 \leq 2\sqrt{\ell_2}$.
\end{proof}

With these two lemmas, we can prove the following reduction.

\begin{theorem}\label{thm:euclidean-reduction}
  Let $\A$ be an algorithm for learning nearest-neighbor partitions under the similarity metric $\delta(x, y) = -\inner{x}{y}$ with $\alpha = 1/2$ that achieves a total loss of at most $R(k, d)$. Then there exists an algorithm $\A'$ for learning nearest-neighbor partitions under the similarity metric $\delta(x, y) = \pnorm{x-y}{2}$ (and $\alpha = 1$) that achieves a total loss of at most $2 R(k, d+1)$.
\end{theorem}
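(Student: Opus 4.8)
Theorem \ref{thm:euclidean-reduction} asks us to convert an algorithm $\A$ for inner-product-similarity nearest neighbors (with the loss raised to the power $\alpha = 1/2$) into an algorithm $\A'$ for Euclidean-distance nearest neighbors (with $\alpha = 1$), losing only a constant factor and a single extra dimension. All the geometric work has already been done in Lemmas \ref{lem:parabola} and \ref{lem:parabola_loss}; what remains is to wire up the reduction and track the loss bookkeeping.

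The plan is as follows. First I would describe $\A'$ explicitly: it maintains an internal copy of $\A$ running on the $(d+1)$-dimensional problem with centers $T(\prototype{1}), \dots, T(\prototype{k})$ and $\alpha = 1/2$. (We should note $T(x), Q(q) \in \Ball_{d+1}$ for $x, q \in \Ball_d$: since $\norm{x} \le 1$ we have $\norm{T(x)}^2 = \tfrac12(\norm{x}^2 + \norm{x}^4) \le 1$, and $\norm{Q(q)}^2 = \tfrac15(4\norm{q}^2 + 1) \le 1$, so the embeddings are legal inputs to $\A$.) On round $t$, when $\A'$ receives query $\query{t} \in \Ball_d$, it feeds $Q(\query{t})$ to $\A$, receives a predicted label $I_t$, and outputs $I_t$. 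When the true label $I^*_t$ is revealed, $\A'$ passes $I^*_t$ back to $\A$ as the feedback. By Lemma \ref{lem:parabola}, the center maximizing inner-product similarity to $Q(\query{t})$ among the $T(\prototype{i})$ is exactly the center minimizing Euclidean distance to $\query{t}$ among the $\prototype{i}$, so $\A$ and $\A'$ are solving genuinely the same labelling problem: the correct label $I^*_t$ is consistent for both, and $\A$'s guarantee applies verbatim.

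Next I would relate the per-round losses. On a round where $\A'$ errs (predicting $I_t$ when the truth is $I^*_t$), $\A'$ incurs Euclidean loss $\ell_1 = \norm{\query{t} - \prototype{I_t}} - \norm{\query{t} - \prototype{I^*_t}}$, while $\A$ incurs loss $\ell_2^{1/2}$ where $\ell_2 = \inner{T(\prototype{I^*_t})}{Q(\query{t})} - \inner{T(\prototype{I_t})}{Q(\query{t})}$ (here $\ell_2 \ge 0$ since $I^*_t$ is the inner-product maximizer, and the $\alpha=1/2$ loss is $|\ell_2|^{1/2} = \ell_2^{1/2}$). Lemma \ref{lem:parabola_loss} gives $\ell_1 \le 2\sqrt{\ell_2} = 2\ell_2^{1/2}$, i.e. $\A'$'s loss on this round is at most twice $\A$'s loss on this round. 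On rounds where $\A'$ predicts correctly, its loss is $0$, so the inequality holds trivially. Summing over all $T$ rounds, the total loss of $\A'$ is at most $2$ times the total loss of $\A$, which is at most $2R(k, d+1)$ since $\A$ is run on a $(d+1)$-dimensional instance with $k$ centers.

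There is no serious obstacle here — the geometry is isolated in the two preceding lemmas, and the only things to be careful about are (i) verifying that $T$ and $Q$ land inside $\Ball_{d+1}$ so that $\A$'s guarantee genuinely applies, and (ii) confirming that the feedback $I^*_t$ that $\A$ needs is exactly what $\A'$ observes, which follows because Lemma \ref{lem:parabola} identifies the two notions of "correct center." One minor point worth a sentence: $\A'$ only ever needs to update $\A$ with the revealed label and never needs finer feedback, so the reduction is completely black-box in $\A$. This completes the proof.
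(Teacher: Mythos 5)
Your proof is correct and follows the same route as the paper: feed $Q(\query{t})$ to an internal copy of $\A$ running on centers $T(\prototype{i})$ in $\Ball_{d+1}$, use Lemma~\ref{lem:parabola} for consistency of labels/feedback and Lemma~\ref{lem:parabola_loss} for the factor-of-$2$ loss comparison, and sum over rounds. The only addition beyond the paper's argument is your explicit check that $T$ and $Q$ map into $\Ball_{d+1}$, which is a reasonable detail to verify and is indeed true.
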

\begin{proof}
To construct algorithm $\A'$ from algorithm $\A$, we simply map each incoming query $q_t \in \Ball_{d}$ for algorithm $\A'$ to the point $q'_t = Q(q_t) \in \Ball_{d+1}$ and feed it to $\A$ (returning the label that $\A$ outputs, and providing $\A$ with the true label that we receive). 

To see why this works, note that if the hidden centers for $\A'$ are the points $\prototype{1}, \prototype{2}, \dots, \prototype{k} \in \Ball_d$, then by Lemma \ref{lem:parabola}, all feedback we provide $\A$ is consistent with the set of hidden centers $T(\prototype{1}), T(\prototype{2}), \dots, T(\prototype{k}) \in \Ball_{d+1}$. Moreover, by Lemma \ref{lem:parabola_loss}, whenever algorithm $\A$ incurs loss $\ell$, our algorithm $\A'$ incurs loss at most $2\ell$. It follows that $\A'$ incurs loss at most $2R(k, d+1)$. 
\end{proof}

\section{Mistake Bounds}

In this section we provide mistake bounds for our algorithms for learning linear classifiers and learning nearest neighbor partitions. In both cases we will get (near) state-of-the-art guarantees for the mistake bound, despite our algorithms being designed for the absolute loss function. 

We begin by discussing our algorithm for learning linear classifiers. As noted in the introduction, note that since this algorithm incurs total loss of at most $O(d \log d)$, then we make at most $O(d \log d / \gamma)$ mistakes in a setting with margin $\gamma$. In the following theorem, we see that we can improve this bound to $O(d \log 1/\gamma + d\log d)$ (matching the mistake bound of the best halving-based algorithms whenever $\gamma \leq 1/d$). 

\begin{theorem}\label{thm:mistake1}
Assume every query point $q_t$ we are provided satisfies $|\langle q_t, x_1 - x_2\rangle| \geq \gamma$ for some $\gamma > 0$. Then the algorithm of Theorem \ref{thm:intro1} makes at most $O(d\log 1/\gamma + d\log d)$ mistakes.
\end{theorem}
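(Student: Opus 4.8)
The plan is to open up the contextual-search algorithm (Algorithm~\ref{alg:contextual_search}) that underlies Theorem~\ref{thm:intro1} and re-run its volume argument, but with a \emph{single} scale calibrated to the margin $\gamma$, rather than the geometric sum over all scales used in the proof of Theorem~\ref{thm:contextual_search_app}.

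\textbf{Step 1: the structure of a mistake round.} By the reduction in Theorem~\ref{thm:csearch_reduction}, the knowledge set $K_t$ for $w = x_1 - x_2$ changes only on rounds where we make a mistake. Fix such a round $t$; the algorithm chose an index $i^*$ with $\width(K_t; q_t) \le 2^{-i^*} < 2\,\width(K_t; q_t)$ and issued a guess $g_t$ with $\sign(g_t) \ne \sign(\langle q_t, w\rangle)$. Since $w \in K_t$ and $|\langle q_t, w\rangle| \ge \gamma$, while $g_t$ lies strictly inside the range of $\langle q_t, \cdot\rangle$ over $K_t + z_{i^*}\Ball_d$ (it bisects that body's volume), the extremes $\min$ and $\max$ of $\langle q_t, \cdot\rangle$ over $K_t$ lie on opposite sides of $0$ up to a slack of $z_{i^*}$; using $z_{i^*} = 2^{-i^*}/(8d) < \width(K_t; q_t)/(4d)$ this forces $\width(K_t; q_t) \ge \gamma/2$. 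Hence the chosen scale never drops below $z_0 := \gamma/(16d)$ on a mistake round (equivalently, only indices $i \le i_{\max} = O(\log(1/\gamma))$ are ever used).

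\textbf{Step 2: the single-scale potential.} Let $\Psi(K) = \log\!\big(\Vol(K + z_0\Ball_d)/\Vol(z_0\Ball_d)\big)$. It is nonnegative and nonincreasing along the run, and since $K_1 \subseteq 2\Ball_d$ we get $\Psi(K_1) = d\log(1 + O(1/z_0)) = d\log(1 + O(d/\gamma)) = O(d\log(1/\gamma) + d\log d)$. If $\Psi$ decreases by a universal constant on every mistake round, the total number of mistakes is $O(\Psi(K_1)) = O(d\log(1/\gamma) + d\log d)$, which is the claim.

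\textbf{Step 3 (the crux).} The obstacle is exactly this constant-decrease claim: the cut made on round $t$, namely $K_{t+1} = K_t \cap H_t$, is calibrated to bisect the \emph{coarser} body $K_t + z_{i^*}\Ball_d$ (Lemma~\ref{lem:const_split} gives $\Vol(K_{t+1} + z_{i^*}\Ball_d) \le \tfrac34 \Vol(K_t + z_{i^*}\Ball_d)$), whereas $\Psi$ measures the \emph{finer} body $K_t + z_0\Ball_d$ with $z_0 \le z_{i^*}$. I would close this gap in one of two ways. The first is to invoke (or establish) monotonicity of $r \mapsto \Vol((K_t\cap H_t) + r\Ball_d)\big/\Vol(K_t + r\Ball_d)$ on $[0,\infty)$: evaluating at $r = z_{i^*}$, where the ratio is $\le \tfrac34$ by Lemma~\ref{lem:const_split}, and using $z_0 \le z_{i^*}$, gives $\Vol(K_{t+1} + z_0\Ball_d) \le \tfrac34\Vol(K_t + z_0\Ball_d)$ directly. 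The second, more self-contained, is to repeat the Brunn--Minkowski slab estimate behind Lemma~\ref{lem:const_split} at scale $z_0$: because $\width(K_t; q_t) \ge \gamma/2 = 8d\,z_0$ on a mistake, a slab of width $z_0$ about the hyperplane $\langle q_t, \cdot\rangle = g_t$ meets $K_t + z_0\Ball_d$ in at most an $\tfrac18$-fraction of its volume, so it suffices to check that $g_t$ --- the volumetric bisector of the larger body $K_t + z_{i^*}\Ball_d$ --- still separates off a constant fraction of $\Vol(K_t + z_0\Ball_d)$; this last point is where the real work lies. Either route yields $\Psi(K_t) - \Psi(K_{t+1}) \ge \log\tfrac43$ on every mistake, completing the proof.
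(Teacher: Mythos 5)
Your proposal follows the same route as the paper's own proof of this theorem: fix a single scale calibrated to the margin, track the volume potential at that scale, and argue a constant multiplicative volume drop on every mistake round. Steps~1 and~2 reproduce the paper's argument up to constants. Step~3 is indeed the crux, and you are right to flag it, but neither of your proposed routes actually closes it. Route~1 --- monotonicity of $r \mapsto \Vol\bigl((K\cap H)+r\Ball_d\bigr)/\Vol(K+r\Ball_d)$ --- is false. Take $K\subset\R^2$ the thin triangle with vertices $(0,0)$, $(L,\eps)$, $(L,-\eps)$ and $H=\{x\geq L/2\}$: the ratio at $r=0$ is $\tfrac34$, but for small $\eps$ the perimeter-to-area ratio of $K\cap H$ is strictly smaller than that of $K$ (about $\tfrac{4}{3\eps}$ versus $\tfrac{2}{\eps}$), so by the Steiner formula the ratio is \emph{decreasing} near $r=0$; hence a $\tfrac34$ bound at radius $z_{i_t}$ does not transport down to $z_0\le z_{i_t}$ by monotonicity.

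Route~2 correctly reduces matters to showing that $g_t$, the volume bisector of the \emph{thicker} body $K_t+z_{i_t}\Ball_d$, still separates off a constant fraction of the \emph{thinner} body $K_t+z_0\Ball_d$, and you rightly say this is where the real work lies. It is not automatic: if $K_t$ is a cone-like polytope in $\R^d$ with a thin $(d-1)$-dimensional base at one end of its width range in direction $q_t$ and an apex at the other, then for $z_{i_t}$ much larger than the base radius the fattened body is essentially a sausage whose bisector sits near the midpoint of the range, yet that midpoint separates off only an $O(2^{-d})$ fraction of $\Vol(K_t)$; if the feedback forces keeping the heavy side, $\Psi$ drops by $O(2^{-d})$, not a constant. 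You also cannot resolve Step~3 simply by deferring to the paper: the paper's proof applies Lemma~\ref{lem:const_split} at the fixed index $\lceil\log(1/\gamma)\rceil$, but that lemma is stated only for the index the algorithm actually chose on round~$t$, which on a mistake round may be strictly smaller. Closing Step~3 therefore needs a genuinely new geometric lemma, or one must accept a weaker potential (for example, an unweighted sum over the first $O(\log(1/\gamma))$ scales, which costs an extra $\log(1/\gamma)$ factor in the bound).
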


\begin{proof}
The main observation is that since we query our contextual search subroutine (which is trying to learn the hidden point $w = x_1 - x_2$) with the point $q_t$, if $|\langle q_t, x_1 - x_2 \rangle| \geq \gamma$, then either 1. we already know for certain the sign of $\langle q_t, x_1 - x_2 \rangle$, or 2. the width $\width(K_t; q_t) \geq \gamma$. 

In the first case, we cannot make a mistake. In the second case, since the width is at least $\gamma$, it suffices to only consider the $\lceil\log(1/\gamma)\rceil$th term of the potential function in Theorem \ref{thm:contextual_search_app}. Specifically, note that by Lemma \ref{lem:const_split}, if we let $i = \lceil\log(1/\gamma)\rceil$ then we have that

$$\Vol(K_{t+1} + z_{i}B_d) \leq \frac{3}{4}\Vol(K_t + z_iB_d).$$

\noindent
In particular, since $z_i = 2^{-i}/8d$, our total number of errors is at most

$$\log_{4/3} \frac{\Vol\left(\Ball_d + z_i\Ball_{d}\right)}{\Vol(z_i\Ball_{d})} = d \log_{4/3}\left(1 + \frac{1}{z_i}\right) = O(d \log (1/\gamma) + d\log d).$$
\end{proof}

The same logic extends to learning nearest-neighbor partitions via the reduction in Theorem \ref{thm:two-to-many}.

\begin{theorem}\label{thm:mistake2}
Assume every query point $q_t$ satisfies $\delta(q_t, R_i) > \gamma$ for all $i$ such that $q_t \not\in R_i$. Then the algorithm of Theorem \ref{thm:intro2} makes at most $O(k^2d(\log 1/\gamma + \log d))$ mistakes.
\end{theorem}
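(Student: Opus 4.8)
The plan is to adapt the argument of Theorem~\ref{thm:mistake1} to the multiclass setting. Recall that the algorithm of Theorem~\ref{thm:intro2} maintains, for each of the $\binom{k}{2}$ label pairs $\{i,j\}$, a copy $\SubAlg_{\{i,j\}}$ of the linear-classifier algorithm of Theorem~\ref{thm:intro1} (a potential-based algorithm backed by contextual search), and on a mistake round --- say the learner guesses $i$ while the true label is $r$ --- exactly one sub-algorithm, $\SubAlg_{\{i,r\}}$, is fed feedback. Thus the total number of mistakes equals $\sum_{\{i,j\}}$ (number of rounds on which $\SubAlg_{\{i,j\}}$ is updated), and it suffices to bound each summand by $O(d\log(1/\gamma)+d\log d)$, mirroring how Theorem~\ref{thm:mistake1} controls a single binary sub-instance.

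The core step is to run the Theorem~\ref{thm:mistake1} argument inside each sub-algorithm. First I would translate the margin: if $q_t\in R_r$ then for every $i\neq r$ the hypothesis gives $\delta(q_t,R_i)=\langle q_t,x_r-x_i\rangle>\gamma$, hence $|\langle q_t,x_i-x_r\rangle|>\gamma$; since both $0$ and the true hidden vector $w=x_i-x_r$ lie in the knowledge set $K^{(t)}_{\{i,r\}}$ of $\SubAlg_{\{i,r\}}$ (every feedback halfspace contains the origin, and the reported feedback is consistent with $w$), the projection of $K^{(t)}_{\{i,r\}}$ onto $q_t$ has length at least $|\langle q_t,w\rangle|>\gamma$ --- that is, \emph{every mistake forces} $\width(K^{(t)}_{\{i,r\}};q_t)>\gamma$. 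Then, fixing a pair and writing $i^\star=\lceil\log(1/\gamma)\rceil$, I would track only the scale-$i^\star$ term of the contextual-search potential, $\psi^{(t)}=\log\bigl(\Vol(K^{(t)}_{\{i,j\}}+z_{i^\star}\Ball_d)/\Vol(z_{i^\star}\Ball_d)\bigr)$, which is nonnegative, nonincreasing, and has initial value $O(d\log(1/\gamma)+d\log d)$. Whenever $\SubAlg_{\{i,j\}}$ genuinely cuts its knowledge set on a round with $\width>\gamma$, the contextual-search index it selects is at most $i^\star$, so Lemma~\ref{lem:const_split}, applied as in Theorem~\ref{thm:mistake1}, gives $\Vol(K^{(t+1)}_{\{i,j\}}+z_{i^\star}\Ball_d)\le\tfrac34\Vol(K^{(t)}_{\{i,j\}}+z_{i^\star}\Ball_d)$, so $\psi$ drops by at least $\log\tfrac43$. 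This caps the number of such updates per sub-algorithm at $O(d\log(1/\gamma)+d\log d)$, and summing over the $\binom{k}{2}$ pairs gives the claimed $O(k^2 d(\log(1/\gamma)+\log d))$.

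The main obstacle is the qualifier \emph{genuinely}: when the learner guesses $i$ and reports true label $r$ to $\SubAlg_{\{i,r\}}$, the underlying contextual-search routine shrinks its knowledge set only if its own sign prediction for $\langle q_t,w\rangle$ was wrong; if $\SubAlg_{\{i,r\}}$ already predicted $r$ correctly it rolls back, $\psi$ is unchanged, and this ``wasted'' mistake is not paid for by the counting above. To close this I would exploit the linear program of Lemma~\ref{lem:distribution-exists} that produces the sampling distribution $v$: on a round with true label $r$, the matrix entry $M_{ri}=D_{ri}-\tfrac12 L_{ri}$ is negative exactly when $\SubAlg_{\{i,r\}}$ would roll back (then $D_{ri}=0$ while $L_{ri}=\width(K^{(t)}_{\{i,r\}};q_t)>\gamma$), whereas $M_{rl}\ge\tfrac12 L_{rl}>0$ for any $l$ on which a genuine update would happen; feasibility $(Mv)_r\ge0$ then forces the mass $v$ places on ``wasted'' labels to be controlled by the mass it places on ``genuine'' ones, up to factors set by the per-pair potential budget $O(d\log d)$ and by $\gamma$. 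Making this trade-off tight enough to recover the stated bound --- e.g.\ by taking $v$ to be a vertex of the feasible polytope, or by combining the scale-$i^\star$ potential with the coarser estimate $\E[\Potential^{(t)}-\Potential^{(t+1)}\mid\text{mistake}]\ge\gamma/2$ on the total potential --- is the technical heart of the proof, and is where I expect essentially all the effort to lie; it may well force a refinement of either the potential or the sampling rule.
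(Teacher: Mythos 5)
Your proposal has the right ingredients — the scale-$i^\star$ potential $\psi$ is exactly the potential the paper uses, and you correctly identify that the "rollback" / "wasted mistake" issue (a fed sub-algorithm whose internal prediction was already correct, so its knowledge set does not shrink) breaks the naive per-pair counting argument and that the LP of Lemma~\ref{lem:distribution-exists} is what must save you. But the proposal stops short of closing that gap, and the closing paragraph ("it may well force a refinement of either the potential or the sampling rule") is an explicit admission that you have not found the mechanism. The missing idea is that no refinement is needed: under the margin assumption, the binary sub-routine \emph{already is} a potential-based algorithm in the sense of Definition~\ref{defn:potential-algorithm} once you take the potential to be your $\psi$ and take the loss-bound function to be the constant $L(S_t,q_t)\equiv 1$ (i.e.\ count mistakes as loss). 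The three conditions of Definition~\ref{defn:potential-algorithm} are then verified exactly by the Theorem~\ref{thm:mistake1} analysis (nonnegativity and monotonicity of $\psi$ are immediate; and, as you observed, the margin forces $\width(K_t;q_t)>\gamma$, so any incorrect prediction by the sub-routine makes $\psi$ drop by $\log\frac43=\Omega(1)=\Omega(L)$). With $L\equiv 1$, inequality (P1) becomes $D_{ij}+D_{ji}\ge 1$, which holds because at least one of the two possible feedbacks contradicts $\SubAlg_{ij}$'s current prediction. Theorem~\ref{thm:two-to-many} then applies verbatim with this $(\Phi,L,D)$, and its conclusion --- total expected loss at most $2\binom{k}{2}\Phi(S_0)$ --- reads as an expected mistake bound of $O\bigl(k^2 d(\log(1/\gamma)+\log d)\bigr)$. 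You do not need to reason directly about $(Mv)_r\ge 0$ with the width-based $L$ (which, as you half-sense, would only recover the weaker $O(k^2d\log d/\gamma)$ robust mistake bound); the cleaner move is to change what $L$ measures and reuse the reduction as a black box.

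One smaller inaccuracy worth flagging: in your LP discussion you keep $L_{ri}=\width(K^{(t)}_{\{i,r\}};q_t)$, i.e.\ you are mixing the regret-loss instantiation of $L$ with the mistake-counting goal. That conflation is precisely what makes the trade-off look irresolvable; once $L\equiv 1$, the LP feasibility argument is unchanged from the proof of Theorem~\ref{thm:two-to-many} and there is nothing left to optimize.
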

\begin{proof}
We apply the reduction of Theorem \ref{thm:two-to-many} to Theorem \ref{thm:mistake1}. In particular, note that the analysis of Theorem \ref{thm:mistake1} implies that our original algorithm for learning linear classifiers can be thought of (under these margin conditions) as a potential-based algorithm with loss function $L(S_t, q_t) = 1$ and with potential function

$$\Phi(S_t) = \log_{4/3} \frac{\Vol\left((1 + z_i)\Ball_{d}\right)}{\Vol(S_t + z_i\Ball_{d})}.$$

The analysis of Theorem \ref{thm:mistake1} combined with the guarantees of Theorem \ref{thm:two-to-many} imply a mistake bound of $O(k^2d(\log 1/\gamma + \log d))$.
\end{proof}

Finally, we prove a general reduction for the notion of robust mistake bound defined in the introduction. Formally, the robust mistake bound with margin $\gamma$ is the loss induced by the loss function $\ell'(q, R_i) = \mathbf{1}(\delta(q, R_i) - \delta(q, R^{*}) \geq \gamma)$ (where $R^{*}$ is the region containing $q$). In the below lemma, we relate this to the loss induced by our standard loss function $\ell(q, R_i) = (\delta(q, R_i) - \delta(q, R^{*}))$.

\begin{lemma}
If an algorithm has total loss at most $R$ under the loss function  $\ell(q, R_i) = (\delta(q, R_i) - \delta(q, R^{*}))$, it has a robust mistake bound of $O(R/\gamma)$ under margin $\gamma$.
\end{lemma}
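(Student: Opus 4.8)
The plan is to observe that the standard loss $\ell(q, R_i) = \delta(q, R_i) - \delta(q, R^*)$ dominates a scaled version of the robust $0/1$ loss $\ell'(q, R_i) = \mathbf{1}(\delta(q, R_i) - \delta(q, R^*) \geq \gamma)$ on a per-round basis. Concretely, in any round $t$ where the algorithm makes a ``robust mistake'' — i.e.\ it predicts some region $R_{I_t}$ with $\delta(q_t, R_{I_t}) - \delta(q_t, R^*_t) \geq \gamma$ — the standard loss incurred that round is exactly $\delta(q_t, R_{I_t}) - \delta(q_t, R^*_t) \geq \gamma$. Hence $\ell(q_t, R_{I_t}) \geq \gamma \cdot \ell'(q_t, R_{I_t})$ in every round (the inequality is trivially true, with both sides zero, in rounds that are not robust mistakes).

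From here the argument is a one-line summation. Summing over all $T$ rounds, the total robust mistake count is
\[
\sum_{t=1}^T \ell'(q_t, R_{I_t}) \;\leq\; \frac{1}{\gamma}\sum_{t=1}^T \ell(q_t, R_{I_t}) \;\leq\; \frac{R}{\gamma},
\]
using the hypothesis that the algorithm's total standard loss is at most $R$. This gives the claimed $O(R/\gamma)$ robust mistake bound. If the algorithm is randomized, the same inequality holds in expectation round-by-round, so taking expectations yields an expected robust mistake bound of $O(R/\gamma)$.

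There is essentially no obstacle here: the lemma is a direct consequence of the definitions, and the only thing to be slightly careful about is that the quantity $\delta(q_t, R_{I_t}) - \delta(q_t, R^*_t)$ appearing in the robust loss indicator is the \emph{same} quantity whose magnitude is the standard per-round loss, so the domination $\ell \geq \gamma \ell'$ is immediate rather than requiring any geometric estimate. Applying this with $R = O(d\log d)$ (Theorem~\ref{thm:intro1}) and $R = O(k^2 d \log d)$ (Theorem~\ref{thm:intro2}) gives the robust mistake bounds $O((d\log d)/\gamma)$ and $O((k^2 d \log d)/\gamma)$ advertised in the introduction.
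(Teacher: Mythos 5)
Your argument is essentially identical to the paper's: both rest on the pointwise inequality $\ell(q, R_i) = \delta(q, R_i) - \delta(q, R^*) \geq \gamma \cdot \mathbf{1}(\delta(q, R_i) - \delta(q, R^*) \geq \gamma) = \gamma\,\ell'(q, R_i)$, followed by summing over rounds. Correct, and no meaningful difference in approach.
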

\begin{proof}
This immediately follows form the fact that:

$$\ell(q, R_i) = (\delta(q, R_i) - \delta(q, R^{*})) \geq \gamma \cdot \mathbf{1}(\delta(q, R_i) - \delta(q, R^{*}) \geq \gamma) = \gamma \ell'(q, R_i).$$
\end{proof}

\section{Omitted Proofs }\label{app:pnorms}

\subsection{Omitted Proofs from Section \ref{sec:special-pnorms}}\label{sec:appendix-special-pnorms}
Here we prove Theorem \ref{thm:special-pnorm-kpoint}.  Fix an even integer $p$. We define the following kernel maps.
\begin{definition}
  Let $\Ker: \R \rightarrow \R^{p+1}$ be the map defined by 
  \[
    \Ker(x) \triangleq (1, x, x^2, \dots, x^{p}) \,.
  \]
\end{definition}
\begin{definition}
  For a point $y = (y_1, \dots, y_d) \in \Ball_d$, define the map $G:\Ball_d \rightarrow \Ball_{(p+1)d}$ as
  \begin{align*}
    G(y) \triangleq \frac{1}{\sqrt{pd}}\left(\Ker\left(\frac{y_1}{p}\right), \dots , \Ker\left(\frac{y_d}{p}\right)\right)
  \end{align*}
  where above the outputs of $\Ker( \cdot )$ are simply concatenated.
\end{definition}

\begin{definition}
  Let $F: \R \rightarrow \R^{p+1}$ be defined by
  \[
    F(a) \triangleq \left( a^p, -\binom{p}{1} a^{p-1}, \dots , -\binom{p}{p-1}a, \binom{p}{p}     \right)
  \]
\end{definition}

Note that if $\abs{a} \le 1/p$, all components of $F(a)$ are at most $1$ in absolute value.

\begin{definition}
  For a point $z = (z_1, \dots, z_d) \in \R^d$, define the map $H:\Ball_d \rightarrow \Ball_{(p+1)d}$ as
  \begin{align*}
    H(z) \triangleq \frac{1}{\sqrt{pd}}\left(F\left(\frac{z_1}{p}\right), \dots , F\left(\frac{z_d}{p}\right)\right) \,.
  \end{align*}
\end{definition}

The key property that these maps satisfy is stated below.
\begin{lemma}\label{lem:kernel-approx-basic}
  For points $y,z \in \R^d$, 
  \[
    \inner{G(y)}{H(z)} = \frac{1}{pd}\pnorm{\frac{y-z}{p}}{p}^p \,.
  \]
\end{lemma}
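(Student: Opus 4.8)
The plan is to verify the identity by direct computation, exploiting the fact that both $G$ and $H$ are built coordinatewise from $\Ker$ and $F$, so that the inner product $\inner{G(y)}{H(z)}$ splits as a sum over the $d$ coordinates. First I would expand
\[
\inner{G(y)}{H(z)} = \frac{1}{pd}\sum_{j=1}^d \inner{\Ker(y_j/p)}{F(z_j/p)},
\]
using that concatenation of vectors turns the inner product of the concatenations into the sum of the coordinatewise inner products, and that the two normalizing factors $1/\sqrt{pd}$ multiply to $1/(pd)$. This reduces the claim to the one-dimensional identity $\inner{\Ker(u)}{F(w)} = (u-w)^p$ for scalars $u,w$ (applied with $u = y_j/p$, $w = z_j/p$, which then contributes $(y_j/p - z_j/p)^p$ to the sum, and summing over $j$ gives $\pnorm{(y-z)/p}{p}^p$ since $p$ is even so each term $((y_j-z_j)/p)^p$ is already nonnegative and equals $|(y_j-z_j)/p|^p$).

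Next I would prove the scalar identity $\inner{\Ker(u)}{F(w)} = (u-w)^p$. By definition $\Ker(u) = (1, u, u^2, \dots, u^p)$ and $F(w) = (w^p, -\binom{p}{1}w^{p-1}, \dots, -\binom{p}{p-1}w, \binom{p}{p})$, so
\[
\inner{\Ker(u)}{F(w)} = \sum_{i=0}^{p} u^i \cdot (-1)^{?}\binom{p}{p-i} w^{p-i};
\]
more carefully, the $i$-th component of $F$ (indexing from $0$) is $(-1)^{p-i}\binom{p}{i}w^{p-i}$ — one should check the sign pattern matches the stated definition of $F$, where the first entry $a^p$ has a plus sign, the next $-\binom{p}{1}a^{p-1}$ has a minus, and so on alternating, ending with $\binom{p}{p} = 1$ with a plus (consistent with $p$ even). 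Thus $\inner{\Ker(u)}{F(w)} = \sum_{i=0}^p \binom{p}{i} u^i (-w)^{p-i} = (u + (-w))^p = (u-w)^p$ by the binomial theorem. This is the crux of the lemma.

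Assembling the pieces: substituting back, $\inner{G(y)}{H(z)} = \frac{1}{pd}\sum_{j=1}^d ((y_j - z_j)/p)^p = \frac{1}{pd}\pnorm{(y-z)/p}{p}^p$, which is exactly the claimed identity. I do not expect a genuine obstacle here — the only thing requiring care is bookkeeping the signs and the index ordering in the definition of $F$ so that it lines up with the binomial expansion of $(u-w)^p$, and noting that because $p$ is even, $\sum_j ((y_j-z_j)/p)^p$ is literally $\pnorm{(y-z)/p}{p}^p$ with no absolute-value subtlety. One might also remark (as the surrounding text does) that the normalization constants were chosen precisely so that, when $\|y\|_2, \|z\|_2 \le 1$, the arguments $y_j/p$ and $z_j/p$ have absolute value at most $1/p$, keeping all coordinates of $\Ker$ and $F$ bounded by $1$ and hence $G(y), H(z)$ inside the relevant balls; but that boundedness is not needed for the identity itself and can be handled separately where the maps are actually invoked.
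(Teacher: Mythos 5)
Your proposal is correct and follows exactly the (very terse) argument the paper gives — "substituting in the definitions for $G$ and $H$ and using the binomial theorem" — just with the details filled in. One small bookkeeping note: indexing the components of $F(a)$ from $0$, the $i$-th entry is actually $(-1)^i\binom{p}{i}a^{p-i}$, which gives $\inner{\Ker(u)}{F(w)} = (w-u)^p$ rather than $(u-w)^p$; since $p$ is even these coincide, so your final identity and its use in the sum over coordinates are unaffected.
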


\begin{proof}
  The proof follows by substituting in the definitions for $G$ and $H$ and using the binomial theorem. 
\end{proof}

In particular, we can rewrite the statement of Lemma \ref{lem:kernel-approx-basic} in the form

\begin{equation}\label{eq:pnorm-to-ip}
    \pnorm{y-z}{p} = p^{(p+1)/p}d^{1/p} \inner{G(y)}{H(z)}^{1/p}.
\end{equation}

This suggests a reduction to the inner-product similarity metric similar to the reduction for Euclidean norm in Theorem \ref{thm:euclidean-reduction}. In particular, note that for $p \geq 1$, we have that:
$$|x|^{p} - |y|^{p} \geq |x - y|^{p}.$$

In particular, this implies that 
$$||q - x|| - ||q - x^*|| \leq (||q - x||^p - ||q - x^*||^p)^{1/p} = p^{(p+1)/p}d^{1/p}\left|\inner{G(q)}{H(x^*)} - \inner{G(q)}{H(x)}\right|^{1/p}.$$

Thus, if we use our map $H$ to map each query point $q_t$ to the point $H(q_t) \in \Ball_{p(d+1)}$ and feed it into an algorithm with similarity metric $\Loss(x, y) = -\langle x, y\rangle$ and $\alpha = 1/p$, Lemma \ref{lem:kernel-approx-basic} implies that this algorithm will successfully learn the partition induced by the points $G(\prototype{i})$.  We can now complete the proof of Theorem \ref{thm:special-pnorm-kpoint}.

\begin{proof}[Proof of Theorem \ref{thm:special-pnorm-kpoint}]
  From Corollary \ref{cor:k-point}, there exists an algorithm for learning nearest neighbor partitions with similarity function $\delta(x, y) = -\inner{x}{y}$ and $\alpha = 1/p$ with expected total loss $O(p^2k^2d\log d)$. Applying this algorithm as described above (noting that the ambient dimension is now $p(d+1)$ and the loss is scaled by a factor of $O(pd^{1/p})$), we obtain the bound in the statement.
\end{proof}

\subsection{Omitted Proofs from Section \ref{sec:general-pnorms}}\label{sec:appendix-general-pnorms}
Here, we prove Theorem \ref{thm:general-p-norm-k-point}.

\subsubsection{Kernelization}

Fix the $\lp$ norm that we are working with.  Let $p' = \lfloor p \rfloor + 1$.  For each $i = 1, 2, \dots$, let 
\[
  \delta_i = \frac{1}{100d^2 p' 2^i}, D_i = \frac{1}{2\delta_i} \,.
\]
We will now define two maps that will be crucial for our algorithm.

For each $i$, we will define two maps $G_i, H_i$ that map points in $\R^d$ to points in $\R^{p'd(2D_i+1)}$ such that for two points $y,z \in \R^d$, the images $G_i(y), H_i(z)$ satisfy the property that $\inner{G_i(y)}{H_i(z)}$ is a good approximation of $\pnorm{y - z}{p}^p$.  We need to consider maps for different values of $i$ because as $i$ increases, the approximation gets better but the dimension of the image space also increases.  We can think of these maps for different values of $i$ as approximations at different scales.

Through the next several definitions, we build the first map $G_i$.
\begin{definition}
  For $x \neq 0$, let $\sign(x) \triangleq x/\abs{x}$. Let $\sign(0) \triangleq 0$.
\end{definition}

\begin{definition}
  Let $D: \R \rightarrow \R^{p'}$ be the map defined by 
  \[
    D(x)  = (\abs{x}^p, \sign(x)\abs{x}^{p-1}, \abs{x}^{p-2}, \sign(x)\abs{x}^{p-3}, \dots , \sign(x)^{\lfloor p \rfloor} |x|^{p - \lfloor p \rfloor}) \,.
  \]
\end{definition}
\begin{definition}
Let $\Ker_i : \R \rightarrow \R^{p'(2D_i+1)}$ be the map defined by 
\[
\Ker_i(x) = \left( D(x + 0.5), D(x + 0.5 - \delta_i) , \dots , D(x - 0.5 ) \right)
\]
where the tuples given by the output of $D(\cdot )$ are simply concatenated.
\end{definition}

\begin{definition}
For a point $y = (y_1, \dots, y_d) \in B_d$, define the map $G_i:\R^d \rightarrow \R^{p' d(2D_i+1)}$ as
\begin{align*}
G_i(y) = (\Ker_i(y_1/2), \dots , \Ker_i(y_d/2)) \,.
\end{align*}
\end{definition}

Now we build the second map $H_i$ through the next set of definitions.
\begin{definition}\label{def:taylor-map}
Let $F_i: [-1/2, 1/2] \rightarrow \R^{p' (2D_i + 1)} $ be the map defined as follows.  Assume that we want to compute $F_i(x)$.  Then perform the following steps.
\begin{itemize}
    \item Let $c$ be the unique integer such that $c \delta_i \leq x < (c+1)\delta_i $
    \item  For any element of $\R^{p'(2D_i+1)}$, group the coordinates into consecutive groups of $p'$ and label the groups with $-D_i, -D_i + 1, \dots , D_i$
    \item Let $F_i(x)$ be the element of $\R^{p'(2D_i+1)}$ where
    \begin{itemize}
        \item The group labeled $c$ is set to
        \[
        \left(1, p(x - c\delta_i), \frac{p(p-1)}{2}(x - c\delta_i)^2, \frac{p(p-1)(p-2)}{6}(x - c\delta_i)^3, \dots , \right)
        \]
        \item  All other groups are set to $(0,0, \dots , 0)$
    \end{itemize} 
\end{itemize}
\end{definition}

\begin{definition}
For a point $z = (z_1, \dots, z_d) \in B_d$, define the map $H_i:\R^d \rightarrow \R^{p' d(2D_i+1)}$ as
\begin{align*}
H_i(z) = (F_i(z_1/2), \dots , F_i(z_d/2)) \,.
\end{align*}
\end{definition}

The intuition for the interplay between the maps $G_i$ and $H_i$ is that the ``kernel" map $G_i$ discretizes the function $|x|^p$ as well as its derivatives and then $H_i$ takes the first $p'$ terms of the Taylor series expansion at the closest point in the discretization.

Formally, the key property that the maps $G_i,H_i$ satisfy is the following:
\begin{lemma}\label{lem:kernel-approx}
For points $y,z \in B_d$,
\[
\left\lvert \langle G_i(y), H_i(z) \rangle - \norm{\frac{y - z}{2}}_p^p \right \rvert \leq d (p\delta_i)^{p}
\]
\end{lemma}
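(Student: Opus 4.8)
\textbf{Proof proposal for Lemma \ref{lem:kernel-approx}.}

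The plan is to reduce the $d$-dimensional claim to a one-dimensional statement via the product structure of the maps. Since $G_i(y) = (\Ker_i(y_1/2), \dots, \Ker_i(y_d/2))$ and $H_i(z) = (F_i(z_1/2), \dots, F_i(z_d/2))$ are concatenations of their coordinate-wise blocks, the inner product splits as $\langle G_i(y), H_i(z)\rangle = \sum_{j=1}^d \langle \Ker_i(y_j/2), F_i(z_j/2)\rangle$, and correspondingly $\|(y-z)/2\|_p^p = \sum_{j=1}^d |(y_j - z_j)/2|^p$. So it suffices to prove the one-dimensional bound $\bigl|\langle \Ker_i(u), F_i(w)\rangle - |u - w|^p\bigr| \le (p\delta_i)^p$ for $u = y_j/2, w = z_j/2 \in [-1/2, 1/2]$, and then sum over the $d$ coordinates to pick up the factor of $d$.

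For the one-dimensional claim, first I would unwind the definitions. The map $F_i(w)$ is supported on a single group, the one labeled $c$ where $c$ is the unique integer with $c\delta_i \le w < (c+1)\delta_i$; in that group it places the vector of Taylor coefficients $\bigl(1, p(w - c\delta_i), \tfrac{p(p-1)}{2}(w-c\delta_i)^2, \dots\bigr)$ — i.e. the coefficients multiplying $(x)^0, (x)^1, \dots, (x)^{\lfloor p\rfloor}$ in the degree-$\lfloor p\rfloor$ Taylor expansion of $(x + \text{something})^p$ around the grid point, with step $h := w - c\delta_i \in [0, \delta_i)$. The corresponding group of $\Ker_i$ is $D$ evaluated at the grid point $a := c\delta_i$ (shifted appropriately by the $\pm 0.5$ offsets built into $\Ker_i$; I need to check the indexing so that the group labeled $c$ of $\Ker_i(u)$ is exactly $D(u - a)$ or $D(a - u)$ — whichever sign convention makes $D$'s sign-alternation line up with the Taylor expansion of $t \mapsto |t|^p$). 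Then $D(t) = (|t|^p, \sign(t)|t|^{p-1}, |t|^{p-2}, \dots)$ contains precisely $\tfrac{d^\ell}{dt^\ell}|t|^p$ up to the factorial normalization that is absorbed into $F_i$'s coefficients. Therefore $\langle \Ker_i(u), F_i(w)\rangle$ equals the degree-$\lfloor p\rfloor$ Taylor polynomial of the function $s \mapsto |u - a - s|^p$ (equivalently $t \mapsto |t|^p$) expanded about the point $t_0 = u - a$ and evaluated at the displacement $h = w - c\delta_i$, i.e. at the point $t_0 - h = u - w$. (One should double-check that $u - a$ and $u - w$ have the same sign, or more precisely that the segment between them does not cross $0$ in a way that breaks the Taylor estimate — this is where the fine discretization $\delta_i$ matters and is the crux.)

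The final step is a quantitative Taylor remainder bound: $\bigl||u-w|^p - P_{\lfloor p\rfloor}(u-w)\bigr| \le \frac{\sup|f^{(p')}|}{p'!}\,|h|^{p'}$ where $f(t) = |t|^p$, $p' = \lfloor p\rfloor + 1$, and $|h| < \delta_i$. Since $|f^{(p')}(t)| = |p(p-1)\cdots(p-p'+1)|\cdot |t|^{p - p'}$ and the relevant $t$ values lie in $[-1,1]$, with $p - p' \in (-1, 0]$ the factor $|t|^{p-p'}$ is problematic near $t = 0$ — this is the main obstacle. The resolution must be that $\delta_i$ is chosen fine enough (note $\delta_i = 1/(100 d^2 p' 2^i)$) that whenever $u - a$ is within $\delta_i$ of $0$ one handles that block directly (the value $|u-w|^p \le \delta_i^p$ itself, and the Taylor polynomial is also small), while away from $0$ the derivative is bounded by a constant depending only on $p$ and the remainder is $O((p\delta_i)^{p'}) \le (p\delta_i)^p$ since $\delta_i < 1$ and $p' > p$. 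I would carefully case-split on whether $u - a$ is ``close to'' or ``far from'' $0$ to get the clean bound $(p\delta_i)^p$; this case analysis, together with verifying the exact sign conventions in the indexing of $\Ker_i$ and $D$, is where the real work lies, and the $d$-fold sum then yields the stated $d(p\delta_i)^p$.
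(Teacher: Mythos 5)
Your reduction to the one-dimensional bound and the identification of $\langle\Ker_i(u),F_i(w)\rangle$ as a degree-$\lfloor p\rfloor$ Taylor polynomial of $|u-w|^p$ about the grid-adjacent point is exactly what the paper does (it states this as Claim~\ref{claim:approx-norm} with $x = 0.5(y_j-z_j)$, $x' = 0.5 y_j - c_j\delta_i$, then sums over $j$). The gap is in the remainder estimate, which is the actual content of the lemma.

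You propose the Lagrange form with the $p'$-th derivative, i.e.\ $\frac{\sup|f^{(p')}|}{p'!}|h|^{p'}$ with $f(t)=|t|^p$, $p'=\lfloor p\rfloor+1$, and then try to rescue the singularity of $|f^{(p')}(t)|\sim|t|^{p-p'}$ near $t=0$ by a case split on whether $u-a$ is within $\delta_i$ of $0$. This does not close: even when $|u-a|>\delta_i$, the other endpoint $u-w=(u-a)-(w-c\delta_i)$ of the Taylor interval can be as small as $|u-a|-\delta_i$, which is arbitrarily close to $0$ (take $u-a=1.01\delta_i$). Then $\sup_{t\in[u-w,\,u-a]}|t|^{p-p'}$ is dominated by the endpoint near $0$ and is not $O(\delta_i^{p-p'})$, so the remainder bound $O(\delta_i^{p-p'}\cdot\delta_i^{p'})=O(\delta_i^p)$ you implicitly need in the ``far'' case fails; your stated claim that the derivative is ``bounded by a constant depending only on $p$'' away from $0$ is also incorrect, since the bound genuinely depends on the distance to $0$. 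To repair this route you would need a further case split on $|u-w|$ as well (and bound the Taylor polynomial separately when $u-w\approx0$), which gets messy.

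The paper sidesteps the singular $p'$-th derivative entirely. Its Claim~\ref{claim:approx-norm} writes the remainder in integral form, stopping at the $\lfloor p\rfloor$-th derivative $\sign(\cdot)^{\lfloor p\rfloor}|\cdot|^{p-\lfloor p\rfloor}$, and then uses the uniform H\"older estimate $\bigl|\sign(y)^{\lfloor p\rfloor}|y|^{p-\lfloor p\rfloor}-\sign(x')^{\lfloor p\rfloor}|x'|^{p-\lfloor p\rfloor}\bigr|\le|y-x'|^{p-\lfloor p\rfloor}$ (valid for all $y,x'$, including near $0$, since $p-\lfloor p\rfloor\in(0,1]$). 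Iterating the integral then yields $p(p-1)\cdots(p-\lfloor p\rfloor+1)\,|x-x'|^p\le(p|x-x'|)^p$ with no case analysis. I'd recommend you replace your Lagrange-remainder step with this integral/H\"older argument; it is both cleaner and actually correct.
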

The proof of Lemma \ref{lem:kernel-approx} relies on the following inequality.

\begin{claim}\label{claim:approx-norm}
Let $p>2$.  Then for any $x, x' \in [-1,1]$, we have the inequality
\[
\left\lvert \abs{x}^p -  \sum_{i=0}^{\lfloor p \rfloor}  \frac{p(p-1) \dots (p-i + 1)}{i!}(x - x')^i\sign(x')^{i}\abs{x'}^{p-i}  \right\lvert \leq (p|x - x'|)^{p} \,.
\]
\end{claim}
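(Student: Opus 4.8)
\textbf{Proof plan for Claim~\ref{claim:approx-norm}.}

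The plan is to recognize the sum as a truncated Taylor expansion of $|x|^p$ around $x'$ and to control the tail via the Lagrange (or integral) remainder. Concretely, fix $x, x' \in [-1,1]$ and consider the function $g(t) = |t|^p$. For $t$ of a fixed sign, $g$ is smooth and its $i$-th derivative at $x'$ is exactly $p(p-1)\cdots(p-i+1)\,\sign(x')^i\,|x'|^{p-i}$, so the sum $\sum_{i=0}^{\lfloor p\rfloor} \tfrac{p(p-1)\cdots(p-i+1)}{i!}(x-x')^i \sign(x')^i |x'|^{p-i}$ is the degree-$\lfloor p\rfloor$ Taylor polynomial of $g$ at $x'$ evaluated at $x$. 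The first step is therefore to write $|x|^p$ minus this polynomial as the Taylor remainder $R_{\lfloor p\rfloor}(x) = \tfrac{g^{(\lfloor p\rfloor + 1)}(\xi)}{(\lfloor p \rfloor + 1)!}(x - x')^{\lfloor p\rfloor + 1}$ for some $\xi$ between $x$ and $x'$ (using the integral form of the remainder to sidestep any worry about the sign of $x'$ and the non-smoothness of $|t|^p$ at $t=0$, which matters only when $x$ and $x'$ have opposite signs).

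The second step is to bound the coefficient $\bigl|g^{(\lfloor p\rfloor+1)}(\xi)\bigr|/(\lfloor p\rfloor + 1)!$. Since $\xi$ lies between $x$ and $x'$, both in $[-1,1]$, and $g^{(\lfloor p\rfloor+1)}(\xi) = p(p-1)\cdots(p - \lfloor p\rfloor)\,\sign(\xi)^{\lfloor p\rfloor+1}|\xi|^{p - \lfloor p\rfloor - 1}$, with $p - \lfloor p\rfloor - 1 < 0$ we have to be slightly careful: $|\xi|^{p-\lfloor p\rfloor-1}$ blows up as $\xi \to 0$. The cleanest way around this is to instead bound $\bigl||x|^p - (\text{Taylor poly})\bigr|$ directly by a difference-of-convex-functions estimate: write each term and use that $|x|^p$ and all the partial Taylor sums are built from powers, and bound the remainder by $\sum_{i > \lfloor p \rfloor} \binom{p}{i}|x-x'|^i$ in the region where $|x-x'| \le |x'|$ (a genuine convergent binomial series for $(|x'| + (x-x'))^p$ type expressions), while in the region $|x - x'| > |x'|$ we observe that both $|x|^p \le (2|x-x'|)^p$ and the Taylor polynomial is bounded termwise by $\sum_{i=0}^{\lfloor p\rfloor}\binom{p}{i}|x-x'|^i|x'|^{p-i} \le \sum_{i=0}^{\lfloor p\rfloor}\binom{p}{i}|x-x'|^p$, so the whole difference is $O(2^p |x-x'|^p)$. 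In either regime the final bound collapses to something of the form $C_p |x-x'|^p$ with $C_p \le p^p$ after using $\binom{p}{i} \le p^i/i! \le p^i$ and summing the geometric-type series; the factor $(p|x-x'|)^p$ on the right-hand side is generous enough to absorb all constants.

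The third step is just bookkeeping: verify that $p^p$ (or whatever explicit constant the case analysis produces) is at most $p^p$, i.e. that the claimed right-hand side $(p|x-x'|)^p = p^p|x-x'|^p$ dominates, and note $|x-x'| \le 2$ is not even needed. The main obstacle is the second step: the non-smoothness of $|t|^p$ at the origin when $x$ and $x'$ straddle $0$, together with the negative exponent $p - \lfloor p\rfloor - 1$ in the naive remainder term, means a one-line Taylor-remainder argument does not quite work, and one needs the two-regime split (whether $|x - x'|$ is smaller or larger than $|x'|$) to get a clean $|x-x'|^p$ bound. Once that split is in place, each regime is an elementary binomial-series estimate. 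I expect this claim then feeds directly into Lemma~\ref{lem:kernel-approx} by summing the per-coordinate errors over the $d$ coordinates and using that the discretization spacing is $\delta_i$, so that the relevant $|x - x'|$ is at most $\delta_i$, giving the stated $d(p\delta_i)^p$ bound.
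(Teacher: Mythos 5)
Your plan differs from the paper's proof in a substantive way. You correctly diagnose the central obstacle: the $(\lfloor p\rfloor{+}1)$-th derivative of $g(t)=|t|^p$ is $\propto |t|^{p-\lfloor p\rfloor-1}$ with a negative exponent, so neither the Lagrange nor the na\"ive integral remainder of order $\lfloor p\rfloor{+}1$ can be bounded uniformly near $t=0$. The paper's fix, however, is not a case split. It unrolls the expansion via iterated integrals but deliberately \emph{stops after $\lfloor p\rfloor$ integrations}, so the last integrand is the difference of $\lfloor p\rfloor$-th derivatives, $p(p-1)\cdots(p-\lfloor p\rfloor+1)\bigl(\sign(y)^{\lfloor p\rfloor}|y|^{p-\lfloor p\rfloor}-\sign(x')^{\lfloor p\rfloor}|x'|^{p-\lfloor p\rfloor}\bigr)$. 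Since $\beta:=p-\lfloor p\rfloor\in(0,1)$, this inner function is $\beta$-H\"older, so the integrand is pointwise $\lesssim|y-x'|^{\beta}\le|x-x'|^{\beta}$; integrating over the $\lfloor p\rfloor$-dimensional simplex then gives a single bound $\le p^{\lfloor p\rfloor}|x-x'|^p\le (p|x-x'|)^p$ with no dependence on the signs or relative sizes of $x$ and $x'$. This is the key idea your plan is missing: trade the problematic $(\lfloor p\rfloor{+}1)$-th derivative for H\"older continuity of the $\lfloor p\rfloor$-th.

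There is also a concrete gap in your regime $|x-x'|>|x'|$. There you bound the Taylor polynomial termwise by roughly $\sum_{i\le\lfloor p\rfloor}\binom{p}{i}|x-x'|^p\approx 2^p|x-x'|^p$ and bound $|x|^p\le(2|x-x'|)^p$, for a total on the order of $2^{p+1}|x-x'|^p$. But $2^{p+1}>p^p$ for all $p$ in roughly $(2,2.6)$ (for instance at $p=2.5$, $2^{3.5}\approx 11.3>2.5^{2.5}\approx 9.9$), so the constant chase as outlined does not in fact establish the stated inequality with RHS $(p|x-x'|)^p$; it would only give $C^p|x-x'|^p$ for some $C>p$. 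A weaker constant would still be fine for the downstream use in Lemma~\ref{lem:kernel-approx} (which only needs a $\poly(p,d)\,\delta_i^p$ bound), but as a proof of Claim~\ref{claim:approx-norm} as written the plan does not close. Your regime $|x-x'|\le|x'|$ via the binomial series does work, though bounding $\sum_{i>\lfloor p\rfloor}|\binom{p}{i}|$ by $p^p$ also requires a short argument you have not supplied. Overall: right diagnosis, different (and as sketched, incomplete) remedy; the paper's one-shot H\"older bound is both simpler and uniform.
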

\begin{proof}
Note that the function $\abs{x}^p$ is $\lfloor p \rfloor$-times continuously differentiable and its derivatives are are $p \cdot \sign(x)\abs{x}^{p-1}, p(p-1)\abs{x}^{p-2}, \dots $ and so on.  Thus, we may write
\begin{align*}
\abs{x}^p &= \abs{x'}^p + \int_{x'}^xp \cdot \sign(y)\abs{y}^{p-1}  dy \\ &=  \abs{x'}^p +  p(x - x') \sign(x') \abs{x'}^{p-1} + \int_{x'}^x \int_{x'}^{y_1} p(p-1)\abs{y_2}^{p-2} dy_2 dy_1 \\  & \vdots \\ &= \sum_{i=0}^{\lfloor p \rfloor }  \frac{p(p-1) \dots (p-i + 1)}{i!}(x - x')^i\sign(x')^{i}\abs{x'}^{p-i} \\ & \quad + \int_{x'}^x \dots  \int_{x'}^{y_{\lfloor p \rfloor - 1}} p \cdots (p - \lfloor p \rfloor + 1) (\sign(y_{\lfloor p \rfloor})^{\lfloor p  \rfloor}\abs{y_{\lfloor p \rfloor}}^{p-\lfloor p \rfloor  } - \sign(x')^{\lfloor p  \rfloor}\abs{x'}^{p-\lfloor p \rfloor})  dy_{\lfloor p \rfloor} \dots dy_1\,.
\end{align*}
It now suffices to bound the last term which is the ``error" term..  However since $p - \lfloor p \rfloor < 1$,
\[
\left \lvert \sign(y)^{\lfloor p  \rfloor}\abs{y}^{p-\lfloor p \rfloor  } - \sign(x')^{\lfloor p  \rfloor}\abs{x'}^{p-\lfloor p \rfloor} \right \rvert \leq \abs{y - x'}^{p - \lfloor p \rfloor} \,,
\]
so the error term is at most 
\[
 p \cdots (p - \lfloor p \rfloor + 1) |x - x'|^p \leq (p|x- x'|)^p\,,
\]
and now we immediately get the desired inequality.
\end{proof}

Now we can prove Lemma \ref{lem:kernel-approx}.
\begin{proof}[Proof of Lemma \ref{lem:kernel-approx}]
Let $y = (y_1, \dots , y_d)$ and $z = (z_1, \dots , z_d)$.  For each $j \in [d]$ let $c_j$ be the integer such that $c_j \delta_j \leq 0.5z_j < (c_j+1)\delta_i$.  Now we have
\begin{align*}
\langle G_i(y), H_i(z) \rangle = \sum_{j \in [d]} \bigg( \sum_{i=0}^{\lfloor p \rfloor}\frac{p(p-1) \dots (p-i + 1)}{i!}(0.5z_j - c_j \delta_i)^{i} \sign(0.5y_j - c_j \delta_i)^i\abs{0.5y_j - c_j \delta_i}^{p-i} \bigg)\,.
\end{align*}
Now we can apply Claim \ref{claim:approx-norm} with $x = 0.5(y_j - z_j)$ and $x' = 0.5 y_j - c_j\delta_i$ to bound each term.  Note that $|x - x'| < \delta_i$.  Since the sum contains $d$ terms, we immediately get the desired conclusion. 
\end{proof}

Our full algorithm for learning nearest neighbor partitions in $\lp$ norm is described below.

\subsubsection{Algorithm}

\begin{algorithm}[H]
\caption{{\sc Multiscale Nearest Neighbor Learning for $\lp$ norms} }
\begin{algorithmic}
\State There are two unknown points $x_1,x_2 \in  \Domain$
\State For each $i = 1,2, \dots $ initialize the sets $S_i = [-1,1]^{2p'd(2D_i + 1)}$.  Note that
\[
S_i \supset \{ (G_i(x), G_i(y))| x,y \in [-1/2, 1/2]^d \} 
\]
\For {$t$ in $1,2, \dots , T$}
\State Adversary picks $q_t \in B_d$
\For {$i = 1,2, \dots $}
\State Let $v_{i,t} = (-H_i(q_t), H_i(q_t))$
\State Let $w_{i,t}$ be the width of the set $S_i$ in direction $v_{i,t}$ i.e.
\[
w_{i,t} = \max_{u \in S_i}\langle v_{i,t}, u \rangle  - \min_{u \in S_i }\langle v_{i,t}, u \rangle
\]
\State Let $i_t$ be the smallest integer $i$ such that $w_{i,t} \geq 10^3D_ipd^2 (p\delta_i)^p$
\EndFor
\If{$\Vol\left( \{ z \in S_{i_t}, v_{i_t,t} \cdot z > 0 \}\right) \geq \Vol\left( \{ z \in S_{i_t}, v_{i_t,t} \cdot z < 0 \}\right)$}
\State Guess label $1$ 
\EndIf
\If{$\Vol\left( \{ z \in S_{i_t}, v_{i_t,t} \cdot z > 0 \}\right) < \Vol\left( \{ z \in S_{i_t}, v_{i_t,t} \cdot z < 0 \}\right)$}
\State Guess label $2$ 
\EndIf
\If {true label is $1$} update
\[
S_{i_t} \leftarrow \{ z \in S_{i_t}, v_{i_t,t} \cdot z > -3d (p\delta_{i_t})^{p} \}
\]
\EndIf
\If {true label is $2$} update
\[
S_{i_t} \leftarrow \{ z \in S_{i_t}, v_{i_t,t} \cdot z < 3d(p\delta_{i_t})^{p} \}
\]
\EndIf
\EndFor
\end{algorithmic}
\end{algorithm}
\begin{remark}
In the algorithm, it is stated that we keep track of sets $S_i$ for all integers $i$.  Technically, this is not possible as there are infinitely many sets to keep track of but it will be clear from the analysis that it suffices to track the set $S_i$ only for $i \leq \poly(p,d,T)$ and if $i_t$ is too large, we can simply guess arbitrarily and our loss will be upper bounded by $1/T$. 
\end{remark}

As in the previous section, for each timestep $t$ and integer $i$, we let $S_{i}^{(t)}$ denote the set $S_i$ at the beginning of timestep $t$ in the execution of the algorithm.

\begin{claim}\label{claim:contains-ball}
For all $i$ and all timesteps $t$, the set $S_i^{(t)}$ contains the $L^2$ ball of radius $0.1(p\delta_i)^{p}$ centered around $(G_i(x_1), G_i(x_2))$ where $x_1, x_2$ are the two unknown centers. 
\end{claim}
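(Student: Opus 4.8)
I would prove this by induction on the timestep $t$, establishing the statement for every scale $i$ simultaneously. For the base case $t=1$, the set $S_i^{(1)}$ is the full cube $[-1,1]^{2p'd(2D_i+1)}$; since every coordinate of $G_i(x)$ for $x\in\Ball_d$ is a value of $D$ evaluated at an argument lying in $[-1,1]$ (after the internal halving $x_\ell\mapsto x_\ell/2$), the point $P:=(G_i(x_1),G_i(x_2))$ lies in the cube, and — assuming the centers lie in the interior of $\Ball_d$, which can be arranged by an arbitrarily small perturbation — it lies more than $\rho:=0.1(p\delta_i)^p$ inside it, so the ball around $P$ is contained. All the substance is in the inductive step.

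\textbf{Inductive step.} Suppose $N:=\{z:\norm{z-P}\le\rho\}\subseteq S_i^{(t)}$. If $i\neq i_t$ then $S_i^{(t+1)}=S_i^{(t)}$ and there is nothing to do, so assume $i=i_t$. Then $S_i^{(t+1)}=S_i^{(t)}\cap\Pi$, where $\Pi$ is the halfspace $\{z:v_{i,t}\cdot z>-3d(p\delta_i)^p\}$ when the true label is $1$ (and the symmetric halfspace $\{z:v_{i,t}\cdot z<3d(p\delta_i)^p\}$ when it is $2$), and $v_{i,t}=(-H_i(q_t),H_i(q_t))$. By the inductive hypothesis it suffices to show $N\subseteq\Pi$. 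Take the true label to be $1$, so $\pnorm{q_t-x_1}{p}\le\pnorm{q_t-x_2}{p}$. Expanding block-wise,
\[
v_{i,t}\cdot P=\inner{G_i(x_2)}{H_i(q_t)}-\inner{G_i(x_1)}{H_i(q_t)},
\]
and applying Lemma \ref{lem:kernel-approx} to each inner product (with slack $d(p\delta_i)^p$ each way) gives
\[
v_{i,t}\cdot P\;\ge\;2^{-p}\Big(\pnorm{x_2-q_t}{p}^p-\pnorm{x_1-q_t}{p}^p\Big)-2d(p\delta_i)^p\;\ge\;-2d(p\delta_i)^p .
\]
To pass from $P$ to all of $N$ I would bound $\norm{v_{i,t}}$: each block $F_i(\cdot)$ of $H_i(q_t)$ has exactly one nonzero group, with entries $1,\ p(x-c\delta_i),\ \frac{p(p-1)}{2}(x-c\delta_i)^2,\dots$; since $|x-c\delta_i|<\delta_i$ and $p\delta_i<1$, the $k$-th entry is at most $(p\delta_i)^k\le 1$ in absolute value, so the group has squared norm at most $1+\sum_{k\ge1}(p\delta_i)^{2k}<2$. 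Summing over the $d$ blocks, $\norm{H_i(q_t)}<\sqrt{2d}$ and hence $\norm{v_{i,t}}<2\sqrt d$. Therefore, for any $z\in N$,
\[
v_{i,t}\cdot z\;\ge\;v_{i,t}\cdot P-\norm{v_{i,t}}\,\rho\;\ge\;-2d(p\delta_i)^p-0.2\sqrt{d}\,(p\delta_i)^p\;>\;-3d(p\delta_i)^p ,
\]
so $N\subseteq\Pi$. The case of true label $2$ is entirely symmetric ($\pnorm{q_t-x_2}{p}\le\pnorm{q_t-x_1}{p}$ gives $v_{i,t}\cdot P\le 2d(p\delta_i)^p$, and then $v_{i,t}\cdot z\le 2d(p\delta_i)^p+0.2\sqrt d(p\delta_i)^p<3d(p\delta_i)^p$). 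This closes the induction.

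\textbf{Main obstacle.} The real difficulty is not conceptual but careful bookkeeping of constants: one must verify that the approximation slack from Lemma \ref{lem:kernel-approx}, used twice (on $\inner{G_i(x_1)}{H_i(q_t)}$ and $\inner{G_i(x_2)}{H_i(q_t)}$), together with the perturbation $\norm{v_{i,t}}\rho$ incurred by fattening $P$ into the ball $N$, still lands strictly inside the cut threshold $3d(p\delta_i)^p$ used by the algorithm. This is exactly why the threshold carries the factor $3$ rather than $1$ and why the ball radius is taken to be the small constant $0.1$ times $(p\delta_i)^p$; the estimate $\norm{v_{i,t}}<2\sqrt d$ — which works because the leading $1$ in each Taylor block dominates every other entry once $p\delta_i\ll 1$ — is what makes the final inequality $2d+0.2\sqrt d<3d$ go through for all $d\ge 1$. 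A minor loose end is the base-case check at the cube boundary noted above, which is harmless under the interior assumption on the centers.
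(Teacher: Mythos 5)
Your proof follows essentially the same route as the paper's: induction on the timestep, writing $v_{i,t}\cdot P$ as a difference of two kernel inner products, invoking Lemma~\ref{lem:kernel-approx} twice (total slack $2d(p\delta_i)^p$), using the ordering of $\pnorm{q_t-x_1}{p}$ and $\pnorm{q_t-x_2}{p}$ to get the sign, and then absorbing the perturbation from fattening $P$ into the ball $N$ via a bound on $\norm{v_{i,t}}$. Two small points of comparison are worth noting. First, you derive $\norm{v_{i,t}}<2\sqrt d$ by summing the geometric-type series of Taylor block entries, whereas the paper quotes the looser $\norm{v_{i,t}}\le 10d$; both land comfortably below the $3d(p\delta_i)^p$ threshold, so nothing changes downstream, but your accounting is tighter and makes explicit why the constant $3$ and radius $0.1(p\delta_i)^p$ suffice. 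Second, you flag the base case as not entirely free: the coordinates of $G_i(x_1),G_i(x_2)$ can sit very close to the cube boundary $\pm 1$ (and exactly at it in degenerate configurations), so the ball needs a little room. The paper dismisses the base case as ``obvious,'' which is a slight overstatement; your handling by assuming the centers lie strictly inside $\Ball_d$ (or equivalently rescaling slightly) is the right sort of fix, though one should state precisely that it produces enough clearance on the order of $0.1(p\delta_i)^p$ uniformly in $i$. Overall the argument is correct and faithful to the paper's, with modest improvements in carefulness rather than a different method.
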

\begin{proof}
We will prove the claim by induction on $t$.  The base case is obvious.  Now we do the induction step.  Consider a timestep $t$.  Assume that the adversary gives us the point $q_t$.  WLOG the true label of $q_t$ is $1$ i.e. 
\[
\norm{q_t - x_1}_p^p \leq \norm{q_t - x_2}_p^p \,.
\]
Note that $\norm{v_{i,t}}_2 \leq 10d$ for all $i,t$ (this uses the fact that in Definition \ref{def:taylor-map}, $|x - c\delta_i| \leq \delta_i \leq 1/p$).  Thus, any point $z$ in the ball of radius $0.1(p\delta_i)^{p}$ centered around $(G_i(A), G_i(B))$ satisfies 
\[
  \abs{v_{i,t} \cdot z  - v_{i,t} \cdot (G_i(x_1), G_i(x_2))} \le d(p\delta_i)^{p} \,.
\]
However
\[
  v_{i,t} \cdot (G_i(x_1), G_i(x_2)) = -\inner{H_i(q_t)}{G_i(x_1)}  + \inner{H_i(q_t)}{G_i(x_2)}
\]
and we can now use Lemma \ref{lem:kernel-approx} to deduce
\[
  \abs{\left(-\pnorm{\frac{q_t - x_1}{2}}{p}^p + \pnorm{\frac{q_t - x_2}{2}}{p}^p\right) - v_{i,t} \cdot (G_i(x_1), G_i(x_2))} \le  2d(p\delta_i)^{p} \,.
\]
Thus, by the triangle inequality, we must actually have for all $z$ in the ball of radius $0.1(p\delta_i)^{p}$ centered around $(G_i(x_1), G_i(x_2))$,
\[
  v_{i,t} \cdot z \geq -  3d(p\delta_i)^{p}
\]
which implies that all of these points are all contained in $S_i$ after the update step, completing the induction.
\end{proof}

We will also need the following geometric fact.
\begin{lemma}[From \cite{liu2020optimal}]\label{lem:strip-volume}
  Let $S \subset \R^d$ be a convex polytope and $v$ be a unit vector. Assume that the width of $S$ in direction $v$ is at least $8d\eps$. Then any strip of width $\eps$ normal to direction $v$ contains at most $1/4$ of the volume of $S$.
\end{lemma}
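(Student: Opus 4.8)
The plan is to reduce the statement to a one‑dimensional fact about the cross‑sectional volume profile of $S$ and then apply the Brunn–Minkowski inequality. First I would rotate coordinates so that $v = e_1$, write $[a,b]$ for the projection of $S$ onto the $x_1$‑axis, and set $w = b-a \ge 8d\eps$. The case $d=1$ is immediate (a strip of width $\eps$ occupies a fraction $\le \eps/w \le 1/8$ of an interval of length $w$), so I may assume $d \ge 2$; I may also assume $S$ is full‑dimensional, since otherwise the claim is vacuous (or, interpreting volume inside the affine hull of $S$, it is a lower‑dimensional instance of the same lemma). Define the slice profile $A(t) = \Vol_{d-1}\big(S \cap \{x_1 = t\}\big)$ for $t \in [a,b]$, and let $\Sigma = \{c \le x_1 \le c+\eps\}$ be an arbitrary strip, so that $\Vol(S \cap \Sigma) = \int_{[c,c+\eps]\cap[a,b]} A(t)\,dt$.

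The key step is the concavity of a power of $A$. By convexity of $S$, for $t = \lambda t_1 + (1-\lambda)t_2$ the slice $S \cap \{x_1 = t\}$ contains the Minkowski combination $\lambda\big(S \cap \{x_1 = t_1\}\big) + (1-\lambda)\big(S \cap \{x_1 = t_2\}\big)$ inside the hyperplane $\{x_1 = t\} \cong \R^{d-1}$; applying the Brunn–Minkowski inequality in $\R^{d-1}$ (together with the scaling identity $\Vol(\lambda K)^{1/(d-1)} = \lambda \Vol(K)^{1/(d-1)}$) shows that $f(t) := A(t)^{1/(d-1)}$ is a nonnegative concave function on $[a,b]$.

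The remaining estimate is one‑dimensional calculus. Let $M = \max_{[a,b]} f$, attained at some $x^* \in [a,b]$. On the one hand, $\Vol(S \cap \Sigma) = \int_{[c,c+\eps]\cap[a,b]} A(t)\,dt \le \eps\,M^{d-1}$. On the other hand, concavity of $f$ together with $f \ge 0$ gives $f(t) \ge M\,\frac{t-a}{x^*-a}$ on $[a,x^*]$ and $f(t) \ge M\,\frac{b-t}{b-x^*}$ on $[x^*,b]$ (dropping whichever piece is degenerate if $x^* = a$ or $x^* = b$); raising to the $(d-1)$st power and integrating yields $\int_a^{x^*} A \ge M^{d-1}(x^*-a)/d$ and $\int_{x^*}^b A \ge M^{d-1}(b-x^*)/d$, hence $\Vol(S) = \int_a^b A(t)\,dt \ge M^{d-1} w/d$. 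Combining the two bounds, $\Vol(S \cap \Sigma)/\Vol(S) \le \eps M^{d-1}\big/\big(M^{d-1}w/d\big) = d\eps/w \le 1/8 \le 1/4$, as claimed.

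The only genuinely non‑elementary ingredient is the Brunn–Minkowski concavity of the slice profile $A^{1/(d-1)}$; everything after that is elementary, and the only points requiring care are the degenerate cases (the maximum of $f$ occurring at an endpoint of $[a,b]$, $S$ not being full‑dimensional, and $d = 1$), none of which causes real difficulty. I would also remark that the hypothesis $w \ge 8d\eps$ is stronger than necessary: the argument actually delivers the ratio bound $d\eps/w$, so $w \ge 4d\eps$ already suffices for the stated conclusion of $1/4$, while the stated hypothesis gives $1/8$.
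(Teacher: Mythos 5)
Your proof is correct, but it routes through a different key lemma than the paper does. The paper's argument is more elementary: it picks a maximal cross-section $C$ of $S$ normal to $v$, observes that at least one of the two extreme points of $S$ in direction $v$ lies at distance $\ge 4d\eps$ from the hyperplane containing $C$, and then uses the fact that the cone with apex at that extreme point and base $C$ is \emph{literally contained} in $S$ (by convexity), giving $\Vol(S) \ge \tfrac{1}{d}\cdot 4d\eps \cdot \Vol(C) = 4\eps\,\Vol(C)$. Combined with the trivial strip bound $\Vol(S \cap \Sigma) \le \eps\,\Vol(C)$, this gives $1/4$ exactly.

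Your proof replaces that geometric containment with Brunn--Minkowski: you show $A^{1/(d-1)}$ is concave, then integrate the two linear lower bounds on either side of the maximum to get $\Vol(S) \ge M^{d-1} w/d$. This is not just a cosmetic change. The cone argument uses only one of the two pieces (it cannot build a bicone inside $S$, since the bicone generally escapes $S$), which costs a factor of $2$; Brunn--Minkowski lets you use both, yielding the cleaner ratio bound $d\eps/w$ and, under the stated hypothesis, $1/8$ rather than $1/4$. The tradeoff is that the paper's argument is entirely self-contained (convexity $\Rightarrow$ the cone sits inside $S$, and the volume of a cone), while yours imports the Brunn--Minkowski inequality. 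Either is a fine proof; yours is slightly tighter and the observation that $w \ge 4d\eps$ already suffices for $1/4$ is a correct strengthening, though it is not needed for the application in the paper. One small point to make explicit: $A$ is upper semicontinuous on the compact interval $[a,b]$ for a polytope (in fact continuous on the interior), so the supremum $M$ is attained and your endpoint-degenerate cases are genuinely the only ones to check.
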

\begin{proof}
  Let $C$ be a cross section of $S$ normal to direction $v$ with maximal area.  Let $u_1,u_2$ be two points in $S$ that minimize and maximize the inner product with $v$ respectively.  Then either $u_1$ or $u_2$ is distance at least $4d\eps$ from the hyperplane containing $C$.  WLOG $u_1$ is at least $4d\eps$ away from this hyperplane.  Then the cone containing $u_1$ and $C$ must be contained in $S$ (since $S$ is convex) so
  \[
    \Vol(S) \geq 4\eps d \cdot \Vol(C) \cdot \frac{1}{d} = 4\eps \Vol{C} \,.
  \]
  On the other hand, by the maximality of $C$, the volume contained in any $\eps$-width strip is at most $\eps \Vol(C)$ so we are done.
\end{proof}

\begin{claim}\label{claim:volume-decrease}
Consider a timestep $t$.  If the learner incurs nonzero loss then 
\[
  \Vol\left(S_{i_t}^{(t+1)}\right)
    \le \frac34 \Vol\left(S_{i_t}^{(t)}\right) \,. 
\]
\end{claim}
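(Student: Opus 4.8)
The plan is to mimic the volume-halving analysis behind Lemma~\ref{lem:const_split}, adapted to the fact that here an incorrect guess does not cut the current set by a halfspace through the volume-bisecting hyperplane, but by one shifted outward by the slack term $\eta := 3d(p\delta_{i_t})^p$.

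First, since a nonzero loss is incurred only when the learner guesses the wrong label, I would split into two symmetric cases and, without loss of generality, assume the true label of $q_t$ is $1$ while the learner guessed $2$. Write $v = v_{i_t,t}$ and $S = S_{i_t}^{(t)}$, noting that $S$ is a convex polytope (a cube intersected with finitely many halfspaces) and $v \neq 0$ (its defining formula has coordinates $\pm 1$ by Definition~\ref{def:taylor-map}). Guessing label $2$ occurs only when $\Vol(\{z\in S:\inner{v}{z}>0\}) < \Vol(\{z\in S:\inner{v}{z}<0\})$; since $v\neq 0$ and $S$ is full-dimensional, the hyperplane $\inner{v}{z}=0$ is $\Vol$-null, so this forces $\Vol(\{z\in S:\inner{v}{z}>0\}) \le \tfrac12\Vol(S)$. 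After the update for true label $1$ we have $S_{i_t}^{(t+1)} = \{z\in S:\inner{v}{z}>-\eta\}$, which decomposes as the disjoint union of $\{z\in S:\inner{v}{z}>0\}$ and the strip $W = \{z\in S:-\eta<\inner{v}{z}\le 0\}$. Hence it suffices to show $\Vol(W)\le\tfrac14\Vol(S)$.

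This is where Lemma~\ref{lem:strip-volume} enters. Normalizing $\hat v = v/\norm{v}_2$, the set $W$ lies in a strip of width $\eps := \eta/\norm{v}_2$ normal to $\hat v$, while the width of $S$ in direction $\hat v$ equals $w_{i_t,t}/\norm{v}_2$; so the hypothesis of Lemma~\ref{lem:strip-volume} ($\width(S;\hat v)\ge 8d\eps$) reduces to $w_{i_t,t}\ge 8d\eta = 24d^2(p\delta_{i_t})^p$, and conveniently the $\norm{v}_2$ factors cancel, so the estimate $\norm{v}_2\le 10d$ from the proof of Claim~\ref{claim:contains-ball} is not even needed. This inequality is exactly the defining property of the chosen scale $i_t$: by construction $w_{i_t,t}\ge 10^3 D_{i_t}pd^2(p\delta_{i_t})^p$, and since $D_{i_t}\ge 1$ and $p>1$, the right-hand side is at least $10^3 d^2(p\delta_{i_t})^p\ge 24d^2(p\delta_{i_t})^p$. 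Lemma~\ref{lem:strip-volume} then yields $\Vol(W)\le\tfrac14\Vol(S)$, whence $\Vol(S_{i_t}^{(t+1)})\le\tfrac12\Vol(S)+\tfrac14\Vol(S)=\tfrac34\Vol(S)$. The case where the true label is $2$ (and the learner guessed $1$) is identical after replacing $v$ by $-v$.

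I do not anticipate a genuine obstacle here; the only subtlety is the bookkeeping just described — an incorrect guess trims $S$ by a halfspace offset by $\eta$ rather than a pure bisecting cut, so one must separately absorb the thin strip $W$, and the threshold $10^3 D_i p d^2(p\delta_i)^p$ in the definition of $i_t$ (with its generous constant and $D_i$ factor) is chosen precisely to make that strip contribute at most a quarter of the volume.
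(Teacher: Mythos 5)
Your decomposition is exactly the paper's: split $S_{i_t}^{(t+1)}$ into the half-space $\{z\in S:\inner{v}{z}>0\}$ (at most $\tfrac12\Vol(S)$ because the learner guessed $2$) and the strip $W$, then bound $\Vol(W)\le\tfrac14\Vol(S)$ via Lemma~\ref{lem:strip-volume}, with the observation that the normalization $\norm{v}_2$ cancels. The structure of the argument matches.

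However, there is a genuine error in the step where you instantiate Lemma~\ref{lem:strip-volume}. That lemma is stated for a convex polytope $S\subset\R^d$, and the factor ``$8d$'' in its hypothesis uses $d$ for the \emph{ambient dimension of the space $S$ lives in}. Here $S_{i_t}^{(t)}$ lives in $\R^{2p'd(2D_{i_t}+1)}$, not $\R^d$ --- the paper flags this explicitly (``the dimension of the space in which $S_{i_t}^{(t)}$ lives is $2p'd(2D_{i_t}+1)$''). So the required inequality is not $w_{i_t,t}\ge 8d\eta = 24d^2(p\delta_{i_t})^p$ but rather
\[
w_{i_t,t}\ \ge\ 8\cdot\bigl(2p'd(2D_{i_t}+1)\bigr)\cdot 3d(p\delta_{i_t})^p\ =\ 48\,p'\,d^2\,(2D_{i_t}+1)\,(p\delta_{i_t})^p\,.
\]
The threshold $w_{i_t,t}\ge 10^3 D_{i_t}\,p\,d^2(p\delta_{i_t})^p$ defining $i_t$ does cover this --- using $p'=\lfloor p\rfloor+1\le p+1\le 2p$ and $2D_{i_t}+1\le 3D_{i_t}$ one gets $48p'(2D_{i_t}+1)\le 288\,p\,D_{i_t}\le 10^3 p D_{i_t}$ --- but the $D_{i_t}$ factor and the large numerical constant are \emph{essential} for this to hold, not ``generous'' slack as you describe them. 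Your version of the bound misses the $D_{i_t}$ blow-up in dimension entirely, and so substantially misjudges how tight the construction is.
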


\begin{proof}
Without loss of generality, the true label is $1$ and our guess was $2$. Then we must have  
\[
  \Vol\left( \{ z \in S_{i_t}^{(t)}, v_{i_t,t} \cdot z > 0 \}\right) \le \frac12 \Vol\left(S_{i_t}^{(t)}\right) \,.
\]
Also since $w_{i,t} \geq 10^3D_{i_t}pd^2 (p\delta_{i_t})^{p}$ and the dimension of the space in which $S_{i_t}^{(t)}$ lives is $2p'd(2D_{i_t} + 1)$, Lemma \ref{lem:strip-volume} gives us that   
\[
\Vol\left( \{ z \in S_{i_t}^{(t)}, -3d(p\delta_{i_t})^{p} < v_{i_t,t} \cdot z < 0 \}\right) \le \frac14 \Vol\left(S_{i_t}^{(t)}\right) \,.
\]
Thus, we deduce that 
\[
  \Vol(S_{i_t}^{(t+1)}) = \Vol\left( \{ z \in S_{i_t}^{(t)}, v_{i_t,t} \cdot z  > -3d(p\delta_{i_t})^{p}  \}\right) \le \frac34 \Vol\left(S_{i_t}^{(t)}\right) \,,
\]
as desired.
\end{proof}

\begin{theorem}\label{thm:general-p-norm}
The total loss incurred by {\sc Multiscale Nearest Neighbor Learning} is at most
\[
\frac{\poly(d,p)  }{\norm{x_1-x_2}_p} \cdot \left(\frac{1}{p-2}\right)^2 \,.
\]
\end{theorem}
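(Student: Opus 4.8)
The plan is to charge every unit of loss to a geometric decrease of one of the knowledge sets $S_i$, bucketed by the scale $i$ at which the loss is incurred.

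\emph{Counting mistakes per scale.} Fix a scale $i$ and let $N_i = 2p'd(2D_i+1) = \poly(d,p)\cdot 2^i$ be the ambient dimension of $S_i$. By Claim~\ref{claim:volume-decrease}, on every round where the learner errs \emph{and} has chosen scale $i_t=i$, the volume $\Vol(S_i)$ drops by a factor $3/4$; by Claim~\ref{claim:contains-ball}, $\Vol(S_i)$ never falls below the volume of a Euclidean ball of radius $r_i := 0.1(p\delta_i)^p$. Since $S_i$ starts as $[-1,1]^{N_i}$, the number $m_i$ of mistakes made at scale $i$ obeys $m_i \le \log_{4/3}\bigl(2^{N_i}/\Vol(\text{ball of radius }r_i\text{ in }\R^{N_i})\bigr)$, and the standard ball--volume estimate gives $m_i = O\!\bigl(N_i(\log N_i + \log(1/r_i))\bigr) = \poly(d,p)\cdot 2^i\,(i+\log(dp))$. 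The total loss is then at most $\sum_i m_i\cdot \ell_i^{\max}$, where $\ell_i^{\max}$ bounds the loss on any mistake at scale $i$.

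\emph{Bounding the loss of a single mistake.} Consider a mistake at round $t$ with chosen scale $i_t$; WLOG the true center is $x_1$, so $a:=\pnorm{q_t-x_2}{p}\ge b:=\pnorm{q_t-x_1}{p}$ and the loss is $a-b$. Put $\Delta_t := \pnorm{(q_t-x_2)/2}{p}^p - \pnorm{(q_t-x_1)/2}{p}^p = \tfrac1{2^p}(a^p-b^p)\ge 0$. The crucial estimate is $\Delta_t = O(\tau_{i_t-1})$, where $\tau_i := 10^3 D_i p d^2 (p\delta_i)^p$ is the scale-$i$ threshold. To see this: minimality of $i_t$ forces the width $w_{i_t-1,t}$ of $S_{i_t-1}^{(t)}$ in direction $v_{i_t-1,t}=(-H_{i_t-1}(q_t),H_{i_t-1}(q_t))$ to be $<\tau_{i_t-1}$; but $S_{i_t-1}^{(t)}$ contains both the ``true'' point $c^\star=(G_{i_t-1}(x_1),G_{i_t-1}(x_2))$, for which Lemma~\ref{lem:kernel-approx} gives $\inner{v_{i_t-1,t}}{c^\star} = \Delta_t \pm 2d(p\delta_{i_t-1})^p$, and a ``diagonal'' point $z_0=(G_{i_t-1}(u),G_{i_t-1}(u))$ (any $u\in\Ball_d$), for which $\inner{v_{i_t-1,t}}{z_0}=0$ \emph{identically} (the two blocks of $v_{i_t-1,t}$ cancel) and which survives every past update since $0>-3d(p\delta)^p$. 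Two points of a set of width $<\tau_{i_t-1}$ differ by at most $\tau_{i_t-1}$, so $\Delta_t \le \tau_{i_t-1}+2d(p\delta_{i_t-1})^p\le 2\tau_{i_t-1}$ (for $i_t=1$ simply use $\Delta_t\le 1$). Now convert back: $a+b\ge\pnorm{x_1-x_2}{p}$ gives $a\ge\pnorm{x_1-x_2}{p}/2$, and convexity of $t\mapsto t^{p-1}$ ($p\ge 2$) gives $a^p-b^p=\int_b^a p t^{p-1}\,dt\ge p(a-b)\bigl(\tfrac{a+b}2\bigr)^{p-1}\ge p(a-b)(\pnorm{x_1-x_2}{p}/2)^{p-1}$, whence
\[
  \ell_{i_t}^{\max}\ \le\ a-b\ \le\ \frac{2^p\,\Delta_t}{p\,(\pnorm{x_1-x_2}{p}/2)^{p-1}}\ =\ O\!\left(\frac{\tau_{i_t-1}}{\pnorm{x_1-x_2}{p}^{\,p-1}}\right).
\]

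\emph{Summation.} Write $\tau_i = C_{d,p}\,2^{-i(p-1)}$, where $C_{d,p}$ is polynomially bounded (the $100^{-p}$ and $d^{4-2p}$ factors inside $\tau_i$ prevent any blow-up). Then the $i$-th bucket contributes $m_i\cdot\ell_i^{\max} = \frac{\poly(d,p)}{\pnorm{x_1-x_2}{p}^{p-1}}\,(i+\log dp)\,2^{-i(p-2)}$, and $\sum_{i\ge1} i\,2^{-i(p-2)} = \frac{2^{-(p-2)}}{(1-2^{-(p-2)})^2}\le\frac{1}{(\ln 2)^2(p-2)^2}$ (using $v/\sinh v\le 1$), the $\log(dp)$ piece being lower order. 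This yields a bound of the form $\frac{\poly(d,p)}{\pnorm{x_1-x_2}{p}}\cdot\bigl(\tfrac{1}{p-2}\bigr)^2$ once the exponent on $\pnorm{x_1-x_2}{p}$ is reduced from $p-1$ to $1$ (see below), as claimed.

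\emph{Main obstacle.} The heart is the estimate $\Delta_t=O(\tau_{i_t-1})$: it pins the per-mistake loss to the threshold of the next-coarser scale rather than to the a priori uncontrolled current width $w_{i_t,t}$, and it works only because a witness for ``$\Delta=0$'' lies in every $S_i$ and is exactly orthogonal to every query direction. The remaining difficulty is quantitative: the convexity step is tight only when $b\approx 0$ (the query sits essentially at a center), where it loses a factor $\pnorm{x_1-x_2}{p}^{p-1}$ instead of $\pnorm{x_1-x_2}{p}$; this is recovered by observing that $\Delta_t=O(\tau_{i_t-1})$ together with $\Delta_t\approx(\pnorm{x_1-x_2}{p}/2)^p$ in that regime forces $i_t = O(\log(1/\pnorm{x_1-x_2}{p}))$, a range of scales where $m_i$ is small and the trivial bound $\text{loss}\le\pnorm{x_1-x_2}{p}$ suffices. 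One must also carry through the $\poly(d,p)$ and $2^{\Theta(p)}$ factors in $\tau_i$, $N_i$, $m_i$, handle the $i=1$ scale, and truncate the infinitely many scales as in the algorithm's remark.
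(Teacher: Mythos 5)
Your proposal follows essentially the same route as the paper's proof: bound the number of nonzero-loss rounds at each scale $i$ via Claims~\ref{claim:contains-ball} and \ref{claim:volume-decrease}, use the minimality of $i_t$ together with Lemma~\ref{lem:kernel-approx} and a zero-inner-product witness contained in $S_{i_t-1}$ (you use a diagonal point $(G_{i-1}(u),G_{i-1}(u))$, the paper uses the origin --- both have identically vanishing inner product with $v_{i,t}$) to show the loss at scale $i$ is $O(\tau_{i-1}/\pnorm{x_1-x_2}{p}^{p-1})$, and then split the sum at $i_0\approx\log(d/\pnorm{x_1-x_2}{p})$, using the trivial bound $\text{loss}\le\pnorm{x_1-x_2}{p}$ for $i\le i_0$ and the geometric decay $2^{-(p-2)i}$ for $i>i_0$. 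The only cosmetic differences are the choice of witness point and the phrasing of the convexity step (integral form of $a^p-b^p$ vs.\ dividing by $\max(a,b)^{p-1}$); the ``Main obstacle'' discussion is a slightly informal framing of exactly the $i_0$-split that the paper carries out explicitly.
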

\begin{proof}
Combining Claim \ref{claim:contains-ball} and Claim \ref{claim:volume-decrease} implies that for any index $i$, the number of times that $i_t = i$ and we incur nonzero loss is at most
\[
O \left(\log \left( \frac{2^{2p'd(2D_i + 1)} }{ \left(0.1(p\delta_i)^{p}\right)^{2p'd(2D_i + 1)}\Vol(B_{2p'(2D_i + 1)}(0,1)) } \right) \right)  = O\left(  p^3d^32^i (i + \log dp)  \right) \,.
\]
Note that if $i_t = i$, then by the definition of our algorithm
\[
w_{i-1, t}  = \max_{u \in S_{i-1}^{t}}\langle v_{i-1,t}, u \rangle  - \min_{u \in S_{i-1}^{t} }\langle v_{i-1,t}, u \rangle \leq  10^3D_{i-1}pd^2 (p\delta_{i-1})^{p} \,.
\]
Note that the origin is clearly always contained in $S_{i-1}$ so using Claim \ref{claim:contains-ball}, we get that 
\[
 \left \lvert \langle v_{i-1,t}, (G_{i-1}(x_1), G_{i-1}(x_2)) \rangle \right \rvert \leq  10^3D_{i-1}pd^2 (p\delta_{i-1})^{p} = O(d^2p^2(p\delta_{i-1})^{p-1}) \,.
\]
Lemma \ref{lem:kernel-approx} implies that
\[
\left \lvert  \left(-\norm{\frac{q_t - x_1}{2}}_p^p + \norm{\frac{q_t - x_2}{2}}_p^p\right) - v_{i-1,t} \cdot (G_{i-1}(x_1), G_{i-1}(x_2))\right \rvert \leq  2d (p\delta_{i-1})^{p}  \,,
\]
so we deduce that 
\[
\left \lvert  \norm{q_t - x_1}_p^p - \norm{q_t - x_2}_p^p\right \rvert  \leq  O(d^2p^2(4 p\delta_{i-1})^{p-1}) \leq O( d^2p^2 ( 10 \cdot 2^id^2)^{-(p-1)})\,.
\]
Note that our loss at each round may be bounded as 
\[
\left \lvert \norm{q_t - x_1}_p - \norm{q_t - x_2}_p \right \rvert \leq \frac{\left \lvert  \norm{q_t - x_1}_p^p - \norm{q_t - x_2}_p^p\right \rvert }{\max( \norm{q_t - x_1}_p,\norm{q_t - x_2}_p  )^{p-1} } \leq \frac{\left \lvert  \norm{q_t - x_1}_p^p - \norm{q_t - x_2}_p^p\right \rvert }{ \norm{0.5(x_1 - x_2)}_p^{p-1} } \,.
\]
Alternatively, we may also use the trivial bound 
\[
  \abs{\pnorm{q_t - x_1}{p} - \pnorm{q_t - x_2}{p}}
    \le \pnorm{x_1 - x_2}{p} \,.
\]

Let $i_0$ be the largest positive integer such that 
\[
\frac{i_0}{2^{i_0}} \geq  0.1 d^{-1} \norm{x_1 - x_2}_p \,.
\]
Note that $i_0 \geq 2$.

We can now bound the total loss of our algorithm, say $L$, as follows:
\begin{align*}
  L & \leq  \sum_{i = 1}^{i_0} \norm{x_1 - x_2}_p O\left(  p^3d^32^i (i + \log dp) \right) \\
    & \quad + \sum_{i = i_0 + 1}^{\infty}  \frac{\left \lvert  \norm{q_t - x_1}_p^p - \norm{q_t - x_2}_p^p\right \rvert}{\norm{0.5(x_1 - x_2)}_p^{p-1}} \cdot  O\left(  p^3d^32^i (i + \log dp)  \right) \\
    & \leq \sum_{i = 1}^{i_0} \norm{x_1 - x_2}_p O\left(  p^3d^32^i (i + \log dp)  \right) \\
    & \quad + \frac{1}{{\norm{0.5(x_1 - x_2)}_p^{p-1}}}  \sum_{i = i_0 + 1}^{\infty}  O( d^2p^2 (10 \cdot 2^id^2)^{-(p-1)}) \cdot  O\left(  p^3d^32^i (i + \log dp)  \right)  \\
    & \leq  \poly(d,p)  \frac{1}{\norm{x_1 - x_2}_p}  \sum_{i=0}^{\infty} \frac{ 1 + i}{2^{(p-2)i}} \\
    & \leq \poly(d,p)  \cdot \frac{1}{\norm{x_1 - x_2}_p} \cdot  \frac{1}{(1 - 2^{-(p-2)})^2} \\
    & \leq \frac{\poly(d,p)  }{\norm{x_1 - x_2}_p} \cdot \left(\frac{1}{p-2}\right)^2\,.
\end{align*}

\end{proof}

\begin{proof}[Proof of Theorem \ref{thm:general-p-norm-k-point}]
In light of Theorem \ref{thm:two-to-many}, it suffices to argue that {\sc Multiscale Nearest Neighbor Learning} is a potential-based algorithm.  Indeed, the corresponding potential is defined as follows. Let $i_0$ be the largest positive integer such that 
\[
\frac{i_0}{2^{i_0}} \geq  0.1 d^{-1} \Delta \,.
\]
Define 
\begin{align*}
  P_t &= \sum_{i=1}^{i_0} \Delta \log \frac{\Vol\left( S_i^{(t)}\right)}{\Vol(B_{2p'(2D_i + 1)}(0, 0.1 \cdot (p \delta_i)^p) )}  \\
    & \quad + \sum_{i = i_0 + 1}^{\infty} \frac{d^2p^2}{(10 \cdot 2^id^2)^{p-1}(0.5\Delta)^{p-1}}\log \frac{\Vol\left( S_i^{(t)}\right)}{\Vol(B_{2p'(2D_i + 1)}(0, 0.1 \cdot (p \delta_i)^p) )} \,.
\end{align*}
The proof of Theorem \ref{thm:general-p-norm} immediately implies that $P_t$ is a valid potential and we are done.
\end{proof}
\section{Learning General Convex Regions: Missing Proofs}
\label{apx:general-regions}

In this appendix, we present the full proof of Theorem~\ref{thm:general-regions}.

Our construction is based on only picking points on the surface of the unit ball (i.e. the unit hypersphere). There are two key factors to ensuring that the algorithm accrues enough total error; we need to ensure that (i) each time we choose a point, it could lie in either of the two regions and (ii) the point is sufficiently far from the region it is not in. To guarantee the former, we choose our points by considering a separating hyperplane between the two regions so far.

To guarantee the latter, we will choose our points to be $\epsilon$-far from each other, for some $\epsilon$ based on the total number of points we need to choose $T$. This, combined with the fact that we chose points on the surface of the unit ball, implies that the minimum penalty for a mistake is $\Omega(\epsilon^2)$. Note that for both of our guarantees to simultaneously work out, we actually need our separating hyperplane to go through the origin (so the resulting intersection has enough surface area for this part of the argument).

The requirement that points be $\epsilon$ far from each other limits the total number of points we can choose. Roughly speaking, each point removes on the order of a $\epsilon^{d-2}$-fraction of the (hyper-)surface area of the intersection of the separating hyperplane with the unit hypersphere. Maximizing $\epsilon$ while ensuring we can pick $T$ points yields the desired bound in the theorem statement.

Our construction will utilize two technical results concerning the geometry of high-dimensional objects. One is a result of Klee regarding the existence of separating hyperplanes for convex cones \citep{klee1955separation}. It uses the following notation.

\begin{definition}[\hspace{1sp}\cite{klee1955separation}]
\label{def:cones}
  A $0$-cone is a closed convex cone having the origin (denoted $0$) as its vertex. For a $0$-cone $A$, $A'$ denotes the linear subspace $A \cap -A$.
\end{definition}

The technical lemma gives conditions for a strict linear separator between two such convex cones.

\begin{theorem}[\hspace{1sp}\cite{klee1955separation} Theorem 2.7]
\label{thm:separating-hyperplane}
  Suppose $E$ is a separable normed linear space, $A$ and $B$ are $0$-cones in $E$, $A$ is locally compact, and $A \cap B = \{0\}$. Then $E$ admits a continuous linear functional $\hyperplane$ such that $\hyperplane < 0$ on $A \setminus A'$, $\hyperplane = 0$ on $A' \cup B'$, and $\hyperplane > 0$ on $B \setminus B'$.
\end{theorem}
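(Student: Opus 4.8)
The plan is to build the functional $\hyperplane$ essentially as a weighted sum of two pieces, one handling the $A$-side and one the $B$-side, after first collapsing the two cones into the single closed convex cone $C := A - B = \{a-b : a\in A,\ b\in B\}$. The first step is to check that $C$ really is \emph{closed} (that it is a convex cone is clear, being a sum of the convex cones $A$ and $-B$): this is the one place where local compactness of $A$ enters. Since $A$ is a cone that is locally compact at $0$, every bounded subset of $A$ is relatively compact. So if $a_n - b_n \to x$ with $a_n\in A,\ b_n\in B$, then either $(a_n)$ has a bounded subsequence --- extract a convergent sub-subsequence $a_{n_k}\to a\in A$, whence $b_{n_k} = a_{n_k} - (a_{n_k}-b_{n_k}) \to a-x\in B$ and $x = a-(a-x)\in C$ --- or $\|a_n\|\to\infty$, in which case $a_n/\|a_n\|$ lies in the compact set $A\cap\{\|\cdot\|=1\}$ with a subsequential limit $u$, and $b_n/\|a_n\| = a_n/\|a_n\| - (a_n-b_n)/\|a_n\| \to u$ as well, forcing $u\in A\cap B=\{0\}$, contradicting $\|u\|=1$.

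Next I would describe the lineality space $\ell := C\cap(-C)$ well enough for our purposes. Using only $A\cap B=\{0\}$ and that $A,B$ are convex cones, one checks that $A'\cup B'\subseteq \ell$ (e.g. $a' = a'-0 \in A-B$ and $a' = 0-(-a')\in B-A$ for $a'\in A'$), and conversely that no point of $A\setminus A'$ lies in $\ell$: if $a\in A$ and $-a\in C$, write $-a = a''-b''$ with $a''\in A,\ b''\in B$; then $a+a'' = b''\in A\cap B = \{0\}$, so $-a=a''\in A$ and hence $a\in A'$. Symmetrically, $-b\notin\ell$ for every $b\in B\setminus B'$. Then I invoke the standard Hahn--Banach/bipolar fact that for a closed convex cone $C$ one has $\ell = \bigcap\{\ker f : f\in E^*,\ f\le 0 \text{ on } C\}$. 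Combined with the previous observations, this yields: for each $a\in A\setminus A'$ there is $f\in E^*$ with $\|f\|\le 1$, $f\le 0$ on $C$, and $f(a)<0$; and for each $b\in B\setminus B'$ there is $f\in E^*$ with $\|f\|\le 1$, $f\le 0$ on $C$, and $f(b)>0$.

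The separability hypothesis is used in the final patching step. The sets $A\setminus A'$ and $B\setminus B'$ are subsets of the separable metric space $E$, hence Lindel\"of. On the $A$-side, the relatively open sets $\{x\in A\setminus A' : f(x)<0\}$ --- ranging over the admissible functionals $f$ above, and covering $A\setminus A'$ by the previous paragraph --- admit a countable subcover; let $\{f_n\}_{n\ge 1}$ be a corresponding sequence of functionals (each with $\|f_n\|\le 1$ and $f_n\le 0$ on $C$), and set $f_A := \sum_n 2^{-n} f_n$. This series converges in the Banach space $E^*$, and $f_A\le 0$ on $C$ because $\{f : f\le 0 \text{ on } C\}$ is a (weak-$*$ closed, hence norm-closed) convex cone; moreover, for any $a\in A\setminus A'$ some term $f_n(a)$ is $<0$ while every $f_m(a)\le 0$, so $f_A(a)<0$. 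The symmetric construction on the $B$-side gives $f_B\in E^*$ with $f_B\le 0$ on $C$ and $f_B>0$ on $B\setminus B'$. Finally set $\hyperplane := f_A+f_B$. Since $A\subseteq C$ and $-B\subseteq C$, we get $\hyperplane\le 0$ on $A$ and $\hyperplane\le 0$ on $-B$, i.e. $\hyperplane\ge 0$ on $B$; on $A\setminus A'$ we have $\hyperplane<0$ (as $f_A<0$, $f_B\le 0$ there) and on $B\setminus B'$ we have $\hyperplane>0$ (as $f_B>0$, $f_A\ge 0$ there); and $\hyperplane=0$ on $A'\cup B'\subseteq\ell\subseteq\ker\hyperplane$. (If $A=A'$ or $B=B'$ the relevant cover is empty and $f_A$ or $f_B$ is taken to be $0$; if both, $\hyperplane=0$ is the --- legitimate --- conclusion.)

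The main obstacle is exactly this last patching step, and it is why one needs $E$ separable. The naive route --- take the weak-$*$ compact set of $f$ with $\|f\|\le 1$, $f\le 0$ on $C$, and intersect over $a,b$ the conditions $f(a)<0,\ f(b)>0$ --- fails because those are weak-$*$ \emph{open} constraints: finite subfamilies are simultaneously satisfiable (convex combinations work, since the constraints are convex and each candidate $f$ is already $\le 0$ on $C$), but a decreasing net of nonempty open subsets of a compact space can have empty intersection. Replacing compactness by Lindel\"of-ness and gluing the countably many partial separators via the absolutely convergent series $\sum 2^{-n}f_n$, which preserves both "$\le 0$ on $C$" and, pointwise, the strict inequalities, is what makes strict separation go through. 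Local compactness of $A$, by contrast, is needed only to ensure $C=A-B$ is closed, so that the bipolar description of $\ell$ applies to $C$ itself rather than to $\overline{C}$.
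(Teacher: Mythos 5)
Your proposal is correct, but there is nothing in the paper to compare it against line-by-line: Theorem~\ref{thm:separating-hyperplane} is imported verbatim from Klee's 1955 paper and is used as a black box in Appendix~\ref{apx:general-regions}, so the paper contains no proof of it. Taken on its own terms, your argument is a valid, self-contained derivation, and it correctly locates where each hypothesis enters. The reduction to the single cone $C=A-B$ is sound; the dichotomy argument (bounded subsequence versus $\|a_n\|\to\infty$, using that $A\cap\{\|x\|=1\}$ is compact because local compactness at $0$ plus cone-scaling makes every closed bounded subset of $A$ compact, and using $A\cap B=\{0\}$ to kill the unbounded case) is the standard and correct way to see that $C$ is closed, and this is indeed the only place local compactness is needed. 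The identification of the lineality space, $A'\cup B'\subseteq C\cap(-C)$ while $(A\setminus A')\cap\ell=\emptyset$ and $-(B\setminus B')\cap\ell=\emptyset$, uses only $A\cap B=\{0\}$ and closure of the cones under addition, and is verified correctly; the bipolar characterization $\ell=\bigcap\{\ker f: f\in E^*,\ f\le 0 \text{ on } C\}$ does require $C$ (not merely $\overline{C}$) to be closed, exactly as you say. Finally, the Lindel\"of extraction plus the absolutely convergent gluing $\sum_n 2^{-n}f_n$ is the right way to convert the pointwise strict separators into a single functional, and the sign bookkeeping at the end (each admissible $f$ is $\le 0$ on $A$ and $\ge 0$ on $B$, and vanishes on $\ell\supseteq A'\cup B'$) checks out, including the degenerate cases $A=A'$ or $B=B'$. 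The only ingredients you cite without proof -- the bipolar theorem for closed convex cones, hereditary Lindel\"ofness of separable metric spaces, completeness of $E^*$, and the scaling argument for locally compact cones -- are standard and correctly invoked, so I see no gap; whether this coincides with Klee's original route is immaterial for the paper, which only needs the statement.
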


The next technical lemma upper bounds the surface area of a hypersphere cap relative to the entire hypersphere, implying that many such caps are needed to cover the hypersphere.

\begin{definition}
  We denote the $d$-dimensional unit hypersphere as $\hypersphere{d} \triangleq \{x \in \mathbb{R}^d \mid \lVert x \rVert_2 = 1\}$ (note that this is the surface of the unit ball). We denote its (hyper-)surface area as $\hyperspherearea{d}$.
  
  Next, we denote the cap centered at $v \in \hypersphere{d}$ of angle $\phi \in [0, \pi/2]$ as $\hypercap{d}{v}{\phi} \triangleq \hypersphere{d} \cap \{x \in \mathbb{R}^d \mid \inner{x}{v} \le \cos \phi\}$. We denote its (hyper-)surface area as $\hypercaparea{d}{\phi}$.
\end{definition}

\begin{lemma}
\label{lem:hypercap}
  For all integer $d \ge 1$ and $\phi \in [0, \pi / 2]$,
  \begin{align*}
    \frac{\hypercaparea{d}{\phi}}{\hyperspherearea{d}} \le \phi^{d-1}
  \end{align*}
\end{lemma}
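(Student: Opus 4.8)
The plan is to reduce the statement to a one-dimensional integral inequality by slicing $\hypersphere{d}$ along the axis $v$. Writing $\theta \in [0,\pi]$ for the angle a point of $\hypersphere{d}$ makes with $v$ (so that $\inner{x}{v} = \cos\theta$), the ``latitude'' at angle $\theta$ is a scaled copy of $\hypersphere{d-1}$ of radius $\sin\theta$, and hence it contributes $\hyperspherearea{d-1}\sin^{d-2}\theta\,d\theta$ to the surface measure of $\hypersphere{d}$. Since $\hypercap{d}{v}{\phi}$ is the spherical cap of angular radius $\phi$ about $v$ (the points of $\hypersphere{d}$ making angle at most $\phi$ with $v$), integrating this slicing formula over $\theta \in [0,\phi]$ and over $\theta \in [0,\pi]$ gives the exact identity
\[
  \frac{\hypercaparea{d}{\phi}}{\hyperspherearea{d}} \;=\; \frac{\int_0^{\phi}\sin^{d-2}\theta\,d\theta}{\int_0^{\pi}\sin^{d-2}\theta\,d\theta}.
\]
I would dispose of $d=1$ separately (there the ratio equals $\tfrac12 \le 1 = \phi^{d-1}$), so assume $d\ge 2$ from here on.

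Next I would bound the numerator and denominator of this ratio separately. For the numerator, the elementary inequality $\sin\theta \le \theta$ on $[0,\phi] \subseteq [0,\pi/2]$ yields $\int_0^{\phi}\sin^{d-2}\theta\,d\theta \le \int_0^{\phi}\theta^{d-2}\,d\theta = \phi^{d-1}/(d-1)$. For the denominator I would exploit the symmetry of $\sin$ about $\pi/2$ together with Jordan's inequality $\sin\theta \ge 2\theta/\pi$ on $[0,\pi/2]$:
\[
  \int_0^{\pi}\sin^{d-2}\theta\,d\theta \;=\; 2\int_0^{\pi/2}\sin^{d-2}\theta\,d\theta \;\ge\; 2\int_0^{\pi/2}\Bigl(\frac{2\theta}{\pi}\Bigr)^{d-2}d\theta \;=\; \frac{\pi}{d-1} \;\ge\; \frac{1}{d-1}.
\]
Dividing the first estimate by the second, the factors $1/(d-1)$ cancel and we obtain $\hypercaparea{d}{\phi}/\hyperspherearea{d} \le \phi^{d-1}$, which is the claim.

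The one place that needs care is the uniform lower bound $\int_0^{\pi}\sin^{d-2}\theta\,d\theta \ge 1/(d-1)$: a naive bound like restricting the integral to an interval bounded away from $0$ and $\pi$ decays geometrically in $d$ and is too weak, whereas the Jordan-inequality estimate above captures the correct $\Theta(1/d)$ scaling and is entirely elementary (alternatively one can invoke the Wallis identity $\int_0^\pi\sin^{m}\theta\,d\theta\cdot\int_0^\pi\sin^{m+1}\theta\,d\theta = 2\pi/(m+1)$ together with the monotonicity $\int_0^\pi\sin^{m+1} \le \int_0^\pi\sin^{m}$). Beyond that, I would just sanity-check the degenerate endpoints $\phi\in\{0,\pi/2\}$ and the base case $d=2$ (where $\sin^{d-2}\equiv 1$ and the ratio is literally $\phi/\pi \le \phi$), none of which cause any trouble.
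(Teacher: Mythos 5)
Your proof is correct, and it departs from the paper's in a small but genuine way. Both arguments boil down to the exact identity
\[
  \frac{\hypercaparea{d}{\phi}}{\hyperspherearea{d}} = \frac{\int_0^{\phi}\sin^{d-2}\theta\,d\theta}{\int_0^{\pi}\sin^{d-2}\theta\,d\theta},
\]
and both apply $\sin\theta\le\theta$ to the numerator. The difference is in what is done with the denominator. The paper substitutes the closed-form values $\hypercaparea{d}{\phi}=\frac{2\pi^{(d-1)/2}}{\Gamma((d-1)/2)}\int_0^\phi\sin^{d-2}$ (from Li) and $\hyperspherearea{d}=\frac{2\pi^{d/2}}{\Gamma(d/2)}$, reducing the job to showing $\frac{\Gamma(d/2)}{2\sqrt{\pi}\,\Gamma((d+1)/2)}\le 1$, which it handles by manual checks at $d=1,2$ and monotonicity of $\Gamma$ on half-integers for $d\ge 3$. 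You instead lower-bound $\int_0^{\pi}\sin^{d-2}\theta\,d\theta$ directly by Jordan's inequality $\sin\theta\ge 2\theta/\pi$, getting $\ge\pi/(d-1)\ge 1/(d-1)$, which cancels against the $1/(d-1)$ in the numerator estimate. Your route is slightly more elementary and self-contained (no $\Gamma$-function facts needed), and it even yields the marginally sharper constant $\phi^{d-1}/\pi$; the paper's route is a bit shorter if one is comfortable quoting the standard area formulas. One cosmetic nit: for $d=1$ the ratio is not literally $1/2$ in all cases (it can be $0$ or $1$ at the endpoints $\phi=\pi/2$ or $\phi=0$ under the paper's convention), but since the cap is a subset of the sphere the ratio is always $\le 1=\phi^0$, so the $d=1$ case is trivially fine either way.
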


\begin{proof}
  We will prove this bound by building on the analysis of \cite{li2011concise}. The surface area of a $d$-dimensional hypersphere is well known to be
  \begin{align*}
    \hyperspherearea{d} &= \frac{2 \pi^{d/2}}{\Gamma (d/2)}\text{,}
  \end{align*}
  where the gamma function $\Gamma$ represents the standard extension of the factorial function.
  
  We begin with the following observation of Li:
  \begin{align*}
    \hypercaparea{d}{\phi} &= \frac{2 \pi^{(d-1)/2}}{\Gamma((d-1)/2)}
      \int_0^\phi \sin^{d-2} x \,dx\text{.}
  \end{align*}
  
  Next, we apply the fact that $\sin x \le x$ for $x \ge 0$.
  \begin{align*}
    \hypercaparea{d}{\phi}
      &\le \frac{2 \pi^{(d-1)/2}}{\Gamma((d-1)/2)}
      \int_0^\phi x^{d-2} \,dx \\
      &= \frac{2 \pi^{(d-1)/2}}{\Gamma((d-1)/2)} 
      \left[ \frac{1}{d-1} \phi^{d-1} \right] \\
      &= \frac{\pi^{(d-1)/2}}{\Gamma((d+1)/2)} \phi^{d-1}
    \frac{\hypercaparea{d}{\phi}}{\hyperspherearea{d}} \\
      &\le \frac{\Gamma(d/2)}{2 \sqrt{\pi} \Gamma((d+1)/2)} \phi^{d-1}
  \end{align*}
  
  We are almost done; we just need to show that $\frac{\Gamma(d/2)}{2 \sqrt{\pi} \Gamma((d+1)/2)} \le 1$. After $\Gamma(3/2)$, subsequent half-values of $\Gamma$ are increasing and so the desired claim is trivially true for $d \ge 3$. We manually check $d = 1$ and $d = 2$, noting that $\Gamma(1/2) = \sqrt{\pi}$, $\Gamma(1) = 1$, and $\Gamma(3/2) = \sqrt{\pi}/2$.
  \begin{align*}
    \frac{\Gamma(1/2)}{2 \sqrt{\pi} \Gamma(2/2)}
      &= \frac12 \\
    \frac{\Gamma(2/2)}{2 \sqrt{\pi} \Gamma(3/2)}
      &= \frac1\pi
  \end{align*}
  This completes the proof.
\end{proof}

We are now ready to give the full proof for Theorem~\ref{thm:general-regions}, which is restated below for convenience.

\generalregionstheorem*

\begin{proof}
  Our counterexample depends on two parameters: the dimension $d$ and an error parameter $\epsilon > 0$. We will choose $\epsilon = 1 / T^{1/(d-2)}$, where $T$ is the total number of time steps.
  
  Our construction restricts itself to choosing points on $\hypersphere{d}$. The construction begins with the opposite points $x_1 = e_1 \triangleq (1, 0, 0, \ldots, 0)$ and $x_2 = -e_1$. $x_1$ has true label $1$ and $x_2$ has true label $2$. We will use $A$ to denote the conic hull of the points with true label $1$ and $B$ to denote the conic hull of the points with true label $2$. By construction, $A$ and $B$ are $0$-cones. We will maintain the two invariants that (i) $A \cap B = \{0\}$ and (ii) $A' = B' = \{0\}$, which we will use when invoking Theorem~\ref{thm:separating-hyperplane}.
  
  We now explain how to generate subsequent points $x_t$ for $t \ge 3$. By Theorem~\ref{thm:separating-hyperplane}, we know there is a separating hyperplane $\hyperplane$ that passes through the origin and strictly separates $A \setminus \{0\}$ from $B \setminus \{0\}$. We pick an arbitrary hyperplane $\hyperplane$ that satisfies the previous statement and examine its intersection with the hypersphere, $X \triangleq \{x \mid \hyperplane(x) = 0\} \cap \hypersphere{d}$.
  
  Since we $\hyperplane$ passes through the origin, $X$ is a $(d-1)$-dimensional unit hypersphere. We want to choose our next point $x_t$ to be a point in $X$ that is at least $\epsilon$-far from all previously chosen points $x_1, x_2, ..., x_{t-1}$. This condition rules out $(t-1)$ hypersphere caps of angles at most $2 \arcsin \frac{\epsilon}{2}$. Observe that as $T$ increases and $\epsilon$ decreases, this angle bound scales as $O(\epsilon)$. By Lemma~\ref{lem:hypercap}, we have (hyper-)surface area remaining as long as $(t-1) \cdot O(\epsilon^{d-2}) \le 1$, i.e. we can pick up to $(1 / \epsilon)^{d-2} = T$ total points safely. We will pick an arbitrary point $x_t$ satisfying our $\epsilon$-far rule.
  
  We assign this point $x_t$ a uniform random true label between $1$ and $2$. It remains to establish that our two invariants are still satisfied. For the sake of contradiction, assume that (i) is no longer true and that there is a non-origin point $z$ in the intersection $A \cap B$. Without loss of generality, $x_t$ was assigned true label 1 and $f$ used to be strictly positive (negative respectively) on $A \setminus \{0\}$ ($B \setminus \{0\}$ respectively). We can deduce the following.
  \begin{align*}
    \hyperplane(z) &< 0 \\
    \hyperplane(z) &= \hyperplane\left(\sum_{x \text{ has true label 1}} \alpha_x x\right) \\
         &= \sum_{x \text{ has true label 1}} \alpha_x \hyperplane(x) \\
         &\ge 0
  \end{align*}
  for some values $\alpha_x \ge 0$. The first inequality follows from the fact that $z$ is in $B \setminus \{0\}$ and the second inequality follows from the fact that all points with true label 1 either are in $A \setminus \{0\}$ before this round or are $x_t$. We've reached a contradiction and conclude that invariant (i) remains true.
  
  We establish invariant (ii) much in the same way. Again, for the sake of contradiction we assume that (ii) is false. Without loss of generality, assume that $x_t$ was assigned true label 1 and that now there exists a non-origin point $z$ in $A' = A \cap -A$. We follow a similar line of deduction.
  \begin{align*}
    \hyperplane(z) &= \hyperplane\left(\sum_{x \text{ has true label 1}} \alpha_x x\right) \\
    \hyperplane(z) &\ge 0 \\
    \hyperplane(z) &= \hyperplane\left(\sum_{x \text{ has true label 1}} \beta_x x\right) \\
    \hyperplane(z) &\le 0 \\
    \hyperplane(z) &= 0
  \end{align*}
  for some values $\alpha_x \ge 0$ and $\beta_x \le 0$. But this implies that $\alpha_x = \beta_x = 0$ forall $x$ that have true label $1$ and are not the most recent point $x_t$. We are left with $z = \alpha_{x_t} x_t = \beta_{x_t} x_t$ for some $\alpha_{x_t} \ge 0$ and $\beta_{x_t} \le 0$, but this can only hold if $\alpha_{x_t} = \beta{x_t} = 0$ (recall that $x_t$ is not the origin by construction). But then $z$ is the origin, which contradicts our assumption. Hence our assumption is false, i.e. invariant (ii) is indeed maintained.
  
  All that remains is the add up error of our algorithm. Since we chose the true label for the final $T-2$ points uniformly at random, any online algorithm will make $\frac{T-2}{2}$ mistakes in expectation.
  
  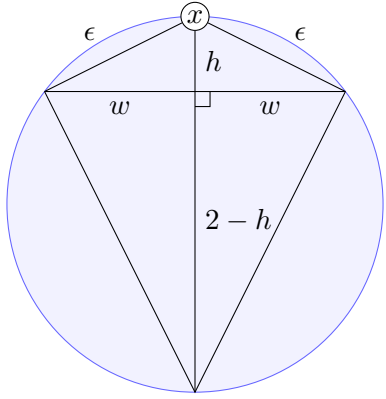
\begin{figure}
\centering
\begin{tikzpicture}[%
  auto,
  scale=1.0,
  hollownode/.style={
    circle,
    draw=black,
    fill=white,
    inner sep=1pt,
  },
  ]
  \filldraw[color=blue!60, fill=blue!5] (0, 0) circle (2.5);
  \node[hollownode] (p1) at (0, 2.5) {$x$};
  \coordinate (p2) at (2, 1.5) {};
  \coordinate (p3) at (0, -2.5) {};
  \coordinate (p4) at (-2, 1.5) {};
  \coordinate (intersection) at (0, 1.5) {};
  
  \draw (p1) -- node[above right=1mm] {$\epsilon$} (p2);
  \draw (p2) -- (p3) -- (p4);
  \draw (p4) -- node [above left=1mm] {$\epsilon$} (p1);
  
  \draw (p1) -- node[right] {$h$} (intersection);
  \draw (intersection) -- node[above right] {$2-h$} (p3);
  \draw (p2) -- node[below] {$w$} (intersection);
  \draw (intersection) -- node[below] {$w$} (p4);
  
  \draw (0, 1.3) -- (0.2, 1.3) -- (0.2, 1.5);
\end{tikzpicture}
\caption{Circular cross-section of the $d$-dimensional unit hypersphere, $\hypersphere{d}$. The two points $\epsilon$ away from $x$ (in Euclidean distance) and the diametrically opposite point are marked, forming a kite. The diagonals of this kite result in a right triangle of interest to our analysis, with height $h$, width $w$, and hypotenuse $\epsilon$.}
\label{fig:circle}
\end{figure}
  
  How much error does the algorithm accrue for every mistake? Recall that our construction kept all points at least $\epsilon$ away from each other and all points were on the hypersphere (the surface of the unit ball). Figure~\ref{fig:circle} illustrates the situation with respect to one point. We can solve for the distance that our point of interest $x$ is away from the unit ball minus its cap, which is an upper bound for the convex hull of all other points.
  \begin{align*}
    h(2-h) &= w^2 \\
    2h &= w^2 + h^2 \\
    2h &= \epsilon^2 \\
    h &= \epsilon^2 / 2
  \end{align*}
  
  Hence the algorithm accrues $\Omega(\epsilon^2)$ expected error in each timestep, and so overall any online algorithm is expected to accrue $\Omega\left(T\epsilon^2\right)$ total error. By our choice of $\epsilon$, this equals the desired bound.
\end{proof}
\section{Distance-based losses for nearest neighbor partitions}
\label{app:regret-vs-distance}

Semantically, there is a difference between our margin-independent loss for convex sets and our margin-independent loss for nearest neighbors: in the first setting, the loss of labelling a query $q$ with a label $i$ is the \textit{distance} we must move $q$ so that it would have label $i$, whereas in the second setting our loss is our \textit{regret} under the assumption that we receive utility $-\delta(q, x_i)$ from labelling query $q$ with label $i$ (``regret'' meaning the difference between our utility and the optimal utility for this query). 

The goal of this appendix is to provide some additional justification for why we study the second, regret-based notion of loss for nearest neighbors instead of the distance-based notion of loss (which has the benefit that it extends to arbitrary partitions, whereas the regret-based notion of loss requires some notion of similarity metric). Roughly, the main reason for doing this is that unlike the regret-based definition, the distance-based definition of loss is not continuous in the centers $x_i$: 

\begin{lemma}
Given $k$ points $x_1, x_2, \dots, x_k \in \R^d$, let $f_i(q; x_1, x_2, \dots, x_k) = \min_{q' \in R_i} ||q - q'||_2$, where $R_i = \{q \in \R^d \mid ||q - x_i|| \leq ||q - x_j|| \;\forall j \in [k]\}$ (i.e., $f_i(q; x)$ is the minimum distance needed to move $q$ so that it is closer to $x_i$ than to any of the other points $x_j$).  Then $f_{i}$ is \textit{not} continuous in the variables $x_1, x_2, \dots, x_n$.
\end{lemma}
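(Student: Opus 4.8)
The plan is to exhibit an explicit configuration of centers at which $f_i$ is discontinuous, exploiting the fact that the region $R_i$ itself depends discontinuously on the centers. Since $q \mapsto \min_{q' \in R}\norm{q - q'}$ is $1$-Lipschitz for any fixed closed set $R$, every discontinuity of $f_i$ must come from the way the set $R_i$ moves as the $x_j$'s move; the key observation is that when two previously distinct centers collide, $R_i$ can jump — in Hausdorff distance — from a half-space to all of $\R^d$.

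The construction I would use is as follows. Fix any dimension $d$, any $k \ge 2$, and take $i = 1$. For $\epsilon \ge 0$ let $c_\epsilon$ denote the configuration of centers with $x_1 = 0$, $x_2 = \epsilon e_1$, and $x_3 = \dots = x_k = 0$, so that only the second center moves (along the first coordinate axis). For $\epsilon > 0$ every defining inequality of $R_1$ other than the one coming from $x_2$ reads $\norm{z} \le \norm{z}$ and is vacuous, so $R_1(c_\epsilon) = \{z : \norm{z} \le \norm{z - \epsilon e_1}\} = \{z : \langle z, e_1 \rangle \le \epsilon/2\}$, a closed half-space; whereas for $\epsilon = 0$ all the inequalities are vacuous and $R_1(c_0) = \R^d$. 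Now fix the query point $q = \tfrac12 e_1$. For $0 < \epsilon < 1$ we have $q \notin R_1(c_\epsilon)$ and the point of $R_1(c_\epsilon)$ nearest to $q$ lies at distance $\tfrac12 - \tfrac\epsilon2$, so $f_1(q; c_\epsilon) = \tfrac12 - \tfrac\epsilon2$; on the other hand $f_1(q; c_0) = 0$ because $q \in \R^d = R_1(c_0)$. Hence $\lim_{\epsilon \to 0^+} f_1(q; c_\epsilon) = \tfrac12 \ne 0 = f_1(q; c_0)$, so the map $(x_1, \dots, x_k) \mapsto f_1(q; x_1, \dots, x_k)$ is not continuous at $c_0$.

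There is essentially no obstacle here beyond choosing the right example; the only point requiring a line of care is the degenerate configuration $\epsilon = 0$, where one must check directly from the non-strict definition $R_i = \{z : \norm{z - x_i} \le \norm{z - x_j}\ \forall j\}$ that $R_1(c_0) = \R^d$, and observe that this is precisely the source of the jump (as $\epsilon \to 0^+$ the half-spaces $\{z : \langle z, e_1\rangle \le \epsilon/2\}$ do not approach $\R^d$). This also makes transparent the contrast with the regret-based loss $\delta(q, x_i) - \min_j \delta(q, x_j)$: for each similarity metric we consider, that expression is a difference/minimum of functions each jointly continuous in $q$ and all the $x_j$, hence continuous, whereas the distance-to-region loss inherits the discontinuity of $R_i$.
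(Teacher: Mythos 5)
Your proof is correct, and it uses a genuinely different construction from the paper's. The paper works in dimension $d=1$ with $k=3$ centers $x_1=0$, $x_2=1$, $x_3=\pm\epsilon$, and tracks $f_3$ --- the region of the \emph{moving} center: as $x_3$ passes through $x_1$, the interval $R_3$ flips from lying to the right of $x_1$ to lying to the left, so the one-sided limits of $f_3(1/2;\cdot)$ as $x_3\to 0^{\pm}$ are $0$ and $1/2$ respectively. Your construction instead collapses $x_2$ onto $x_1$ and tracks $f_1$, the region of a \emph{fixed} center: as $\epsilon\to 0^+$ the half-space $R_1=\{z:z_1\le\epsilon/2\}$ converges in Hausdorff distance to $\{z:z_1\le 0\}$, yet at $\epsilon=0$ it jumps to all of $\R^d$ because the defining inequality degenerates. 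Both examples locate the discontinuity at a coincident-center configuration, but yours is a bit more general (any $k\ge 2$ and any $d$), and your opening observation --- that $q\mapsto\min_{q'\in R}\norm{q-q'}$ is $1$-Lipschitz for any fixed nonempty closed $R$, so any discontinuity of $f_i$ must already be visible in the discontinuous dependence of the set $R_i$ on the centers --- cleanly explains \emph{why} the counterexample must look like this, a point the paper does not make explicit. The closing contrast with the joint continuity of the regret-based loss is likewise a useful piece of intuition that ties this lemma back to the discussion motivating it.
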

\begin{proof}
We will show this for $k=3$ and $d=1$. Fix $x_1 = 0$ and $x_2 = 1$, and consider the function $f_3$. Note that when $x_3 = \eps > 0$ (for some $\eps < 0.1$), $f_3(1/2; x_1, x_2, x_3) = 0$; in particular, $1/2$ is closer to $x_3$ than to any other center. But when $x_3 = -\eps < 0$, then $f_3(1/2; x_1, x_2, x_3) = (1+\eps)/2 > 1/2$; in other words, we now need to move $q=1/2$ by at least $1/2$ to bring $q$ closer to $x_3$ than any other center. This shows that $f_3$ is discontinuous in $x_3$.
\end{proof}

This fact makes the distance-based notion more technically cumbersome to work with, both from an analysis perspective and a learning perspective; for example, this fact means it is not simply sufficient for a learning algorithm to approximate the values $x_i$ to high precision, since even infinitesimal changes in the values of $x_i$ can drastically change the partition into regions $R_i$. Nonetheless, it is not clearly impossible  to construct a learning algorithm with good distance-based loss guarantees for the nearest neighbor partition setting (in particular, the convex sets constructed in the proof of \Cref{thm:general-regions} cannot be implemented as a nearest-neighbor partition with a small number of cnenters), and this is an interesting open question.

\end{document}